%% file: colt2026-sample.tex
\PassOptionsToPackage{cleveref}{jmlrutils}
\PassOptionsToPackage{table,dvipsnames}{xcolor}
\documentclass[final]{colt2026} 

\title[]{Tight Sample Complexity of Transformers}
\usepackage{times}
\usepackage{cleveref}
\crefname{theorem}{Theorem}{Theorems}
\Crefname{theorem}{Theorem}{Theorems}
\crefname{lemma}{Lemma}{Lemmas}
\Crefname{lemma}{Lemma}{Lemmas}
\crefname{corollary}{Corollary}{Corollaries}
\Crefname{corollary}{Corollary}{Corollaries}
\crefname{definition}{Definition}{Definitions}
\Crefname{definition}{Definition}{Definitions}
\usepackage{xcolor}
\usepackage{tikz}
\usepackage{booktabs}
\usetikzlibrary{arrows.meta,positioning,calc,decorations.pathreplacing,matrix}

\newcommand{\fixed}[1]{#1}

\newcommand{\removed}[1]{}

\newcommand{\sCoT}{\mathrm{CoT}}
\newcommand{\sete}{\mathrm{e2e}}

\newcommand{\TSdim}{\mathrm{TSdim}}
\newcommand{\ASdim}{\mathrm{ASdim}}
\newcommand{\Ffam}{\mathfrak{F}}
\newcommand{\concat}{\mathbin{\Vert}}
\newcommand{\Tin}{T}
\newcommand{\Tout}{T'}
\newcommand{\Ttotal}{T_{\mathrm{ctx}}}

\coltauthor{%
 \Name{Chenxiao Yang} \Email{chenxiao@ttic.edu}\\
 \Name{Nathan Srebro} \Email{nati@ttic.edu}\\
 \Name{Zhiyuan Li} \Email{zhiyuanli@ttic.edu}\\
 \addr Toyota Technological Institute at Chicago (TTIC)%
}

\begin{document}

\maketitle

\begin{abstract}
We tightly characterize the VC dimension of depth-$L$ Transformers with a total of $W$ parameters, mapping an input sequence of length $T$ to a single output, establishing an upper bound of $O(L W \log (T W))$ and a nearly matching lower bound of $\Omega(L W \log (T W / L))$. We further tightly characterize the sample complexity of chain-of-thought learning using such a Transformer, showing teacher forcing (i.e. selecting a predictor consistent with the entire chain-of-thought on training data) learns with sample complexity $O\left(L W \log \left(\left(T+T^{\prime}\right) W\right)\right)$ and that any learning rule that uses chain-of-thought data requires at least $\Omega\left(L W \log \left(\left(T+T^{\prime}\right) W / L\right)\right)$ examples, where $T$ is the input length and $T^{\prime}$ is the number of autoregressive steps.
\end{abstract}

\begin{keywords}%
  Transformers, VC-dimension, Learning Theory%
\end{keywords}

\section{Introduction}

Transformers~\citep{vaswani2017attention} are a class of models that operate on sequences of arbitrary length, where the architecture and the number of parameters are independent of the input sequence length $T$.  A significant advantage one might then expect is that the sample complexity of learning is independent of the sequence length. However, we do not yet have a rigorous and precise understanding of this sample complexity, and its dependence on the input sequence length and other aspects of the architecture.  Tightly characterizing the sample complexity is important both for understanding the length dependence (or lack thereof), as well as for translating results on the representational power of Transformers and their required size \cite[e.g.][]{yun2020are,bhattamishra2020ability, hahn2020theoretical,perez2021attention, wei2021statistically, liu2023transformers,feng2023towards,li2024chain,merrill2023expressive} to actual learning guarantees and understanding the inductive bias of Transformers.  We emphasize that in `learning' here,  we are referring to statistical learning by, e.g.~empirical risk minimization, and are ignoring the important computational issue of finding such a risk minimizer.

Prior work analyzed the scale-sensitive sample complexity of Transformers, i.e.~in terms of norm restrictions on the weights (and under margin assumptions or for scale sensitive loss).  In particular, \citet{trauger2023sequence} obtained sample complexity that depends polynomially on the norms of weight matrices (and thus polynomially on the number of parameters under standard scaling), and exponentially on the depth, but as desired is independent of input sequence length\footnote{In this regard, these improve over a previous bound \citep{edelman2022inductive}, that has better polynomial dependency on the norms, but does depend logarithmically on sequence length}.  Similarly, \cite{zhang2022analysis} also established sequence-length independent sample complexity, again by relying on norm bounds, but with only a logarithmic norm dependence, and a suboptimal quadratic dependence on the number of parameters (times a linear dependence on depth).

In this paper, instead of relying on the magnitudes of the weights and the scale of the outputs, we investigate the parametric capacity of Transformers, i.e.~in terms of the number of weights, allowing the weights to be arbitrary real numbers.  As the VC dimension tightly captures the sample complexity of PAC-learning \cite{vapnik1971uniform,blumer1989learnability}, this corresponds to characterizing the VC dimension (or its multiclass generalization) of the class of functions implementable by Transformers, with any real valued weights. This is a basic question about parametric classes and is fundamental in learning theory, with well-understood answers for many classical classes (e.g., linear predictors, polynomials). Even for understanding scale-sensitive or regularization-driven generalization, this serves as an important baseline.

The VC dimension of feed-forward networks has been extensively studied over a span of fifty years \citep[e.g.][]{cover1968capacity,baum1989size,maass1994neural,koiran1994vc,karpinski1997pfaffianVC,bartlett2019nearly}, and is now well understood.  In particular, we understand that for deep feed-forward networks, the VC dimension can be much larger than the number of parameters and that this greatly depends on the activation function:  with hard-threshold activations there is no depth dependence, and the VC dimension is $\Theta(W \log W)$, where $W$ is the total number of weights. With ReLU (or other piecewise linear) activations, the depth comes in linearly and the VC dimension is $\Omega(W L \log(W/L)) \leq \mathrm{VCdim} \leq \mathcal{O}(WL \log W)$. But with sigmoidal activations it significantly worsens~\citep{koiran1994vc,karpinski1997pfaffianVC}. \textbf{How about the VC dimension of Transformers?  How does it depend on the total number of parameters $W$, the depth $L$, the input sequence length $T$, and perhaps other parameters such as the number of attention heads per layer.}

Based on our experience with feed-forward networks, we would expect Transformer's VC dimension to also depend on the activation functions used. ReLU activations are common for feed-forward layers in practice, and we consider such units here. Attention layer is usually based on a softmax operation. But, a softmax operation hides a logistic activation and can be easily used to simulate it, and so if we allow softmax operations, we encounter the same difficulties and bad scaling as for sigmoidal feed-forward networks discussed above.  Instead, we consider hard-attention, which is commonly studied in the theoretical literature on Transformers \cite[e.g.][]{hahn2020theoretical,merrill2022saturated,merrill2023expressive,yang2025pencil}, and avoids sigmoidal, exponential and logarithmic functions.

\paragraph{Transformer VC-Dimension.}  Accordingly, what we study are Transformers that map $T$ input tokens from some finite-size alphabet $\Sigma$ to a single output token in $\Sigma$ (or an embedding sequence to $\{0,1\}$ as classifiers), and consist of token embedding layer, ReLU feed-forward layers, hard-attention layers (average, leftmost or rightmost hard attention, and any number of attention heads), and greedy decoding. We allow arbitrary, but fixed (i.e.~not learned) additive positional encoding, including no positional encoding (NOPE,~\citet{kazemnejad2023impact}).  We allow skip connections, but no layerwise-normalization.  See precise description in Section \ref{sec:background}.  We show (in \Cref{sec:vc-results}) that for any such Transformers as classifiers, the VC dimension and thus the sample complexity of learning, is $\mathcal O(L W \log(W T))$, where $W$ is the total number of weights, and $L$ is the depth (number of feed-forward and attention layers).  The dependence on $\mathcal O(L W \log W)$ is tight, even just due to the feed-forward sublayers.  We further show that the multiplicative dependence on $\log T$ is also tight by providing a nearly matching lower bound of $\Omega(L W \log(W T/L))$.  \cite{edelman2022inductive} previously establishes a lower bound of $\Omega(\log T$) even when the number of parameters is fixed, but this left open the possibility of an additive $\log T$ term---we improve this to show the necessity of the interaction.

\paragraph{Autoregressive CoT with Transformers.} In \Cref{sec:cot}, we turn to using a Transformer auto-regressively to generate a final answer after a chain-of-thought of length $\Tout$.  Following \citet{joshi2025cot}, we study the sample complexity of learning to map the input of length $\Tin$ to a single-token output obtained after $\Tout$ steps, given training data consisting of $m$ i.i.d.~inputs and their entire $\Tout$-step chain-of-thought. Unlike standard supervised learning where a single measure of VC-dimension governs learnability, we show that two complexity measures govern sample complexity in CoT learning: $\TSdim$ ({trace shattering dimension}), measuring the capacity to shatter entire reasoning traces, governs the upper bound; $\ASdim$ ({answer shattering dimension}), measuring the capacity to shatter only the final answer, governs the CoT lower bound. While combining our single-step VC bound above with the generic result of \citet{joshi2025cot} yields a sample complexity upper bound of $O(W L \log (W (\Tin + \Tout)) \log \Tout/\epsilon)$, directly bounding $\TSdim$ improves it to $\mathcal O(W L \log (W(\Tin + \Tout))/\epsilon)$, avoiding the extra $\log \Tout$ factor. \fixed{For lower bounds, answer shattering gives the CoT term $\Omega(WL\log(\Tin+\Tout)/\epsilon)$; combined with the ordinary supervised ReLU-network lower bound, this implies the worst-case lower bound $\Omega(W L \log((\Tin+\Tout)W/L)/\epsilon)$.} Importantly, the upper bound is obtained by the obvious ``teacher forcing'' \cite{williams1989learning} learning rule, i.e.~selecting any Transformer in the class that is consistent with all the steps in CoT \citep[as in also][]{joshi2025cot}, and the lower bound holds for any learning rule, thus establishing the \fixed{optimality} of teacher forcing for training Transformers with CoT supervision.

Our upper bounds are based on combining proofs about general computation circuits~\citep{bartlett1998almost}, with more recent refinements for piecewise linear neural nets~\citep{bartlett2019nearly}, and then carefully expressing the attention operation using a poly-sized {\em constant-depth} computation.  The lower bounds use novel explicit constructions.

\section{Preliminaries: Transformers and Autoregressive CoT} \label{sec:background}

Throughout this paper, let $\Sigma$ denote a vocabulary with $|\Sigma| = K$. We write token sequences as $X = (x_0, \ldots, x_{T-1}) \in \Sigma^T$ and embedding sequences as $H \in \mathbb{R}^{T \times d}$, where $T$ is the sequence length and $d$ is the embedding dimension. We adopt Python-style indexing: $X[{:}t]$ denotes the first $t$ elements, and $X[{-}t{:}]$ the last $t$ elements; $X\concat Z$ denotes sequence concatenation.

\fixed{We use Transformers in two roles. The fixed-length VC-dimension results use binary classifiers $f^{01}: \mathbb{R}^{T \times d} \to \{0,1\}$. The autoregressive CoT results use next-token generators with input length $\Tin$, generation length $\Tout$, and maximum context length $\Ttotal=\Tin+\Tout$.}

\subsection{Transformers} \label{subsec:Transformers}

We first define the Transformer architecture. Our formulation follows the standard decoder-only Transformer~\citep{vaswani2017attention}, but replaces softmax attention with \emph{hard attention}, a discrete variant that has been central to recent theoretical studies of Transformers~\citep{hahn2020theoretical,merrill2022saturated}. As is standard in VC-dimension analysis, we assume exact arithmetic over the reals. 

\paragraph{Transformer Layers.}
A Transformer of depth $L$ has $L$ Transformer blocks. It transforms an input embedding sequence $H^{(0)} = H \in \mathbb{R}^{T \times d}$ through these blocks. We allow layer-dependent hidden dimensions $d^{(0)},d^{(1)},\dots,d^{(L)}$ with $d^{(0)} = d$, so $H^{(\ell)} \in \mathbb{R}^{T \times d^{(\ell)}}$. Each row $H^{(\ell)}_i \in \mathbb{R}^{d^{(\ell)}}$ represents the embedding of position $i$. Each block applies two sublayers: an attention mechanism followed by a feed-forward network.
\begin{equation}
    \tilde{H}^{(\ell)} = \mathsf{Attn}^{(\ell)}(H^{(\ell-1)}), \quad H^{(\ell)} = \mathsf{FFN}^{(\ell)}(\tilde{H}^{(\ell)}).
\end{equation}
We state attention-before-FFN blocks; residuals and other fixed orderings only change depth by constants.

\paragraph{Hard Attention.}
The key component distinguishing Transformers from feed-forward networks is the attention mechanism, which enables data-dependent routing of information across positions. For each attention head, queries, keys, and values are computed via affine transformations:
\begin{equation}
Q_i = W_Q H_i + b_Q, \quad K_i = W_K H_i + b_K, \quad V_i = W_V H_i + b_V,
\end{equation}
where $W_Q, W_K, W_V$ are projection matrices and $b_Q, b_K, b_V$ are bias vectors (suppressing the layer index $\ell$). We use \emph{causal masking}: position $i$ can only attend to positions $j \leq i$. We analyze \emph{Hard Attention}~\citep{hahn2020theoretical,merrill2022saturated,merrill2023expressive}, the low-temperature limit of softmax attention where weights concentrate on score-maximizing positions. Let $M_i = \{j \leq i : \langle Q_i, K_j \rangle = \max_{k \leq i} \langle Q_i, K_k \rangle\}$ denote the set of causally-visible positions achieving the maximum score, and define attention weights $\alpha_{i,j} = \mathbf{1}[j \in M_i] / |M_i|$. The attention output is $\mathsf{Attn}(H)_i = \sum_{j} \alpha_{i,j} V_j$. Our analysis does not rely on the specific tie-breaking rule and generalizes to variants such as \emph{leftmost-hard} or \emph{rightmost-hard} attention~\citep{hahn2020theoretical}.

\paragraph{Multi-Head Attention.}
A \emph{multi-head attention} layer at layer $\ell$ with $h^{(\ell)}$ heads computes $h^{(\ell)}$ attention outputs in parallel. Each head $i \in \{1, \ldots, h^{(\ell)}\}$ computes queries, keys, and values via its own projection matrices $W_Q^{(\ell,i)}, W_K^{(\ell,i)}, W_V^{(\ell,i)}$, then applies hard attention as defined above to produce $\mathsf{Attn}^{(\ell,i)}(H)$, with value/output dimension matching the FFN input dimension. The outputs are summed: $\mathsf{Attn}^{(\ell)}(H) = \sum_{i=1}^{h^{(\ell)}} \mathsf{Attn}^{(\ell,i)}(H)$.
For simplicity we sum head outputs; the usual concatenate-and-project formulation is covered by allowing head-specific output dimensions and projections.

\paragraph{Feed-Forward Network.}
The $\mathsf{FFN}$ sublayer at layer $\ell$ is a position-wise MLP with ReLU activations, mapping $\mathbb{R}^{d^{(\ell-1)}} \to \mathbb{R}^{d^{(\ell)}}$. \fixed{It has at most $D_0$ internal affine/ReLU layers, for an absolute constant $D_0$, with arbitrary hidden widths collected in $\bar d^{(\ell)}$.} Our results can be extended to any piecewise-linear activation, following~\citet{bartlett1998almost}.

\paragraph{Input Layer.}
For next-token generators operating on token sequences, we first embed tokens into vectors. The \emph{token embedding} $\mathsf{TE}: \Sigma \to \mathbb{R}^{d}$, parametrized by $W_{\mathsf{TE}} \in \mathbb{R}^{d \times K}$, maps each token to a $d$-dimensional vector. \fixed{For a context-length-$T$ architecture, we use a fixed, non-learned positional embedding $\mathsf{PE}: \{0,\ldots,T-1\}\to\mathbb R^d$. In autoregressive results, $T=\Ttotal$, and the same parameters are run on each prefix using the corresponding initial segment of $\mathsf{PE}$.} The input embedding sequence is:
\begin{equation}
    H = \bigl(\mathsf{TE}(x_0) + \mathsf{PE}(0), \ldots, \mathsf{TE}(x_{T-1}) + \mathsf{PE}(T-1)\bigr) \in \mathbb{R}^{T \times d}.
\end{equation}
\fixed{The positional embedding may be arbitrary, but it is fixed and not counted as a trainable parameter.}

\paragraph{Output Layer.}
For a length-$T$ input, the output is extracted from the final position $H^{(L)}_{T-1} \in \mathbb{R}^{d^{(L)}}$ of the last layer. \fixed{For an autoregressive prefix of length $s$, the same rule reads position $s-1$.} For binary classifiers, a linear projection $w_{\mathrm{out}} \in \mathbb{R}^{d^{(L)}}$ followed by a threshold yields the binary label. For next-token generators, a decoding matrix $W_{\mathsf{DE}} \in \mathbb{R}^{K \times d^{(L)}}$ maps the final representation to logits over the vocabulary. \fixed{For a fixed architecture $\mathcal A$ and trainable parameter vector $\theta$,}
\begin{equation}
    \fixed{f^{01}_{\mathcal A,\theta}(H)} = \mathbf{1}\bigl[ w_{\mathrm{out}}^\top H^{(L)}_{T-1} > 0 \bigr] \in \{0, 1\}, \qquad \fixed{f_{\mathcal A,\theta}(X)} = \arg\max_{x \in \Sigma} \bigl[ W_{\mathsf{DE}} H^{(L)}_{T-1} \bigr]_x \in \Sigma.
\end{equation}
The argmax uses a fixed deterministic tie-breaking rule.
In practice, $W_{\mathsf{DE}}$ often shares weights with the token embedding $W_{\mathsf{TE}}$ (\emph{weight tying}); our analysis permits both tied and untied configurations.

\paragraph{Hypothesis Classes.}
\fixed{We distinguish a fixed architecture from a family of possible architectures. A generic Transformer architecture $\mathcal{A}$ is specified by the choices introduced above:}
\begin{equation}
\mathcal{A} = \bigl\{K,\; L,\; \{d^{(i)}\}_{i=0}^L,\; \{h^{(\ell)}\}_{\ell=1}^L,\; \{\bar{d}^{(\ell)}\}_{\ell=1}^L,\; \mathsf{PE},\; \mathsf{Res}\bigr\},
\end{equation}
\fixed{where these entries record the architectural choices above. Here $\mathsf{Res}$ records a residual/no-residual flag for each attention and FFN sublayer. A residual adds the sublayer input to its output; if dimensions differ, the residual path may use an affine projection, counted in $W(\mathcal A)$. Let $W(\mathcal{A})$ and $L(\mathcal{A})$ denote the total parameter count and depth. Varying the trainable parameters gives fixed-architecture hypothesis classes; collecting these over all architectures within a budget gives architecture families:}
\begin{equation}\label{eq:arch-family-defs}
\begin{aligned}
\mathcal{F}_{\mathcal{A}} &:= \{f_{\mathcal A,\theta} : \theta \in \mathbb{R}^{W(\mathcal{A})}\},&
\mathcal{F}^{01}_{\mathcal{A}} &:= \{f^{01}_{\mathcal A,\theta} : \theta \in \mathbb{R}^{W(\mathcal{A})}\},\\[-1mm]
\Ffam_{L,W} &:= \{\mathcal{F}_{\mathcal{A}} : L(\mathcal{A})\le L,\; W(\mathcal{A})\le W\},&
\Ffam^{01}_{L,W} &:= \{\mathcal{F}^{01}_{\mathcal{A}} : L(\mathcal{A})\le L,\; W(\mathcal{A})\le W\}.
\end{aligned}
\end{equation}
\fixed{When $\mathcal A$ is fixed, we often suppress it and write $f_\theta$ or $f^{01}_\theta$.}
\fixed{Thus $\Ffam_{L,W}$ and $\Ffam^{01}_{L,W}$ are families of fixed-architecture hypothesis classes, not single hypothesis classes.}

\subsection{Autoregressive Next-Token Generation} \label{subsec:autoregressive}

\paragraph{Next-Token Generator.}
A \emph{next-token generator} maps a token context to the next token. We write $f:\Sigma^*\to\Sigma$ for convenience, but in our results $f$ is only evaluated on contexts of length at most $\Ttotal$. At each step, the model uses the same $f$ to predict a new token and appends it to the current sequence; we denote this \emph{apply-and-append} operation by $\bar{f}: \Sigma^* \to \Sigma^*$, where $\bar{f}(X) := X \concat f(X)$. Starting from an initial prompt $X \in \Sigma^{\Tin}$ of \emph{input length} $\Tin$, the model iteratively applies $\bar{f}$ to generate $\Tout$ new tokens (the \emph{generation length}):
\begin{equation}
    f^{(\Tout)}(X) := \underbrace{\bar{f} \circ \bar{f} \circ \cdots \circ \bar{f}}_{\Tout \text{ times}}(X) \in \Sigma^{\Tin + \Tout}.
\end{equation}
The generated suffix $f^{(\Tout)}(X)[-\Tout{:}]$ is the CoT trace, and its last token is the final answer.

\paragraph{Teacher Forced Training.}
In practice, autoregressive models are trained using \emph{teacher forcing}~\citep{williams1989learning}: at each step, the model predicts the next token conditioned on the \emph{ground-truth} prefix rather than its own previous predictions. Concretely, consider a distribution $\mathcal{D}$ over input-output pairs $(X, Z)$, where $X \in \Sigma^{\Tin}$ is an input prompt and $Z = (z_0, \ldots, z_{\Tout-1}) \in \Sigma^{\Tout}$ is the target sequence. Given $m$ i.i.d.\ samples $S = \{(X_i, Z_i)\}_{i=1}^m$ from $\mathcal{D}$ and a hypothesis class $\mathcal{F}$ of next-token generators, the learning objective is to find $f \in \mathcal{F}$ minimizing the \emph{empirical next-token prediction loss}:
\begin{equation}\label{eq:ntp-loss}
    \widehat{\mathcal{L}}(f; S) = \frac{1}{m} \sum_{i=1}^{m} \frac{1}{\Tout} \sum_{t=0}^{\Tout-1} \mathbf{1}\bigl[ f(X_i \concat Z_i[{:}t]) \neq Z_i[t] \bigr].
\end{equation}

\paragraph{End-to-End Evaluation.}
At test time, however, we ultimately care only about the correctness of the final token $Z[-1]$, treating the intermediate tokens $Z[{:}{-}1]$ as auxiliary computation. This is captured by the \emph{population end-to-end loss}:
\begin{equation}\label{eq:e2e-loss}
    \mathcal{L}_{\mathrm{e2e}}(f) = \mathbb{E}_{(X, Z) \sim \mathcal{D}} \left[ \mathbf{1}\bigl[ f^{(\Tout)}(X)[-1] \neq Z[-1] \bigr] \right],
\end{equation}
where $f^{(\Tout)}(X)[-1]$ denotes the final token after $\Tout$ steps of autoregressive generation.

Thus training checks predictions on ground-truth prefixes, while end-to-end evaluation rolls out the model on its own prefixes and keeps only the final token.

\section{VC-Dimension of Transformers}\label{sec:vc-results}

For VC-dimension analysis, we focus on the binary classifier \fixed{family} $\Ffam^{01}_{L,W}$ with context length $T$, depth $L$, and $W$ parameters. \fixed{The upper bound is uniform over every fixed class in this family, while the lower bound is witnessed by one fixed class.} The extension to multi-class settings is standard.

\subsection{Upper Bound}\label{subsec:upper-bound}

We first establish an upper bound. The main idea is that hard attention acts as a \emph{discrete switch}: once the attention rankings and ReLU activation patterns are fixed, the network output is polynomial in the parameters. We can then extend the piecewise-polynomial framework for ReLU networks~\citep{bartlett1998almost} to account for these additional switches.

\begin{theorem}[VC Dimension Upper Bound]
\label{thm:upperbound}
For any depth $L \ge 1$, parameter budget \fixed{$W \ge L$}, and context length $T \ge 1$, let $\Ffam^{01}_{L,W}$ be the \fixed{binary Transformer family} defined in \Cref{subsec:Transformers}. \fixed{Then, for every $\mathcal{F}^{01}\in\Ffam^{01}_{L,W}$,}
\begin{equation}
\mathrm{VCdim}(\mathcal{F}^{01}) = \mathcal{O}(W L \log(TW)).
\end{equation}
\end{theorem}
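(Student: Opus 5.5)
We follow the layer-by-layer partition argument of \citet{bartlett1998almost,bartlett2019nearly}. Fix an architecture $\mathcal A$ with depth $L$ and $W$ parameters, and fix $m$ inputs $H^1,\dots,H^m \in \mathbb{R}^{T\times d}$. We bound the number $\Pi(m)$ of dichotomies realized on them by counting the regions of parameter space $\mathbb{R}^W$ on which all discrete decisions are constant. These decisions are the ReLU signs, the attention argmax sets $M_i$, and the final threshold. On each such region every hidden value is a polynomial in $\theta$ of controlled degree.

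We proceed by induction over the $L$ blocks, refining a partition $\mathcal P_{\ell}$ of $\mathbb{R}^W$. The inductive hypothesis is that on each cell of $\mathcal P_{\ell-1}$ every coordinate of every $H^{(\ell-1)}_i$, across all $m$ inputs, is a polynomial in $\theta$ of degree at most $c\,\ell$ for a constant $c$. Moreover, only the parameters of the first $\ell-1$ blocks enter these polynomials.

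\emph{Attention step.} On a cell, the scores $\langle Q_i,K_j\rangle$ are polynomials of degree $O(\ell)$. The sets $M_i$ for all $i<T$ and all $m$ samples are determined by the signs of the pairwise differences $\langle Q_i,K_j\rangle-\langle Q_i,K_k\rangle$ for $k,j\le i$, taken per head. This gives $N_{\mathrm{att}}\le m\,h^{(\ell)}T^3$ polynomials. Since $h^{(\ell)}\le W$, we have $N_{\mathrm{att}}\le mWT^3$.

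Once these signs are fixed, the output $\sum_{j\in M_i}V_j/|M_i|$ is the polynomial $\sum_{j\in M_i}V_j$ multiplied by the \emph{constant} $1/|M_i|$. It is therefore still a polynomial of degree $O(\ell)$. This is the point where hard attention avoids rational or exponential functions. Leftmost or rightmost tie-breaking uses the same sign conditions, so it is covered as well.

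\emph{FFN step.} The FFN has at most $D_0$ internal ReLU layers. We refine the partition $D_0$ times using the signs of pre-activations. Each refinement involves at most $m T \bar d$ polynomials of degree $O(\ell)$, and $\bar d\le W$. Residual connections only add polynomials and do not raise the degree.

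At each refinement we apply Warren's bound. The number of sign patterns of $N$ polynomials of degree $\le D$ in $W_\ell\le W$ variables is at most $(8eND/W_\ell)^{W_\ell}$. Here $W_\ell$ counts only the parameters in the first $\ell$ blocks. Combining this with the final threshold polynomial, of degree $O(L)$, we get
\begin{equation*}
\Pi(m)\;\le\;\prod_{\ell=1}^{L}\prod_{s=0}^{D_0}\Bigl(\tfrac{8e\cdot mWT^3\cdot cL}{W_\ell}\Bigr)^{W_\ell}\cdot\Bigl(\tfrac{8e\cdot m\cdot cL}{W}\Bigr)^{W}.
\end{equation*}
Hence $\log\Pi(m)\le (D_0+2)\sum_\ell W_\ell\cdot \log\bigl(C\,mWT^3L\bigr)$. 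Using $\sum_\ell W_\ell\le LW$ and requiring $\Pi(m)\ge 2^m$ gives
\begin{equation*}
m\le C' LW\log(mWTL).
\end{equation*}
The standard lemma that $m\le a\log(bm)$ implies $m=O(a\log(ab))$ then yields $m=O(LW\log(TW))$, where we absorb $\log L\le\log W$ using $W\ge L$.

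The main obstacle is keeping the degree linear in depth rather than exponential. Products $Q\cdot K$ could naively double the degree at every layer, giving degree $2^{\ell}$. This does no harm to the logarithm, since $\log 2^{L}=L$ would only add a term $L\cdot LW$ without a $\log$. Still, it would spoil the bound unless handled. The fix is that the scores are used \emph{only through their signs}. After a cell is fixed, the attention output is a linear combination of the values $V_j$ with constant coefficients, and the $V_j$ have degree $O(\ell)$, not of the score degree. So the score polynomials have degree at most $2\cdot O(\ell)$, but this larger degree never propagates forward, and the degree stays linear in $\ell$. Verifying this carefully, along with counting the $T^3$ comparisons so that $T$ enters only inside the logarithm, is the crux.
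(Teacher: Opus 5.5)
Your proposal is correct and follows essentially the same proof as the paper: a layerwise partition of parameter space by the ternary signs of the $O(mWT^3)$ pairwise score differences and the $O(mTW)$ ReLU pre-activations, Warren's bound on each cell, degree kept at $O(\ell)$ because scores enter only through their signs while values pass through with constant coefficients $1/|M_i|$, and the inversion of $m \le C' LW\log(mWTL)$ using $L\le W$. The one difference is cosmetic. You use $W_\ell$ variables per layer, in the style of Bartlett et al.\ 2019, where the paper uses $W$ throughout. Since you immediately bound $\sum_\ell W_\ell \le LW$, this gives nothing beyond the paper's bound.
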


\paragraph{Proof Sketch.}
We proceed in two steps: first partition the parameter space by discrete branching patterns, then bound the number of output sign patterns inside each region.

\paragraph{Step 1: Piecewise-Polynomial Framework.}
We bound the \emph{growth function} $\Pi_{\mathcal{F}^{01}}(H_{1:N})$, which counts the number of distinct output patterns achievable over $N$ embedding sequences; if these $N$ sequences are shattered, this number is $2^N$. We partition the parameter space $\mathbb{R}^W$ into regions where all discrete branching decisions are fixed. Within each region, the network output is polynomial in $\theta$, so $\Pi_{\mathcal{F}^{01}}(H_{1:N}) \le |\mathcal{P}|\cdot\max_C\Pi_C$, where $|\mathcal{P}|$ is the number of regions and $\Pi_C$ is the number of output patterns within region $C$. Warren's theorem bounds $\Pi_C$: for $m$ polynomials of degree $D$ in $W$ variables, the number of distinct sign vectors is at most $(\mathcal{O}(mD/W))^W$.

\paragraph{Step 2: Counting Branching Polynomials.}
At each layer, we identify the \emph{branching polynomials} whose signs determine all discrete choices. For the $\mathsf{FFN}$, these are \fixed{all internal} ReLU pre-activations ($\mathcal{O}(NTW)$ polynomials). For attention, these are the pairwise score differences $g_{n,i,p,q}^{(\ell)}(\theta) := \langle q_{n,i}(\theta), k_{n,p}(\theta) - k_{n,q}(\theta) \rangle$, whose signs determine the relative ranking of positions $p$ and $q$. \fixed{A $W$-parameter architecture has at most $\mathcal{O}(W)$ nontrivial heads;} with $T$ positions, each head and each query requires $\binom{T}{2} = \mathcal{O}(T^2)$ comparisons, yielding $\mathcal{O}(NWT^3)$ additional branching polynomials per layer. Applying Warren's bound and taking logarithms, the $T^3$ term contributes $WL \log T$ to the exponent, yielding the stated bound. The complete derivation appears in \Cref{app:upper-bound-proof}. 

The appendix proves a slightly sharper fixed-architecture statement with logarithmic factor $\log(TW(\mathcal A)L(\mathcal A))$; using $L(\mathcal A)\le L\le W$ and $W(\mathcal A)\le W$ gives the displayed form. \hfill$\blacksquare$

\subsection{Lower Bound}\label{subsec:lower-bound}

We now construct a Transformer that simulates a Recursive Retrieval Machine (\Cref{fig:rrm}). The context stores all $2^B$ possible $B$-bit label configurations, where $B = \lfloor \log_2 (T-1) \rfloor$. For each group-layer pair, the parameters specify which configuration to retrieve. One attention lookup then selects one of $T$ positions, realizing $B=\Theta(\log T)$ labels across the corresponding bit-indexed samples. Chaining this lookup across $L$ layers and $n=\Theta(W)$ groups gives the $\Omega(WL\log T)$ term. Our bound improves the $\Omega(\log T)$ bound from \citet{edelman2022inductive}.

\begin{theorem}[VC Dimension Lower Bound]
\label{thm:lowerbound}
\fixed{There exists an absolute constant $C_0>0$ such that} for any depth $L \ge 1$, context length \fixed{$T \ge 3$}, and parameter budget \fixed{$W \ge C_0 L$}, let $\Ffam^{01}_{L,W}$ be the \fixed{binary Transformer family} defined in \Cref{subsec:Transformers}. \fixed{Then there exists $\mathcal{F}^{01}\in\Ffam^{01}_{L,W}$ such that}
\begin{equation}
\mathrm{VCdim}(\mathcal{F}^{01}) \ge c \cdot W L \left( \log T + \log(W/L) \right),
\end{equation}
where $c > 0$ is an absolute constant.
\end{theorem}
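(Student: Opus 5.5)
We prove the bound as a maximum of two terms: $c\,WL\log T$ and $c\,WL\log(W/L)$. Since $\max(a,b)\ge (a+b)/2$, it suffices to exhibit, for each term separately, a single fixed architecture in $\Ffam^{01}_{L,W}$ attaining it. We then merge the two architectures into one class by running them as two disjoint parallel blocks, each using half the width. The combined classifier sends its output through whichever block is selected by a fixed input coordinate, so the class shatters the union of the two shattered sets (suitably tagged). This costs only constant factors in $W$ and $L$.

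\textbf{The $\log(W/L)$ term.} This term comes for free from feed-forward networks. We set all attention value matrices to zero, or use residual connections so that attention acts as the identity. The Transformer restricted to the last position is then a depth-$\Theta(L)$ ReLU network with $\Theta(W)$ parameters acting on $H_{T-1}$. The known lower bound of \citet{bartlett2019nearly} then gives $\Omega(WL\log(W/L))$ shattered points, provided $W\ge C_0L$ so that each layer has width at least a constant. Inputs are embedding sequences, so we simply place the shattered points in the last row and zero elsewhere.

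\textbf{The $\log T$ term (the main construction).} We follow the Recursive Retrieval Machine outline. Let $B=\lfloor\log_2(T-1)\rfloor$, and use $n=\Theta(W/L)$ parallel groups per layer, each with $O(1)$ parameters per layer. The inputs are sequences whose first $2^B\le T-1$ positions enumerate all $b\in\{0,1\}^B$. Position $p$ stores, via the fixed input embedding, an encoding such as $(\cos(2\pi p/T),\sin(2\pi p/T))$ together with the bits of $p$. Under hard attention, a query with direction $\theta_{g,\ell}$ then retrieves exactly position $p^\star$, and the parameter $\theta_{g,\ell}$, chosen among $T$ directions, selects which bit configuration is retrieved.

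A sample point is indexed by a triple (group $g$, layer $\ell$, bit $j$). Its last-position embedding carries a one-hot indicator of $g$ and $\ell$ and a one-hot indicator of $j$. The architecture makes layer $\ell$ of group $g$ perform a lookup whose retrieved bit vector is passed forward through the residual stream. The lookup at layer $\ell$ should be active only for samples tagged with $\ell$. This is enforced by gating: the query scale is multiplied through a ReLU gate, which is legitimate because only the query input is data dependent while the query weights themselves stay fixed. The final FFN then outputs the retrieved bit $j$ of the active group-layer pair, which is a linear readout given the one-hot $j$ coordinates.

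Choosing the $nL$ directions independently realizes every labeling of the $nL\cdot B=\Theta(WL\log T)$ points.

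\textbf{Main obstacle.} The hard step is making each group-layer lookup genuinely use $O(1)$ trainable parameters while distinct layers stay independent. Naively, every lookup could sit in the last layer, which would give only $W\log T$. To get the factor $L$, each layer's retrieval must be routed without interference. The plan is to give every group in every layer its own $O(1)$-parameter head whose query is $\theta_{g,\ell}$ times a gate coordinate equal to $\mathbf 1[\text{sample tagged }(g,\ell)]$. Heads whose gate is zero then attend uniformly, and their output is a fixed average that can be subtracted by a fixed bias. The residual stream must carry the $B$ retrieved bits, which requires dimension of order $\log T$ per group, not $O(1)$.

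This is where care is needed. We would instead retrieve a single bit per head: the sample index $j$ selects among $B$ heads. Alternatively, the value can encode the whole position $p$ as one real coordinate, and the final FFN extracts bit $j$ of $p$ with $O(B)$ ReLU units shared across groups. That cost is amortized, since $B\le \log T$ and it can be absorbed when $W\ge C_0L$ and $W$ dominates $\log T$; otherwise we fall back to per-bit heads. Verifying that the parameter count stays $O(W)$ under this scheme is the crux.
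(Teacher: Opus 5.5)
\textbf{Gap: your $\log T$ construction only yields $\Theta(W\log T)$ shattered points, not $\Theta(WL\log T)$.} With $n=\Theta(W/L)$ groups per layer there are $nL=\Theta(W)$ group--layer pairs, so $nL\cdot B=\Theta(W\log T)$. The claimed equality $nL\cdot B=\Theta(WL\log T)$ is an arithmetic slip.

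Better routing, which your ``main obstacle'' paragraph tries, cannot fix this. In your scheme every labeling is realized by setting each of the $\Theta(W)$ lookup parameters $\theta_{g,\ell}$ to one of at most $T$ directions, with everything else fixed. So the parameter settings you actually use produce at most $T^{O(W)}$ distinct classifiers. They can therefore shatter at most $O(W\log T)$ points, however the lookups are spread across layers. As long as each lookup owns its own $O(1)$ parameters, depth buys nothing. The per-bit-head fallback is worse: it spends a parameter per label bit and loses the $\log T$ factor altogether.

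\textbf{The missing idea is to make one real parameter carry $L$ addresses and let depth decode them.}
\begin{itemize}
\item For each of $n=\Theta(W)$ groups, the paper stores a single scalar $a_j=\sum_{m=1}^L s_{j,m}(2T)^{-m}$. Its base-$2T$ digits are the $L$ target positions, and the query token's one-hot $\mathbf e_j$ selects it once in the first block.
\item Every later layer runs the same constant-size, group-independent backbone. Attention with keys $(t,-t^2)$ and query $(2r,1)$ selects the context position nearest to the register, doing nearest-integer rounding at $O(1)$ parameter cost. The FFN then performs the left shift $r_k=2T(r_{k-1}-t^\star)$, exposing the next digit.
\item A hat gate on $g=\ell$ keeps only the bit read at the target layer.
\end{itemize}
The total cost is $\Theta(n)+O(L)$ parameters for $nLB=\Theta(WL\log T)$ shattered points. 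This is precisely what your counting obstruction rules out for per-lookup parameters.

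\textbf{A secondary simplification you missed.} Inputs are arbitrary embedding sequences, so the context of sample $(j,\ell,i)$ can store $\mathrm{bit}_i(t)$ directly at position $t$. Then no bit-selection circuitry or $B$-dimensional residual state is needed.

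Your $\log(W/L)$ term matches the paper. Your merging step is unnecessary: the theorem asks for only one class, and whichever of the two witnesses is larger already achieves at least half the sum.
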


\paragraph{Proof Sketch.}
We construct a set of $N = nLB$ samples that can be shattered, where $n = \Theta(W)$ is the number of sample groups, $L$ is the depth, and $B = \lfloor \log_2(T-1) \rfloor$. Each sample is indexed by a triple $(j, \ell, i)$ with $j \in [n]$, $\ell \in [L]$, $i \in [B]$. The construction below gives the $WL\log T$ term; the $WL\log(W/L)$ term comes from the standard ReLU-network lower bound, since Transformers contain ordinary position-wise ReLU networks as a subclass. We describe the retrieval construction in three steps: (1) how the context encodes labels, (2) how parameters specify retrieval targets, and (3) how attention and FFN perform recursive retrieval.

\begin{figure}[t]
\centering
\resizebox{0.94\textwidth}{!}{%
\begin{tikzpicture}[
  >=Latex,
  font=\small,
  tok/.style={circle,draw,minimum size=5mm,inner sep=0pt},
  qtok/.style={circle,draw,very thick,fill=gray!15,minimum size=5.5mm,inner sep=0pt},
  sel/.style={circle,draw,very thick,fill=gray!15,minimum size=5mm,inner sep=0pt},
  arr/.style={->,thick},
  flow/.style={->,thick,gray!55},
  darr/.style={->,thick,dashed},
  dotarr/.style={densely dotted,thick,gray!55},
  lab/.style={font=\footnotesize,align=left}
]

\def\xs{1.6}  
\def\xd{0.7}  
\def\ys{-0.95} 

\def\xA{0}           
\def\xB{1.6}         
\def\xC{2.7}         
\def\xD{3.8}         
\def\xE{4.9}         
\def\xF{6.0}         
\def\xG{7.1}         
\def\xH{8.2}         
\def\xI{9.8}         

\node[tok] (n00) at (\xA, 0*\ys) {};
\node[tok] (n01) at (\xB, 0*\ys) {};
\node (n02) at (\xC, 0*\ys) {$\cdots$};
\node[sel] (n03) at (\xD, 0*\ys) {};
\node (n04) at (\xE, 0*\ys) {$\cdots$};
\node[tok] (n05) at (\xF, 0*\ys) {};
\node (n06) at (\xG, 0*\ys) {$\cdots$};
\node[tok] (n07) at (\xH, 0*\ys) {};
\node[qtok] (n08) at (\xI, 0*\ys) {};

\node[tok] (n10) at (\xA, 1*\ys) {};
\node[tok] (n11) at (\xB, 1*\ys) {};
\node (n12) at (\xC, 1*\ys) {$\cdots$};
\node[tok] (n13) at (\xD, 1*\ys) {};
\node (n14) at (\xE, 1*\ys) {$\cdots$};
\node[tok] (n15) at (\xF, 1*\ys) {};
\node (n16) at (\xG, 1*\ys) {$\cdots$};
\node[tok] (n17) at (\xH, 1*\ys) {};
\node[qtok] (n18) at (\xI, 1*\ys) {};

\node[tok] (n20) at (\xA, 2*\ys) {};
\node[tok] (n21) at (\xB, 2*\ys) {};
\node (n22) at (\xC, 2*\ys) {$\cdots$};
\node[tok] (n23) at (\xD, 2*\ys) {};
\node (n24) at (\xE, 2*\ys) {$\cdots$};
\node[tok] (n25) at (\xF, 2*\ys) {};
\node (n26) at (\xG, 2*\ys) {$\cdots$};
\node[tok] (n27) at (\xH, 2*\ys) {};
\node[qtok] (n28) at (\xI, 2*\ys) {};

\node (n30) at (\xA, 3*\ys) {$\vdots$};
\node (n31) at (\xB, 3*\ys) {$\vdots$};
\node (n32) at (\xC, 3*\ys) {};
\node (n33) at (\xD, 3*\ys) {$\vdots$};
\node (n34) at (\xE, 3*\ys) {};
\node (n35) at (\xF, 3*\ys) {$\vdots$};
\node (n36) at (\xG, 3*\ys) {};
\node (n37) at (\xH, 3*\ys) {$\vdots$};
\node (n38) at (\xI, 3*\ys) {$\vdots$};

\node[tok] (n40) at (\xA, 4*\ys) {};
\node[tok] (n41) at (\xB, 4*\ys) {};
\node (n42) at (\xC, 4*\ys) {$\cdots$};
\node[tok] (n43) at (\xD, 4*\ys) {};
\node (n44) at (\xE, 4*\ys) {$\cdots$};
\node[sel] (n45) at (\xF, 4*\ys) {};
\node (n46) at (\xG, 4*\ys) {$\cdots$};
\node[tok] (n47) at (\xH, 4*\ys) {};
\node[qtok] (n48) at (\xI, 4*\ys) {};

\node[tok] (n50) at (\xA, 5*\ys) {};
\node[tok] (n51) at (\xB, 5*\ys) {};
\node (n52) at (\xC, 5*\ys) {$\cdots$};
\node[tok] (n53) at (\xD, 5*\ys) {};
\node (n54) at (\xE, 5*\ys) {$\cdots$};
\node[tok] (n55) at (\xF, 5*\ys) {};
\node (n56) at (\xG, 5*\ys) {$\cdots$};
\node[tok] (n57) at (\xH, 5*\ys) {};
\node[qtok] (n58) at (\xI, 5*\ys) {};

\node[tok] (n60) at (\xA, 6*\ys) {};
\node[tok] (n61) at (\xB, 6*\ys) {};
\node (n62) at (\xC, 6*\ys) {$\cdots$};
\node[tok] (n63) at (\xD, 6*\ys) {};
\node (n64) at (\xE, 6*\ys) {$\cdots$};
\node[tok] (n65) at (\xF, 6*\ys) {};
\node (n66) at (\xG, 6*\ys) {$\cdots$};
\node[tok] (n67) at (\xH, 6*\ys) {};
\node[qtok] (n68) at (\xI, 6*\ys) {};

\node[lab] at (\xE, 1.4) {\small\textbf{Context}};
\node[lab] at (\xI, 1.4) {\small\textbf{Query}};
\node[lab, above=6mm of n00, anchor=base] {\small$t \,{=}\, 0\vphantom{t^*_1}$};
\node[lab, above=6mm of n01, anchor=base] {\small$t \,{=}\, 1\vphantom{t^*_1}$};
\node[lab, above=6mm of n02, anchor=base] {\small$\cdots\vphantom{t^*_1}$};
\node[lab, above=6mm of n03, anchor=base] {\small$t \,{=}\, t^*_1$};
\node[lab, above=6mm of n04, anchor=base] {\small$\cdots\vphantom{t^*_1}$};
\node[lab, above=6mm of n05, anchor=base] {\small$t \,{=}\, t^*_L$};
\node[lab, above=6mm of n06, anchor=base] {\small$\cdots\vphantom{t^*_1}$};
\node[lab, above=6mm of n07, anchor=base] {\small$t \,{=}\, T{-}2\vphantom{t^*_1}$};
\node[lab, above=5.5mm of n08, anchor=base] {\small$t \,{=}\, T{-}1\vphantom{t^*_1}$};

\node[lab, left=5mm of n00, anchor=east] {\small$H^{(0)}$};
\node[lab, left=5mm of n10, anchor=east] {\small$\tilde H^{(1)}$};
\node[lab, left=5mm of n20, anchor=east] {\small$H^{(1)}$};
\node[lab, left=5mm of n40, anchor=east] {\small$H^{(L-1)}$};
\node[lab, left=5mm of n50, anchor=east] {\small$\tilde H^{(L)}$};
\node[lab, left=5mm of n60, anchor=east] {\small$H^{(L)}$};
\node[lab, left=35mm of n00, anchor=west] {\small$\mathsf{FFN}^{(0)}$:};
\node[lab, left=35mm of n10, anchor=west] {\small$\mathsf{Attn}^{(1)}$:};
\node[lab, left=35mm of n20, anchor=west] {\small$\mathsf{FFN}^{(1)}$:};
\node[lab, left=35mm of n40, anchor=west] {\small$\mathsf{FFN}^{(L-1)}$:};
\node[lab, left=35mm of n50, anchor=west] {\small$\mathsf{Attn}^{(L)}$:};
\node[lab, left=35mm of n60, anchor=west] {\small$\mathsf{FFN}^{(L)}$:};

\draw[flow] (n00) -- (n10); \draw[flow] (n10) -- (n20); \draw[flow] (n40) -- (n50); \draw[flow] (n50) -- (n60);
\draw[flow] (n01) -- (n11); \draw[flow] (n11) -- (n21); \draw[flow] (n41) -- (n51); \draw[flow] (n51) -- (n61);
\draw[flow] (n03) -- (n13); \draw[flow] (n13) -- (n23); \draw[flow] (n43) -- (n53); \draw[flow] (n53) -- (n63);
\draw[flow] (n05) -- (n15); \draw[flow] (n15) -- (n25); \draw[flow] (n45) -- (n55); \draw[flow] (n55) -- (n65);
\draw[flow] (n07) -- (n17); \draw[flow] (n17) -- (n27); \draw[flow] (n47) -- (n57); \draw[flow] (n57) -- (n67);
\draw[arr] (n08) -- (n18); \draw[arr] (n18) -- (n28); \draw[arr] (n48) -- (n58); \draw[arr] (n58) -- (n68);

\node[draw, rounded corners, inner sep=4pt, font=\small, right=4mm of n08] (emb) {\textbf{Embed}: $r_0 \leftarrow w_j$};
\draw[arr] (emb) -- (n08);

\draw[darr] (n08) -- node[lab, above, pos=0.55, yshift=3pt] {\small\textbf{Match}: $t^*_1=\arg\min_t (t-r_0)^2$} (n03);
\draw[arr]  (n03) -- node[lab, below, sloped, pos=0.5] {\small\textbf{Read}: $b_1=\mathrm{bit}_i(t^*_1)$} (n18);
\node[lab, right=4mm of n18] {\small\textbf{Retrieval}};
\node[lab, right=4mm of n28, align=left] {\small\textbf{Shift}:\\$r_1{=}2T(r_0{-}t^*_1)$};

\node[lab, right=4mm of n48, align=left] {\small\textbf{Shift}:\\$r_{L-1}{=}2T(r_{L-2}{-}t^*_{L-1})$};
\draw[darr] (n48) -- node[lab, above, pos=0.55, yshift=3pt] {\small\textbf{Match}: $t^*_L=\arg\min_t (t-r_{L-1})^2$} (n45);
\draw[arr]  (n45) -- node[lab, below, sloped, pos=0.5] {\small\textbf{Read}: $b_L=\mathrm{bit}_i(t^*_L)$} (n58);
\node[lab, right=4mm of n58] {\small\textbf{Retrieval}};

\node[draw, rounded corners, inner sep=4pt, font=\small, right=4mm of n68] (out) {\textbf{Output}: $\hat y = b_\ell$};
\draw[arr] (n68) -- (out);

\end{tikzpicture}%
}
\caption{Recursive retrieval gadget for sample $(j,\ell,i)$. The context positions enumerate all $B$-bit label configurations, while the query register $r_0$ stores an encoded list of target addresses $(s_{j,1},\ldots,s_{j,L})$. At layer $k$, hard attention selects the context position $t_k^*$ nearest to the current register, the value head reads its $i$-th bit, and the FFN shifts the register to expose the next address. A gate outputs the bit read at the target layer $\ell$.}
\label{fig:rrm}
\end{figure}

\paragraph{Step 1: Context as Label Repository.}
For each fixed pair $(j,\ell)$, the $B=\lfloor \log_2(T-1)\rfloor$ labels indexed by $i\in[B]$ are encoded by one retrieved context position. Position $t \in \{0,\ldots,T-2\}$ stores the $t$-th labeling by encoding $\mathrm{bit}_i(t)$, so the context enumerates all $2^B\le T-1$ label configurations. For target labels $Y_{j,\ell,1},\ldots,Y_{j,\ell,B}$, the matching position is $t^*=\sum_i Y_{j,\ell,i}2^{i-1}$.

\paragraph{Step 2: Parameters as Target Addresses.}
For each group-layer pair, the parameters store the lookup position. The labels $\{Y_{j,\ell,i}\}_{i=1}^B$ are aggregated into $s_{j,\ell}=\sum_k Y_{j,\ell,k}2^{k-1}$. A trainable affine compression maps each group identifier $j$ to an encoded pointer containing all $L$ addresses $(s_{j,1}, \ldots, s_{j,L})$.

\paragraph{Step 3: Recursive Retrieval via Attention + FFN.}
Given a register $r$ encoding target address $s$, attention selects the context position $t^*=s$: keys extract $(2t,-t^2)$, the query extracts $(r,1)$, and the score $2rt-t^2$ is maximized at the integer nearest to $r$. The value projection reads $\mathrm{bit}_i(t^*)$. The base-$2T$ encoding keeps $|r-s|<1/2$, so $s$ is the unique nearest integer.

To chain $L$ retrievals, set $r_0:=\sum_{m=1}^L s_m/(2T)^{m-1}$. After retrieval step $k$, the $\mathsf{FFN}$ performs the left-shift $r_k=2T(r_{k-1}-t^*)$, exposing the next address. A gate records the retrieved bit only when $k=\ell$.

\paragraph{Putting It Together.}
This affine compression uses $\Theta(n)$ parameters to store $n$ address sequences; the backbone requires only $\mathcal{O}(L)$ parameters. For a sufficiently large absolute constant $C_0$, the regime $W \ge C_0L$ lets us choose $n = \Theta(W)$. We therefore shatter $nLB$ samples, yielding $\Omega(WL\log T)$. Together with the ReLU-subclass lower bound $\Omega(WL\log(W/L))$, this proves \Cref{thm:lowerbound}. The complete construction appears in \Cref{app:lower-bound-proof}. \hfill$\blacksquare$

\section{Auto-Regressive CoT with Transformers}
\label{sec:cot}

 We now turn to the \emph{autoregressive} setting, where next-token generators $f: \Sigma^* \to \Sigma$ iteratively produce variable-length token sequences. 
 This raises a natural question: \emph{how does the availability of full CoT traces during training affect learnability?} 
We first formalize CoT learnability, then introduce the two dimensions governing the generic upper and lower bounds, and finally instantiate both for Transformers.

\subsection{Chain-of-Thought Learnability} \label{subsec:cot-setup}

We begin by formalizing the CoT learning problem. Fix an input domain $\mathcal{X} \subseteq \Sigma^{\Tin}$. 
The base class $\mathcal{F}$ of next-token generators induces the trace and end-to-end hypothesis classes:
\begin{equation}
    \mathcal{F}^{(\Tout)} = \left\{ X \mapsto f^{(\Tout)}(X)[-\Tout{:}] : f \in \mathcal{F} \right\}, \quad
    \mathcal{F}^{(\Tout)}_{\sete} = \left\{ X \mapsto f^{(\Tout)}(X)[-1] : f \in \mathcal{F} \right\}.
\end{equation}

Our goal is to learn the end-to-end class $\mathcal{F}^{(\Tout)}_{\sete}$. We consider the \emph{realizable} setting: there exists a ground-truth generator $f_* \in \mathcal{F}$. Given an input distribution $\mathcal{D}$ over $\mathcal{X}$, we label each input $X \sim \mathcal{D}$ with the full CoT trace $Z = {f_*}^{(\Tout)}(X)[-\Tout{:}]$, and seek to output \fixed{a final-token predictor} with small end-to-end error. Realizability means $f_*$ predicts every token in the trace correctly: $f_*(X \concat Z[{:}t]) = Z[t]$ for all $t \in \{0,\ldots,\Tout-1\}$. Formally, adopting the definition from~\citet{joshi2025cot}:

\begin{definition}[Realizable Chain-of-Thought Learnability]
\label{def:learnable-with-CoT}
\fixed{We say $\mathcal{F}^{(\Tout)}_{\sete}$ is \emph{$\sCoT$-learnable} with sample complexity $m(\varepsilon,\delta)$ if there exists a learning rule $A:(\mathcal X\times\Sigma^{\Tout})^*\to\Sigma^{\mathcal X}$ such that for every distribution $\mathcal D$ over $\mathcal X$ and $f_*\in\mathcal F$, given $m\ge m(\varepsilon,\delta)$ samples $S=\{(X_i,Z_i)\}_{i=1}^m$ with $X_i\sim\mathcal D$ and $Z_i={f_*}^{(\Tout)}(X_i)[-\Tout{:}]$, with probability at least $1-\delta$,}
\begin{equation}
\fixed{
\Pr_{X\sim\mathcal D}\!\left[A(S)(X)\neq {f_*}^{(\Tout)}(X)[-1]\right]\le \varepsilon .
}
\end{equation}
\end{definition}

\paragraph{Learning Algorithm.}
When full CoT traces are available, a natural approach is to train the model to match them step by step. This leads to teacher forced training (\Cref{subsec:autoregressive}) as introduced in \Cref{sec:background}. In the realizable setting, it reduces to finding any generator consistent with all observed CoT traces. Formally, we define the learner as
\begin{equation}
\begin{aligned}
\mathrm{Cons}_{\sCoT}(S):\quad
&\text{Return } \hat f\in\mathcal F \text{ such that}\\
&\hat f(X_i\concat Z_i[{:}t])=Z_i[t]
\quad\text{for all }(X_i,Z_i)\in S,\ t\in\{0,\ldots,\Tout-1\}.
\end{aligned}
\end{equation}
\fixed{This is a proper learner: it returns a generator $\hat f\in\mathcal F$, whose induced final-token predictor is $X\mapsto \hat f^{(\Tout)}(X)[-1]$.}
In standard PAC learning, training and test objectives coincide, so ERM is universal, achieving optimal distribution-free sample complexity. For CoT learning, however, there is a fundamental asymmetry: the learner is trained on full traces but evaluated only on the final token. The intermediate tokens carry information that could guide learning, but a learning algorithm might exploit them without matching every step, and at test time, a different reasoning path could still lead to the correct final answer. This raises a natural question: is teacher forcing the universal approach for CoT learning? 

\fixed{A direct generic bound would lose an extra logarithmic factor in $\Tout$. The next two subsections remove this loss for Transformers and show that the resulting rate is optimal.}

\subsection{Dual Complexity Measures} \label{subsec:complexity-measures}

To characterize CoT sample complexity, we introduce two measures that capture the two sides of the train-test asymmetry: the \emph{trace shattering dimension} $\TSdim$ for the upper bound, and the \emph{answer shattering dimension} $\ASdim$ for the CoT lower bound. 

The first measure captures the complexity of predicting the \emph{entire trace} correctly, which requires that every token along the reasoning path match the target. Equivalently, it is the VC dimension of the binary loss class that asks whether a generated trace matches a reference trace.

\begin{definition}[Trace Shattering Dimension]
\label{def:tsdim}
A set $S = \{X_1, \ldots, X_n\} \subseteq \mathcal{X}$ is \emph{trace shattered} by $\mathcal{F}$ if there exists a trace assignment $R_S: S \to \Sigma^{\Tout}$ such that for every $b \in \{0,1\}^n$, some $f_b \in \mathcal{F}$ satisfies for all $i \in [n]$:
\begin{equation}
b_i = 1 \Longrightarrow f_b^{(\Tout)}(X_i)[-\Tout{:}] = R_S(X_i), \qquad
b_i = 0 \Longrightarrow f_b^{(\Tout)}(X_i)[-\Tout{:}] \neq R_S(X_i).
\end{equation}
The \emph{trace shattering dimension} $\TSdim(\mathcal{F}; \mathcal{X}, \Tout)$ is the size of the largest such set.
\end{definition}

The second measure captures the complexity of predicting only the \emph{final answer}, namely the last token of the trace. The key constraint is that all generators share the same trace prefix, so the intermediate tokens provide no information about the final answer.

\begin{definition}[Answer Shattering Dimension]
\label{def:tidim}
\fixed{Assume $0,1\in\Sigma$. A set $S=\{X_1,\ldots,X_n\}\subseteq\mathcal X$ is \emph{answer shattered} if there exists a prefix assignment $R_S:S\to\Sigma^{\Tout-1}$ such that for every $b\in\{0,1\}^n$, some $f_b\in\mathcal F$ makes the first $\Tout-1$ generated tokens equal to this prefix and sets the final token to $b_i$:}
\begin{equation}
{f_b}^{(\Tout)}(X_i)[-\Tout{:}-1] = R_S(X_i), \qquad {f_b}^{(\Tout)}(X_i)[-1] = b_i.
\end{equation}
The \emph{answer shattering dimension} $\ASdim(\mathcal{F}; \mathcal{X}, \Tout)$ is the size of the largest such set.
\end{definition}

\begin{figure}[t]
\centering
\begin{tikzpicture}[
  >=Latex,
  font=\small,
  tok/.style={circle,draw,thick,minimum size=5.5mm,inner sep=0pt},
  gray/.style={circle,draw,thick,fill=gray!35,minimum size=5.5mm,inner sep=0pt},
  green/.style={circle,draw,thick,fill=green!40,minimum size=5.5mm,inner sep=0pt,font=\scriptsize\bfseries},
  red/.style={circle,draw,thick,fill=red!40,minimum size=5.5mm,inner sep=0pt,font=\scriptsize\bfseries},
  lab/.style={font=\footnotesize}
]

\def\sp{0.8}  
\def\mark{\checkmark}
\def\cross{$\times$}

\node[font=\small\bfseries,anchor=east] at (-0.8, 0) {$\TSdim$:};

\node[lab] at (2.5*\sp, 0.55) {$b = 1$};
\node[gray] (t1_0) at (0*\sp, 0) {};
\node[gray] (t1_1) at (1*\sp, 0) {};
\node[green] (t1_2) at (2*\sp, 0) {\mark};
\node[green] (t1_3) at (3*\sp, 0) {\mark};
\node[green] (t1_4) at (4*\sp, 0) {\mark};
\node[green] (t1_5) at (5*\sp, 0) {\mark};

\draw[decorate,decoration={brace,amplitude=3pt,mirror}] (-0.18, -0.5) -- (1*\sp+0.18, -0.5) node[midway,below=3pt,font=\footnotesize] {input};
\draw[decorate,decoration={brace,amplitude=3pt,mirror}] (2*\sp-0.18, -0.5) -- (4*\sp+0.18, -0.5) node[midway,below=3pt,font=\footnotesize] {thoughts};
\draw[decorate,decoration={brace,amplitude=3pt,mirror}] (5*\sp-0.18, -0.5) -- (5*\sp+0.18, -0.5) node[midway,below=3pt,font=\footnotesize] {answer};

\def\xoff{6.0}
\node[lab] at (\xoff+2.5*\sp, 0.55) {$b = 0$};
\node[gray] (t0_0) at (\xoff+0*\sp, 0) {};
\node[gray] (t0_1) at (\xoff+1*\sp, 0) {};
\node[red] (t0_2) at (\xoff+2*\sp, 0) {\cross};
\node[green] (t0_3) at (\xoff+3*\sp, 0) {\mark};
\node[red] (t0_4) at (\xoff+4*\sp, 0) {\cross};
\node[green] (t0_5) at (\xoff+5*\sp, 0) {\mark};

\draw[decorate,decoration={brace,amplitude=3pt,mirror}] (\xoff-0.18, -0.5) -- (\xoff+1*\sp+0.18, -0.5) node[midway,below=3pt,font=\footnotesize] {input};
\draw[decorate,decoration={brace,amplitude=3pt,mirror}] (\xoff+2*\sp-0.18, -0.5) -- (\xoff+4*\sp+0.18, -0.5) node[midway,below=3pt,font=\footnotesize] {thoughts};
\draw[decorate,decoration={brace,amplitude=3pt,mirror}] (\xoff+5*\sp-0.18, -0.5) -- (\xoff+5*\sp+0.18, -0.5) node[midway,below=3pt,font=\footnotesize] {answer};

\def\yoff{-2.4}
\node[font=\small\bfseries,anchor=east] at (-0.8, \yoff) {$\ASdim$:};

\node[lab] at (2.5*\sp, \yoff+0.55) {$b = 1$};
\node[gray] (a1_0) at (0*\sp, \yoff) {};
\node[gray] (a1_1) at (1*\sp, \yoff) {};
\node[green] (a1_2) at (2*\sp, \yoff) {\mark};
\node[green] (a1_3) at (3*\sp, \yoff) {\mark};
\node[green] (a1_4) at (4*\sp, \yoff) {\mark};
\node[green] (a1_5) at (5*\sp, \yoff) {\mark};

\draw[decorate,decoration={brace,amplitude=3pt,mirror}] (-0.18, \yoff-0.5) -- (1*\sp+0.18, \yoff-0.5) node[midway,below=3pt,font=\footnotesize] {input};
\draw[decorate,decoration={brace,amplitude=3pt,mirror}] (2*\sp-0.18, \yoff-0.5) -- (4*\sp+0.18, \yoff-0.5) node[midway,below=3pt,font=\footnotesize] {thoughts};
\draw[decorate,decoration={brace,amplitude=3pt,mirror}] (5*\sp-0.18, \yoff-0.5) -- (5*\sp+0.18, \yoff-0.5) node[midway,below=3pt,font=\footnotesize] {answer};

\node[lab] at (\xoff+2.5*\sp, \yoff+0.55) {$b = 0$};
\node[gray] (a0_0) at (\xoff+0*\sp, \yoff) {};
\node[gray] (a0_1) at (\xoff+1*\sp, \yoff) {};
\node[green] (a0_2) at (\xoff+2*\sp, \yoff) {\mark};
\node[green] (a0_3) at (\xoff+3*\sp, \yoff) {\mark};
\node[green] (a0_4) at (\xoff+4*\sp, \yoff) {\mark};
\node[red] (a0_5) at (\xoff+5*\sp, \yoff) {\cross};

\draw[decorate,decoration={brace,amplitude=3pt,mirror}] (\xoff-0.18, \yoff-0.5) -- (\xoff+1*\sp+0.18, \yoff-0.5) node[midway,below=3pt,font=\footnotesize] {input};
\draw[decorate,decoration={brace,amplitude=3pt,mirror}] (\xoff+2*\sp-0.18, \yoff-0.5) -- (\xoff+4*\sp+0.18, \yoff-0.5) node[midway,below=3pt,font=\footnotesize] {thoughts};
\draw[decorate,decoration={brace,amplitude=3pt,mirror}] (\xoff+5*\sp-0.18, \yoff-0.5) -- (\xoff+5*\sp+0.18, \yoff-0.5) node[midway,below=3pt,font=\footnotesize] {answer};

\end{tikzpicture}
\caption{\textbf{Top} (Trace Shattering Dimension $\TSdim$): for $b=1$ the entire trace matches the ground-truth; otherwise some tokens differ. \textbf{Bottom} (Answer Shattering Dimension $\ASdim$): for each input, all generators share the same thought prefix and differ only in the final answer.}
\label{fig:shattering}
\end{figure}

\paragraph{Main Result.}
Since answer shattering is a stronger requirement (varying the final token while keeping the prefix fixed necessarily varies the full trace), we have $\ASdim \le \TSdim$. Unlike standard PAC learning, where the VC-dimension governs both bounds, here the upper bound scales with $\TSdim$ while the lower bound scales with $\ASdim$; the gap between them determines how tight this generic characterization is.

\begin{theorem}[Generic Sample Complexity Bounds for CoT]
\label{thm:cot_fundamental}
Fix any $(\mathcal{F}, \mathcal{X}, \Tout)$. There exist universal constants $C, c, \delta_0 > 0$ such that:
\begin{enumerate}
\item[(i)] For all $\varepsilon, \delta \in (0,1)$, $\mathcal{F}^{(\Tout)}_{\sete}$ is $\sCoT$-learnable (using the learning rule $\mathrm{Cons}_{\sCoT}$) with sample complexity
\begin{equation}
m(\varepsilon,\delta) \le C \cdot \frac{\TSdim(\mathcal{F}; \mathcal{X}, \Tout) \cdot \log(1/\varepsilon) + \log(1/\delta)}{\varepsilon}.
\end{equation}
\item[(ii)] Let $d_{\mathrm{AS}} := \ASdim(\mathcal{F}; \mathcal{X}, \Tout)$. \fixed{If $d_{\mathrm{AS}} \ge 2$, then} for all $\varepsilon \in (0, 1/8)$ and all $\delta \le \delta_0$, any $\sCoT$-learnability guarantee requires $m(\varepsilon,\delta) \ge c \cdot d_{\mathrm{AS}}/{\varepsilon}$.
\end{enumerate}
\end{theorem}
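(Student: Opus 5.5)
For part (i) I will reduce $\mathrm{Cons}_{\sCoT}$ to realizable ERM over a binary \emph{loss class}. For each $f\in\mathcal F$, define the trace-agreement indicator $\phi_f(X,Z) := \mathbf 1\bigl[f^{(\Tout)}(X)[-\Tout{:}]\neq Z\bigr]$ on pairs $(X,Z)$.

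The data distribution puts its mass on pairs $(X, Z_*(X))$ with $Z_*(X)={f_*}^{(\Tout)}(X)[-\Tout{:}]$. On that support I will work with the induced class $\mathcal H=\{X\mapsto \phi_f(X,Z_*(X)) : f\in\mathcal F\}$ of binary functions on $\mathcal X$.

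The first observation is that any $\hat f$ returned by $\mathrm{Cons}_{\sCoT}$ has $\phi_{\hat f}=0$ on every sample. Matching every teacher-forced step implies, by induction on $t$, that the autoregressive rollout reproduces $Z_i$ exactly. The second observation is that end-to-end error is bounded by trace error, since a wrong final token forces a wrong trace. So it suffices that every $h\in\mathcal H$ with zero empirical error has population mean at most $\varepsilon$.

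Next I bound $\mathrm{VCdim}(\mathcal H)\le \TSdim(\mathcal F;\mathcal X,\Tout)$. If $\mathcal H$ shatters $\{X_1,\dots,X_n\}$, take the trace assignment $R_S(X_i):=Z_*(X_i)$. Every pattern $b$ is then realized with $b_i=1$ iff the trace equals $R_S(X_i)$, which after flipping $b$ is exactly \Cref{def:tsdim}. The one subtlety is that $\mathcal H$ depends on $f_*$; the bound holds uniformly over $f_*$, which is all we need. The classical realizable uniform-convergence ($\varepsilon$-net) bound of \citet{blumer1989learnability} then gives $m=O((d\log(1/\varepsilon)+\log(1/\delta))/\varepsilon)$ with $d=\TSdim$, proving (i).

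For part (ii) I will adapt the standard Ehrenfeucht--Haussler--Kearns--Valiant lower bound. Let $X_1,\dots,X_d$ be answer shattered with prefix assignment $R_S$, where $d=d_{\mathrm{AS}}\ge2$. Put mass $1-8\varepsilon$ on $X_1$ and spread $8\varepsilon/(d-1)$ uniformly over $X_2,\dots,X_d$. Choose the target $f_*=f_b$ with $b$ uniform over $\{0,1\}^d$ (or fixing $b_1$).

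The key point, and the reason $\ASdim$ is the right quantity, is that every $f_b$ emits the identical CoT prefix $R_S(X_i)$ on each $X_i$. So the training data $(X_i,Z_i)$ reveals only $b_i$ for the points actually sampled and carries no information about unseen points. This reduces the problem exactly to the ordinary PAC lower bound with labels $b$.

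With $m\le c\,d/\varepsilon$ samples, in expectation at least half of $X_2,\dots,X_d$ are unseen. On each unseen point any learner errs with probability $1/2$ over $b$, giving expected error at least about $2\varepsilon$. A Markov/reverse-Markov argument on the bounded error (at most $8\varepsilon$) then gives error above $\varepsilon$ with constant probability $\delta_0$ for some fixed $b$.

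The main obstacle I expect is the bookkeeping in (ii). I must make sure the learner's input is truly independent of unseen $b_i$, which holds because prefixes are fixed and $Z_i[-1]=b_i$ is only visible for sampled $X_i$. I also need to tune the constants so that $\varepsilon<1/8$ and $d\ge2$ suffice. Part (i) is routine once the reduction to $\mathcal H$ is set up.
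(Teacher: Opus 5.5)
Your proposal is correct and follows essentially the same route as the paper. In part (i) you bound the VC dimension of the trace-loss class by $\TSdim$, apply the realizable consistency bound, and use that end-to-end error is dominated by trace error. In part (ii) you use the same heavy-point/rare-points distribution on an answer-shattered set, where the fixed prefix $R_S$ makes the final bit of an unseen point independent of the sample, followed by a reverse-Markov step.

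The differences are cosmetic. You restrict the loss class to the graph of $X\mapsto Z_*(X)$ and prove only the needed inequality $\mathrm{VCdim}(\mathcal H)\le\TSdim$, which sidesteps the paper's remark about distinct inputs. You also put mass $8\varepsilon$ rather than $4\varepsilon$ on the rare points; this still works for $\varepsilon<1/8$ with $m\le (d-1)/(16\varepsilon)$ and gives failure probability at least $1/7$.
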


The proof is deferred to \Cref{app:cot-fundamental-proof}. For the upper bound, we reduce $\TSdim$ to the CoT trace loss class and apply standard VC theory. For the lower bound, we construct a hard distribution on an answer shattered set; since the trace prefix is fixed, each observed trace reveals only its own final token, leaving unseen points unpredictable.

\subsection{Universality of Teacher Forcing for Transformers} \label{subsec:cot-Transformers}

We now instantiate \Cref{thm:cot_fundamental} for Transformer-based next-token generators. The proof follows the same partition-based analysis as \Cref{sec:vc-results}, applied to each intermediate step of the CoT trace. See \Cref{app:cot-upper-bound-proof} for details.

\begin{theorem}[CoT \fixed{Trace-Shattering} Upper Bound]
\label{thm:cot_upper_bound}
\fixed{For any $L\ge1$, $W\ge L$, finite alphabet $|\Sigma|=K$, and $\mathcal{F}\in\Ffam_{L,W}$,} fix $\mathcal{X} \subseteq \Sigma^{\Tin}$, lengths $\Tin,\Tout$, and $\Ttotal := \Tin + \Tout$. \fixed{Then}
\begin{equation}
    \TSdim(\mathcal{F}; \mathcal{X}, \Tout) = O\left( WL \log(\Ttotal WLK) \right).
\end{equation}
\fixed{In particular, this is $O(WL\log(\Ttotal W))$ whenever $K\le \mathrm{poly}(W)$.}
\end{theorem}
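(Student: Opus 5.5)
We bound $\TSdim$ through a growth-function argument, in the same piecewise-polynomial style as the proof of \Cref{thm:upperbound}. Fix a set $S=\{X_1,\ldots,X_n\}$ and a trace assignment $R_S$ that witnesses trace shattering. The key observation is that, for each $f_\theta$, the event $f_\theta^{(\Tout)}(X_i)[-\Tout{:}]=R_S(X_i)$ can be checked by teacher forcing on the \emph{fixed} prefixes $X_i\concat R_S(X_i)[{:}t]$ for $t=0,\ldots,\Tout-1$. By induction on $t$, the generated trace equals $R_S(X_i)$ if and only if $f_\theta(X_i\concat R_S(X_i)[{:}t])=R_S(X_i)[t]$ for every $t$. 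So the shattering pattern $b\in\{0,1\}^n$ is a function of the vector of next-token predictions of $f_\theta$ on the $n\Tout$ fixed contexts $\{X_i\concat R_S(X_i)[{:}t]\}_{i,t}$. This removes the autoregressive feedback: none of these inputs depend on $\theta$. It is exactly what avoids the extra $\log\Tout$ factor of the generic composition argument. We then obtain
\[
2^n \le \#\bigl\{\bigl(\mathbf 1[f_\theta(X_i\concat R_S(X_i)[{:}t])=R_S(X_i)[t]]\bigr)_{i,t} : \theta\bigr\}.
\]

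Next, we bound the right-hand side by counting sign patterns of polynomials in $\theta$, reusing the partition from \Cref{app:upper-bound-proof}. We work with $N:=n\Tout$ inputs, each of length at most $\Ttotal$. Each input is a prefix run on the context-length-$\Ttotal$ architecture using the initial segment of $\mathsf{PE}$. The token embedding enters linearly in $\theta$, since it is a column of $W_{\mathsf{TE}}$ plus a fixed vector, so the layer-$0$ embeddings are degree-$1$ polynomials.

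Layer by layer, the branching polynomials are as follows. The ReLU pre-activations number $O(N\Ttotal W)$ per layer. The attention score differences $g^{(\ell)}_{n,i,p,q}$ number $O(NW\Ttotal^3)$ per layer, over at most $O(W)$ heads, $\Ttotal$ queries and $\binom{\Ttotal}{2}$ pairs. We refine the partition inductively, exactly as in \citet{bartlett2019nearly}. Within a cell of layers $<\ell$, the layer-$\ell$ quantities are polynomials of degree $O(\ell)$ in the $O(W)$ parameters of layers $\le\ell$. Hard-attention averaging divides by $|M_i|$, but $|M_i|$ is constant on a cell, so the quantities remain polynomial. Warren's theorem then multiplies the cell count by $(O(N\Ttotal^3 W\ell/W_\ell))^{W_\ell}$.

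The final decision is the indicator that $\arg\max_x [W_{\mathsf{DE}}H^{(L)}]_x$ equals a specified token. With fixed deterministic tie-breaking, this is determined by the signs of the $O(K)$ differences $[W_{\mathsf{DE}}H^{(L)}]_{z}-[W_{\mathsf{DE}}H^{(L)}]_x$. These add $O(NK)$ polynomials of degree $O(L)$, and this step is where $K$ enters the logarithm. Multiplying all factors and using the weighted AM--GM/Jensen step of \citet{bartlett2019nearly} gives
\[
n \le O\Bigl(\textstyle\sum_\ell W_\ell\Bigr)\cdot \log\bigl(n\Tout\Ttotal^3 K L W\bigr)\le O(WL)\log(n\Ttotal WLK),
\]
where $\sum_\ell(\text{cumulative parameters})\le WL$. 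Solving $n\le a\log(bn)$ via the standard lemma $n=O(a\log(ab))$ yields $n=O(WL\log(\Ttotal WLK))$. With $L\le W$ and $K\le\mathrm{poly}(W)$, this simplifies to $O(WL\log(\Ttotal W))$.

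The main obstacle is making the per-layer degree and cell accounting rigorous when the number of positions varies across the $N$ inputs. Attention is causal on prefixes of different lengths, and the skip connections and residual projections must keep the degree at $O(\ell)$ rather than letting it blow up. I would handle this by treating every prefix as its own input, padded only conceptually, so that counts use $\Ttotal$ as the uniform bound. I would also verify that the averaging over ties and the residual additions are exactly degree-preserving within a cell. The reduction in the first paragraph is the crucial conceptual step. The rest is the single-step VC computation applied with $N=n\Tout$, which contributes only $\log\Tout$ inside the logarithm, and no extra multiplicative factor.
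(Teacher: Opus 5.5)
Your proposal is correct and follows essentially the same route as the paper. Both arguments reduce trace shattering to next-token predictions on the $n\Tout$ fixed teacher-forcing contexts, reuse the layerwise Warren partition of the VC upper bound with $N=n\Tout$ inputs of length at most $\Ttotal$, and count argmax outcomes within each final cell. The differences are cosmetic and none changes the $O(WL\log(\Ttotal WLK))$ bound: you work directly with the witness $R_S$ rather than via $\TSdim=\mathrm{VCdim}(\mathcal L_{\mathrm{CoT}})$, compare only the $K-1$ logit differences against the reference token instead of all $\binom{K}{2}$ pairs, and use Bartlett et al.'s layerwise parameter counts instead of a uniform $W$.
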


\paragraph{A Scratchpad Lower Bound.}
The upper bound shows that $\mathrm{Cons}_{\sCoT}$ \emph{suffices}. But is it \emph{necessary}? Could a smarter algorithm achieve lower sample complexity by exploiting the structure of CoT traces? We show that the answer is \emph{no}: any algorithm requires $\Omega(WL \log \Ttotal / \varepsilon)$ samples for Transformers.

The key idea is that CoT traces can act as a label-invariant scratchpad. The generator writes intermediate tokens that depend on the input but not on the labeling, so observing the trace prefix reveals no information about the final token. Nevertheless, the model can use this scratchpad internally, for example as a lookup table, to shatter $\Omega(\log \Ttotal)$ final answers. We formalize this intuition:

\begin{theorem}[CoT Answer-Shattering Lower Bound via Scratchpad]
\label{thm:cot_shattering}
For any $N \ge 2$, there exists a constant-size alphabet $\Sigma$, an input length $\Tin = \mathcal{O}(\log\log N)$, a generation length $\Tout = \Theta(N \log N)$, and a class $\mathcal{F}$ of constant-depth, constant-width hard-attention Transformers such that, with $\Ttotal := \Tin + \Tout$,
\begin{equation}
\ASdim(\mathcal{F}; \Sigma^{\Tin}, \Tout) = \Omega(\log \Ttotal).
\end{equation}
\end{theorem}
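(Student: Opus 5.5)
The plan is to exhibit an explicit generator in which the CoT trace is a fixed, input-independent lookup table, and only the final step depends on the labeling. Set $B:=\lfloor\log_2 N\rfloor$ and let the shattered set be $S=\{X_1,\ldots,X_B\}$. Each input $X_i\in\{0,1\}^{\Tin}$ is the binary encoding of the index $i$, so $\Tin=\lceil\log_2 B\rceil+O(1)=\mathcal O(\log\log N)$. The common prefix assignment $R_S(X_i)$ will be the binary counter table. It consists of $2^B$ consecutive blocks of length $B$, and the block with index $s$ lists $\mathrm{bit}_1(s),\ldots,\mathrm{bit}_B(s)$, low bit first. This gives $\Tout-1=B\,2^B=\Theta(N\log N)$. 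Given a labeling $b\in\{0,1\}^B$, put $s_b:=\sum_k b_k2^{k-1}$. The generator $f_b$ should produce exactly this table for every input and then output, as its last token, the $i$-th bit of block $s_b$, which equals $b_i$. Since all the $f_b$ share the same trace prefix, this gives $\ASdim\ge B=\Omega(\log N)=\Omega(\log\Ttotal)$.

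\emph{Step 1 (writing the table with constant size).} I would use a constant alphabet of tokens carrying a bit together with a carry flag, for example $\{0,1\}\times\{0,1\}$, plus input and marker tokens. The fixed positional encoding $\mathsf{PE}$ is allowed to be arbitrary. I would let it supply $(t,t^2)$, the offset $t\bmod B$, a flag for ``input region'', a flag for ``last step'', and the weights $2^k$ at the input positions. To generate position $t$, one hard-attention head retrieves the token at position $t-B$; this is the previous number's bit in the same slot. It does so with query $(t-B,1)$ and keys $(2t',-t'^2)$, exactly the nearest-integer trick from the proof of \Cref{thm:lowerbound}. A second head retrieves the token at position $t-1$ to get the incoming carry. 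The carry is forced to $1$ when $t\bmod B=0$, as read from $\mathsf{PE}$. An $O(1)$-size ReLU FFN then computes the new bit as the old bit XOR the carry, and the outgoing carry as the old bit AND the carry. The first block is all zeros and can be produced by a $\mathsf{PE}$ flag. None of these weights depends on $b$, so the trace is label-invariant.

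\emph{Step 2 (the final answer).} At the last position a flag in $\mathsf{PE}$ activates the answer branch. First, an average hard-attention head whose keys select exactly the $\Tin$ input positions, with values $x_k2^k$, returns $\frac1{\Tin}\sum_k x_k2^k$. Because $\Tin$ is fixed, a fixed rescaling recovers $i$. Second, the FFN forms the register $r=\Tin+B\,s_b+(i-1)$, where $s_b$ enters as a trainable bias. This is the only parameter that varies with $b$, which keeps the class constant-size. A final retrieval head with the same $(2t,-t^2)$ keys attends to position $r$ and reads the bit stored there, and the decoder emits it. On non-final positions the gate suppresses this branch through ReLU gating against the $\mathsf{PE}$ flag, so the prefix is unaffected.

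The main obstacle I expect is making the gating and the carry logic exact with hard attention over a growing causal prefix. The difficulty is ensuring that every retrieval has a unique maximizer, since integer positions make $2rt-t^2$ uniquely maximized, and that the average head over the input region is unaffected by generated tokens. I would handle the second point by giving the input-region keys a large additive bonus from $\mathsf{PE}$, so that the average is over input positions only. Depth and width stay constant throughout, which confirms the class is of constant-depth, constant-width Transformers.
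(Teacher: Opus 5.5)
Your construction is correct, but it recovers the prompt index $i$ differently from the paper. The paper never decodes the binary prompt at the final step. Its trace begins with a DECODE segment that counts down from $i-1$ to $0$, using decorated borrow and zero-flag tokens, so that the anchor $\mathrm{END0}$ lands at position $p_0(i)=(i+1)L_{\mathrm{in}}$. The universal table is written after this anchor, so its absolute location depends on $i$. The readout is then two unique-maximizer retrievals: first $\mathrm{END0}$, then position $(1+1/L_{\mathrm{in}})v+sL_{\mathrm{out}}$.

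You instead make the whole prefix input-independent, place the table at fixed positions starting at $\Tin$, and read $i$ in one shot with an average-hard head whose maximizer set is exactly the prompt region. Your route gives a shorter scratchpad with no decrement logic, separators or anchors, and a single-hop readout. The price is that it relies on the $1/|M_i|$ averaging semantics and on a richer fixed $\mathsf{PE}$ (phase flags, slot offsets, weights $2^k$). The paper uses only the features $(p,-p^2,1)$ and retrievals whose relevant maximizers are unique, so its construction is insensitive to leftmost/rightmost tie-breaking. Your one-shot averaging decoder has no direct analogue there, which is presumably why the paper converts the binary index into a position inside the trace. Since the paper's model is average-hard attention with an arbitrary fixed $\mathsf{PE}$, your use of both is legitimate.

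Two small repairs are needed. First, $x_k2^k$ is not an affine function of $\mathsf{TE}(x_k)+\mathsf{PE}(k)$, so the value needs a preceding position-wise ReLU gadget, e.g.\ $\mathrm{ReLU}(2^k-C(1-x_k))$ with $C\ge 2^{\Tin}$. Second, your slot-offset and first-block flags should be measured from the table start $\Tin$ rather than as $t \bmod B$. Under the next-position convention, they should also be stored at position $t-1$ for the token generated at position $t$.
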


\paragraph{Proof Sketch.}
We shatter $m:=\lfloor\log_2 N\rfloor$ inputs. The generator writes an input-dependent but label-independent scratchpad, then uses one final readout to recover the requested label bit from a universal table. Let $M:=2^m=\Theta(N)$ and $\ell:=\lceil\log_2 m\rceil$. For readability, write the scratchpad using the symbols $\{0,1,\#,\mathrm{END0},\mathrm{END1}\}$, where $\#$ is a row separator and $\mathrm{END0},\mathrm{END1}$ are anchor tokens. The formal proof uses constantly many decorated copies of these tokens to store finite-state carry/borrow information. As in \Cref{thm:lowerbound}, we use positional features containing $\tau$ and $\tau^2$.

Concretely, each \textbf{INPUT} $X_i:=\mathrm{bin}_\ell(i)\#\in\Sigma^{\ell+1}$ encodes the index $i$ in LSB-first binary. The generated scratchpad has two segments:
\begin{itemize}
    \item \textbf{DECODE}: $\mathrm{bin}_\ell(i{-}1) \# \to \cdots \to \mathrm{bin}_\ell(0) \# \to \mathrm{END0}$. Counts down from $i-1$ to $0$. The anchor token $\mathrm{END0}$ is placed at position $p_0(i) = (i+1)(\ell+1)$; the integer $i$ is encoded into $\mathrm{END0}$'s positional encoding.
    \item \textbf{TABLE}: $\mathrm{bin}_m(0) \# \to \mathrm{bin}_m(1) \# \to \cdots \to \mathrm{bin}_m(M{-}1) \# \to \mathrm{END1}$. Enumerates all $M = 2^m$ binary strings in order, serving as a universal lookup table \emph{identical for all inputs and all labelings}.
\end{itemize}
\textbf{Final readout.} The labeling $y\in\{0,1\}^m$ is encoded into parameter $s:=\sum_j y_j2^j$, but $s$ is gated to have no effect until the final step, so the scratchpad is label-invariant. Let $L_{\mathrm{in}}:=\ell+1$ and $L_{\mathrm{out}}:=m+1$ be the row lengths in the input/decode and table segments. At the final step, $\mathsf{Attn}$ retrieves anchor position $p_0(i)$ from $\mathrm{END0}$; since $p_0(i)=(i+1)L_{\mathrm{in}}$, the $\mathsf{FFN}$ recovers $i$ and computes $r=p_0(i)+1+sL_{\mathrm{out}}+i$, the position of the $i$-th bit in the $s$-th table row. A final attention lookup fetches that bit. Since the $s$-th row stores $\mathrm{bin}_m(s)$ and $y=\mathrm{bin}_m(s)$, this bit equals $y_i$.

Since different labelings produce identical traces except for the final token, this achieves answer shattering of $m$ samples. The total generation length is $\Tout = |\mathrm{DECODE}| + |\mathrm{TABLE}| + c = \mathcal{O}(m \cdot \ell) + \Theta(M \cdot m) + c = \Theta(N \log N)$. Thus $m = \Omega(\log \Tout)$. Since $\Tin = \mathcal{O}(\log\log N)$, we have $\Ttotal = \Tin + \Tout = \Theta(\Tout)$ and hence $m = \Omega(\log \Ttotal)$, giving $\ASdim \ge \Omega(\log \Ttotal)$. \hfill$\blacksquare$

\paragraph{Recursive Amplification.} 
Amplifying the scratchpad construction by the recursive retrieval idea of \Cref{thm:lowerbound} gives the following CoT lower bound. The amplification uses $\Theta(W)$ independent prompt groups and $\Theta(L)$ retrieval rounds; each round contributes one scratchpad-style answer-shattering copy.

\begin{corollary}[CoT Lower Bounds for Transformers]
\label{cor:cot-tidim-lower}
\label{cor:cot-sample-lower}
There exist constants $C_0,L_0,c,\delta_0>0$ such that for every $L\ge L_0$, $W\ge C_0L$, and sufficiently large $\Tout$, there exist a finite alphabet $\Sigma$ with $K:=|\Sigma|\le\mathrm{poly}(W)$, an input length $\Tin=O(\log\log\Tout)$, a prompt domain $\mathcal X\subseteq\Sigma^{\Tin}$, and a class $\mathcal F\in\Ffam_{L,W}$ such that, with $\Ttotal=\Tin+\Tout$,
\begin{equation}
\ASdim(\mathcal F;\mathcal X,\Tout)
\ge c\,WL\log\Ttotal.
\end{equation}
Moreover, taking the worst case over token-prompt Transformer classes in $\Ffam_{L,W}$, for all $\varepsilon\in(0,1/8)$, the $\sCoT$-learning sample complexity satisfies $m(\varepsilon,\delta_0)\ge c\,WL\log(\Ttotal W/L)/\varepsilon$. Here ``sufficiently large'' includes $\Tout\ge C WL\log(W/L)$, which is needed only for the token-prompt ReLU witness.
\end{corollary}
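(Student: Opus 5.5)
The corollary has two parts: an answer-shattering bound and a sample-complexity bound. I would prove the first by a direct construction and then derive the second from \Cref{thm:cot_fundamental}(ii) together with a ReLU witness.

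\textbf{Construction.} Combine the recursive retrieval gadget of \Cref{thm:lowerbound} with the scratchpad of \Cref{thm:cot_shattering}.
\begin{itemize}
\item Take $n=\Theta(W)$ prompt groups. Prompts are $X_{j,i}$ for $j\in[n]$ and $i\in[m]$, with $m=\lfloor\log_2 N\rfloor$ and $N$ chosen so that $\Tout=\Theta(N\log N)$. The group index $j$ is written with a few extra tokens, which keeps $K\le\mathrm{poly}(W)$ and $\Tin=O(\log\log\Tout)$.
\item The generator writes the same label-independent scratchpad as in \Cref{thm:cot_shattering}: a DECODE segment that places the anchor $\mathrm{END0}$ at a position encoding $i$, followed by the universal TABLE of all $m$-bit strings.
\item Labels are indexed by $(j,\ell,i)$ with $\ell\in[L']$ and $L'=\Theta(L)$. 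For each pair $(j,\ell)$, the labels $y_{j,\ell,\cdot}$ are packed into a row index $s_{j,\ell}<2^m$.
\item A trainable affine map, costing $\Theta(n)$ parameters, sends $j$ to the base-$(2\Ttotal)$ register $r_0=\sum_{\ell}s_{j,\ell}(2\Ttotal)^{-(\ell-1)}$. Gating keeps it inactive until the final step, exactly as in \Cref{thm:cot_shattering}.
\item At the final step, $L'$ rounds of (attention match, FFN shift) run as in \Cref{thm:lowerbound}. Round $k$ extracts $s_{j,k}$ and uses the $\tau,\tau^2$ positional scores to fetch the bit at position $p_0(i)+1+s_{j,k}L_{\mathrm{out}}+i$. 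A layer-indexed gate keeps only the bit from round $\ell$.
\end{itemize}
The shattered set then has size $nL'm=\Theta(WL\log\Tout)=\Theta(WL\log\Ttotal)$.

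\textbf{Encoding the round index.} The sample identifier must also carry $\ell$. I would either let the prompt carry $\ell$, giving prompts $X_{j,\ell,i}$, or fold $\ell$ into the group index. Either way the prompt length stays $O(\log\log\Tout)$ once $n,L\le\mathrm{poly}(\Tout)$ is handled by the alphabet. This is where $K\le\mathrm{poly}(W)$ is used: the group and round identifiers become single tokens.

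\textbf{Trace invariance.} I would check that every intermediate token is independent of $y$. The only $y$-dependent parameters are the entries of the affine compression map. These feed a register that is multiplied by a gate which is zero except when the query position holds the final-step marker. That marker can be the last token $\mathrm{END1}$, whose position is label-independent.

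\textbf{Main obstacle.} The step I expect to be hardest is running the DECODE/TABLE generation and the $L'$ retrieval rounds with only $L$ layers and $O(L)$ backbone parameters, without the two interfering. I would allocate a constant number of layers to the scratchpad generator, which is constant-depth by \Cref{thm:cot_shattering}. The remaining $L-O(1)$ layers carry the retrieval chain through residual connections, and at non-final positions the chain's outputs are zeroed. This is why the statement requires $L\ge L_0$.

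\textbf{Precision of the register.} The register must be read exactly. With base $2\Ttotal$ and addresses below $\Ttotal$, each shift exposes the next address within distance $1/2$, so the nearest-integer argument of \Cref{thm:lowerbound} applies.

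\textbf{Sample-complexity bound.} Since $\ASdim\ge2$, \Cref{thm:cot_fundamental}(ii) gives $m(\varepsilon,\delta_0)\ge c\,WL\log\Ttotal/\varepsilon$. For the $\log(W/L)$ term, I would take the ReLU-subclass witness of \Cref{thm:lowerbound}, which has VC dimension $\Omega(WL\log(W/L))$. I would realise it as a token-prompt class: inputs are encoded as tokens, and the chain-of-thought copies the input for $\Tout-1$ steps and then outputs the ReLU network's label. Every trace prefix is then label-independent, so it is answer-shattered, and this is the place the assumption $\Tout\ge CWL\log(W/L)$ is needed. Applying \Cref{thm:cot_fundamental}(ii) again and taking the maximum of the two bounds gives $c\,WL\log(\Ttotal W/L)/\varepsilon$.
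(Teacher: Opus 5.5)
Your answer-shattering construction is essentially the paper's. It has prompts carrying a group token and a round token in front of $\mathrm{bin}_\ell(i)\#$, a packed base-$2\Tout$ register stored in the group token's embedding, and $\Theta(L)$ match-and-shift rounds that are active only at the final step. The difference is the readout. The paper's rounds only peel off the digits $s_{j,u}$ and gate-accumulate the selected one into $\hat s=s_{j,k}$. They then do a single $\mathrm{END0}$ lookup and a single table fetch at $v+L_{\mathrm{out}}\hat s+(v-H)/L_{\mathrm{in}}$. You instead fetch a table bit in every round and gate the bits as in \Cref{thm:lowerbound}. Both work; yours just needs two attention sublayers per round.

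Two small fixes are needed. First, only your first option for the round index fits the budget: folding $\ell$ into the group token needs $\Theta(nL)$ trainable embedding entries. Second, the final-step gate should be an equality gate on the absolute position $\Tin+\Tout-1$, read from the fixed positional encoding. With a fixed $\Tout$ and the paper's layout, $\mathrm{END1}$ is followed by an $i$-dependent amount of padding, so it is not the last visible token.

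The genuine gap is the token-prompt ReLU witness, which is exactly what the hypothesis $\Tout\ge C\,WL\log(W/L)$ is for. The ReLU subclass of \Cref{lem:relu-subclass-app} shatters $D=\Theta(WL\log(W/L))$ real-valued \emph{embedding} inputs $a_1,\dots,a_D$. Saying ``inputs are encoded as tokens'' does not explain how these reach a token-prompt Transformer, and the natural reading fails. One token per point gives $W_{\mathsf{TE}}$ alone at least $D$ trainable entries, far more than $W$. A short digit encoding would need a decoding stage that you do not describe.

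The missing idea is that the positional encoding is arbitrary, fixed, and not counted as parameters. The paper takes prompts of length $D+1$ over a six-letter alphabet, with a $\mathtt{mark}$ token at position $i$, and writes $a_i$ into $\mathsf{PE}(i)$. At the final step a constant-size head retrieves the marker and copies these coordinates before simulating $h$. At every earlier step a positional gate emits a fixed dummy token.

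This is also the real reason for the length hypothesis, which you attribute to the copying trace without explanation. Taking $\Tout\gtrsim D$ keeps the length-$(D+1)$ prompt a constant fraction of $\Ttotal$. Then $\Ttotal=\Theta(\Tout)$ for both witnesses, and the larger of the two bounds is $\Omega(WL\log(\Ttotal W/L)/\varepsilon)$.

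With the inputs supplied this way, your final step is valid: the $D$ prompts are answer-shattered, and \Cref{thm:cot_fundamental}(ii) applies. It is interchangeable with the paper's route, which reduces supervised learning directly to CoT learning and then applies \Cref{thm:fundamental-vc}.
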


\paragraph[Universality of Teacher Forcing]{Universality of Teacher-Forcing / $\mathrm{Cons}_{\sCoT}$.}
\fixed{\Cref{thm:cot_upper_bound,thm:cot_fundamental} show that $\mathrm{Cons}_{\sCoT}$ learns every $\mathcal F\in\Ffam_{L,W}$ with sample complexity on the order of $WL\log(\Ttotal WLK)/\varepsilon$, up to the standard $\log(1/\varepsilon)$ and confidence terms. Conversely, the lower bound has two independent token-prompt sources. The scratchpad construction in \Cref{cor:cot-tidim-lower}, together with \Cref{thm:cot_fundamental}, gives the CoT term $WL\log\Ttotal/\varepsilon$. The remaining $WL\log(W/L)/\varepsilon$ term is an ordinary supervised obstruction, implemented with token prompts: a marker token selects one fixed positional encoding, after which the Transformer simulates a ReLU MLP. Taking the larger of the two witnesses gives the lower bound in \Cref{cor:cot-sample-lower}. Thus teacher forcing is optimal in sample complexity for Transformers.}

In other words, despite the train-test asymmetry, \emph{matching every intermediate token is the right approach}; exploiting the CoT traces in any other way cannot reduce sample complexity.

\section{Discussion}
\label{sec:discussion}

\paragraph{Other Activations.} Our main result~\Cref{thm:upperbound} extends to piecewise-polynomial activations, e.g. gated ReLU ($(x,y)\mapsto x \cdot [y]_+$)~\citep{shazeer2020glu}. The same Warren-based analysis applies, except that the depth dependence becomes $\tilde O(WL^2\log T)$ instead of $\tilde O(WL\log T)$: the final output can have degree exponential in $L$ as a piecewise polynomial of the parameters, and taking its logarithm yields an extra factor of $L$.
\paragraph{Softmax Attention.} Our approach for $O(\log T)$ VC dimension does not extend to softmax attention because of the exponential function. To our knowledge, the best upper bound for VC dimension of softmax Transformers is $\mathsf{poly}(T)$, by applying results such as Theorem 8.14 in \citep{anthony1999neural}.

\paragraph{Open Problems.}
Our results show that teacher forcing is \fixed{optimal in sample complexity} for learning Transformers with CoT supervision. Two questions remain open: (1) Is teacher forcing universally optimal for general hypothesis classes? In other words, can the gap between the trace shattering dimension and the answer shattering dimension be closed? (2) What is the sample complexity when learning the end-to-end mapping without access to intermediate reasoning traces? This setting is discussed by~\citet{joshi2025cot} and may exhibit fundamentally different scaling.

\bibliography{ref}

\appendix

\crefalias{section}{appendix}

\clearpage
\tableofcontents
\clearpage

\clearpage
\input{preliminaries}

\input{tf_ub}

\input{tf_lb}

\input{cot_fundamental_proof}

\input{cot_ub}

\input{cot_lb}

\end{document}

%% file: preliminaries.tex
\section{Technical Preliminaries}\label{app:preliminaries}

This appendix provides the formal definitions of PAC learnability and VC-dimension used throughout the paper.

\subsection{PAC Learning Framework}

We begin with the standard framework of \emph{probably approximately correct} (PAC) learning~\citep{valiant1984theory}.

\paragraph{Binary Classification.}
Let $\mathcal{X}$ be a domain and $\mathcal{F}^{01} \subseteq \{0,1\}^{\mathcal{X}}$ be a hypothesis class of binary classifiers. A learning algorithm $A$ receives $m$ labeled samples $S = \{(x_i, y_i)\}_{i=1}^m$ drawn i.i.d.\ from a distribution $\mathcal{D}$ over $\mathcal{X} \times \{0,1\}$, and outputs a hypothesis $\widehat{f} = A(S)$. The \emph{population risk} of a hypothesis $f$ under distribution $\mathcal{D}$ is
\begin{equation}
\mathcal{L}_{\mathcal{D}}(f) := \mathbb{P}_{(x,y) \sim \mathcal{D}}[f(x) \neq y].
\end{equation}
A distribution $\mathcal{D}$ is \emph{realizable} by $\mathcal{F}^{01}$ if there exists $f_* \in \mathcal{F}^{01}$ such that $\mathcal{L}_{\mathcal{D}}(f_*) = 0$.

\begin{definition}[PAC Learnability]
\label{def:pac-learnable}
A hypothesis class $\mathcal{F}^{01}$ is \emph{PAC learnable} if there exists an algorithm $A$ and a function $m: (0,1) \times (0,1) \to \mathbb{N}$ such that for every $\varepsilon, \delta \in (0,1)$ and every distribution $\mathcal{D}$ over $\mathcal{X} \times \{0,1\}$ that is realizable by $\mathcal{F}^{01}$, given $m(\varepsilon, \delta)$ i.i.d.\ samples $S \sim \mathcal{D}^{m(\varepsilon, \delta)}$, the algorithm outputs $\widehat{f} = A(S)$ satisfying
\begin{equation}
\mathbb{P}_{S \sim \mathcal{D}^{m(\varepsilon, \delta)}}\bigl[\mathcal{L}_{\mathcal{D}}(\widehat{f}) \le \varepsilon\bigr] \ge 1 - \delta,
\end{equation}
where the probability is over the random samples and the algorithm's internal randomness (if any). The function $m(\varepsilon, \delta)$ is called the \emph{sample complexity}.
\end{definition}

\subsection{VC-Dimension}

The VC-dimension characterizes the sample complexity of PAC learning binary classifiers in the realizable setting.

\begin{definition}[Shattering]
\label{def:shattering}
A hypothesis class $\mathcal{F}^{01}$ \emph{shatters} a finite set $S = \{x_1, \ldots, x_n\} \subseteq \mathcal{X}$ if for every labeling $b \in \{0,1\}^n$, there exists $f \in \mathcal{F}^{01}$ such that $f(x_i) = b_i$ for all $i \in [n]$.
\end{definition}

\begin{definition}[VC-Dimension]
\label{def:vc-dimension}
The \emph{VC dimension} of $\mathcal{F}^{01}$, denoted $\mathrm{VCdim}(\mathcal{F}^{01})$, is the size of the largest set that can be shattered by $\mathcal{F}^{01}$. If arbitrarily large sets can be shattered, we set $\mathrm{VCdim}(\mathcal{F}^{01}) = \infty$.
\end{definition}

The fundamental theorem of statistical learning~\citep{vapnik1971uniform,blumer1989learnability} establishes that a class is PAC learnable if and only if it has finite VC-dimension. Moreover, the sample complexity is characterized by the VC-dimension:

\begin{theorem}[Fundamental Theorem of Statistical Learning]
\label{thm:fundamental-vc}
A hypothesis class $\mathcal{F}^{01}$ is PAC learnable over all distributions if and only if $\mathrm{VCdim}(\mathcal{F}^{01}) < \infty$. Furthermore, the sample complexity satisfies:
\begin{equation}
\Omega\left(\frac{\mathrm{VCdim}(\mathcal{F}^{01}) + \log(1/\delta)}{\varepsilon}\right) \le m(\varepsilon, \delta) \le O\left(\frac{\mathrm{VCdim}(\mathcal{F}^{01}) \log(1/\varepsilon) + \log(1/\delta)}{\varepsilon}\right).
\end{equation}
\end{theorem}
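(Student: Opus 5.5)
This is the classical theorem of \citet{vapnik1971uniform,blumer1989learnability}, and we would prove it in three parts: the upper bound (which also gives the ``if'' direction), the lower bound, and the ``only if'' direction. Throughout, write $d:=\mathrm{VCdim}(\mathcal{F}^{01})$.

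\textbf{Upper bound.} We would analyze any consistent learner (ERM in the realizable case), which returns some $\widehat f\in\mathcal{F}^{01}$ with zero empirical error. Call $f$ bad if $\mathcal{L}_{\mathcal D}(f)>\varepsilon$. The goal is to bound the probability that some bad $f$ is consistent with $S$.
\begin{enumerate}
\item \emph{Symmetrization.} Draw a ghost sample $S'$ of the same size $m$. For $m\ge 8/\varepsilon$, a Chernoff bound shows that a fixed bad $f$ errs on at least $\varepsilon m/2$ points of $S'$ with probability at least $1/2$. Hence
\begin{equation}
\Pr[\exists \text{ bad consistent } f]\le 2\Pr[\exists f:\ f \text{ consistent on } S,\ \text{at least } \varepsilon m/2 \text{ errors on } S'].
\end{equation}
\item \emph{Random swaps.} Condition on the multiset $S\cup S'$ and randomly swap paired points between $S$ and $S'$. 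Any fixed labeling pattern with $k\ge\varepsilon m/2$ errors lands entirely in $S'$ with probability at most $2^{-\varepsilon m/2}$.
\item \emph{Union bound.} Only $\Pi_{\mathcal{F}^{01}}(2m)$ patterns occur on $S\cup S'$. By the Sauer--Shelah lemma, $\Pi_{\mathcal{F}^{01}}(2m)\le (2em/d)^d$.
\item \emph{Solve for $m$.} Requiring $2(2em/d)^d2^{-\varepsilon m/2}\le\delta$ gives $m=O\bigl((d\log(1/\varepsilon)+\log(1/\delta))/\varepsilon\bigr)$. Since this $m$ is finite whenever $d<\infty$, the ``if'' direction follows.
\end{enumerate}

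\textbf{Lower bound, $d/\varepsilon$ term.} We would use the Ehrenfeucht--Haussler--Kearns--Valiant construction. Take a shattered set $\{x_0,\dots,x_{d-1}\}$. Put mass $1-8\varepsilon$ on $x_0$ and spread $8\varepsilon/(d-1)$ uniformly on the rest. The target labels are uniform random bits on $x_1,\dots,x_{d-1}$; shattering guarantees every such labeling is realizable.
\begin{itemize}
\item With $m\le (d-1)/(32\varepsilon)$ samples, the expected number of $x_1,\dots,x_{d-1}$ that are observed is at most $(d-1)/4$, so Markov's inequality leaves at least half of them unseen with constant probability.
\item On each unseen point any algorithm is correct with probability exactly $1/2$ over the random target. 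So its expected error exceeds $2\varepsilon$.
\item A reverse-Markov argument, using that the error is at most $8\varepsilon$, turns this into error $>\varepsilon$ with constant probability for some fixed target, i.e.\ failure probability above $\delta_0$.
\end{itemize}

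\textbf{Lower bound, $\log(1/\delta)/\varepsilon$ term, and ``only if''.}
\begin{itemize}
\item For the $\log(1/\delta)/\varepsilon$ term (assuming $d\ge1$, and otherwise $\mathcal{F}^{01}$ contains two hypotheses disagreeing on some point), take a point $x$ on which two hypotheses disagree. Give it mass $\varepsilon$ and put the remaining mass on a point where they agree. With probability $(1-\varepsilon)^m$ the sample never contains $x$, and then the learner cannot beat a coin flip on $x$. Forcing $(1-\varepsilon)^m\lesssim\delta$ requires $m\gtrsim\log(1/\delta)/\varepsilon$.
\item Taking the maximum of the two lower bounds gives the sum up to constants.
\item For ``only if'': if $d=\infty$, the $d/\varepsilon$ construction applies for every $d$, so no finite $m(\varepsilon,\delta)$ can work.
\end{itemize}

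\textbf{Main obstacle.} We expect the hardest step to be the symmetrization in the upper bound. The conditioning argument needs care, and it is only the swap step that lets us take a union bound over a finite number of restriction patterns instead of the possibly infinite class. Obtaining $\log(1/\varepsilon)$ rather than $\log m$ comes from solving the final inequality with the standard bound $\log x\le x/c+\log c$. Everything else is routine.
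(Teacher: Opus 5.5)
The paper gives no proof of this statement; it is quoted as background from \citet{vapnik1971uniform,blumer1989learnability}. Your outline is the standard proof of that cited result: ghost-sample symmetrization with random swaps and Sauer--Shelah for the upper bound, and the Ehrenfeucht--Haussler--Kearns--Valiant distribution for the $d/\varepsilon$ term. Those parts are fine, modulo the restrictions $d\ge 2$, $\varepsilon<1/8$, $\delta\le\delta_0$ that your construction already uses implicitly.

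The step that does not go through is the $\log(1/\delta)/\varepsilon$ term. Your argument needs two things: a point $x$ on which two hypotheses disagree, and a second point on which those same two hypotheses agree. The assumption $d\ge 1$ does not supply the second point. In fact the claimed bound is false at that level of generality. For $\mathcal F^{01}=\{f,1-f\}$ one has $\mathrm{VCdim}=1$, yet a single labeled example identifies the target exactly. So $m(\varepsilon,\delta)=1$ for all $\varepsilon,\delta$, contradicting both $\Omega(1/\varepsilon)$ and $\Omega(\log(1/\delta)/\varepsilon)$.

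What is missing is the nontriviality hypothesis of \citet{blumer1989learnability}: $\mathcal F^{01}$ must be neither a single function nor a complementary pair. This holds automatically when $d\ge 2$: take a shattered pair $\{x_1,x_2\}$ and hypotheses labeling it $(0,0)$ and $(0,1)$, which agree on $x_1$ and disagree on $x_2$.

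You should also put mass $2\varepsilon$ rather than $\varepsilon$ on the disagreement point. With mass exactly $\varepsilon$, the resulting error equals $\varepsilon$, which counts as success under the paper's criterion $\mathcal L_{\mathcal D}(\widehat f)\le\varepsilon$.

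With these fixes the argument closes. On the event that $x_2$ is unseen, which has probability $(1-2\varepsilon)^m$, the sample is identical under both targets. Hence one of the two targets forces failure probability at least $\tfrac12(1-2\varepsilon)^m$, which gives $m=\Omega(\log(1/\delta)/\varepsilon)$.

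The paper only invokes this lower bound for a class of large VC dimension (the ReLU witness in \Cref{cor:cot-sample-lower}), so the caveat does not affect how the paper uses the theorem.
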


%% file: tf_ub.tex
\section{Proof of VC Dimension Upper Bound (\texorpdfstring{\Cref{thm:upperbound}}{Theorem \ref{thm:upperbound}})}\label{app:upper-bound-proof}

We establish the upper bound using the recursive partitioning framework for piecewise-polynomial networks \citep{bartlett1998almost}, extended to handle the data-dependent routing of Average-Hard Attention.

The proof proceeds in two steps. First, in \Cref{app:partition}, we construct a partition of parameter space $\mathbb{R}^W$ into regions where all discrete branching decisions (ReLU activations and attention routing) are fixed. Second, in \Cref{app:counting}, we count the total number of sign patterns by combining Warren's bound on the number of regions with a local polynomial argument.

\subsection{Recursive Parameter Partitioning}
\label{app:partition}

Fix any Hard-Attention Transformer architecture $\mathcal{A}$ (as defined in \Cref{subsec:Transformers}) with depth $L:=L(\mathcal{A})$ and parameter count $W:=W(\mathcal{A})$.
Let $\mathcal{F}$ denote the corresponding binary classifier class
\[
    \mathcal{F} := \mathcal{F}^{01}_{\mathcal{A}} = \{H \mapsto f^{01}_\theta(H) : \theta \in \mathbb{R}^{W}\}.
\]
For any inputs $H_{1:N}:=(H_1,\dots,H_N)$, define
\begin{equation}
\Pi_{\mathcal{F}}(H_{1:N})
~:=~
\big|\{ (f^{01}_\theta(H_1), \dots, f^{01}_\theta(H_N)) : \theta \in \mathbb{R}^W \}\big|.
\end{equation}
The growth function is $\Pi_{\mathcal{F}}(N):=\max_{H_{1:N}} \Pi_{\mathcal{F}}(H_{1:N})$.

Assume $\mathrm{VCdim}(\mathcal{F}) \ge N$. Then there are $N$ embedding sequences $H_1,\dots,H_N \in \mathbb{R}^{T\times d}$ with $\Pi_{\mathcal{F}}(H_{1:N}) = 2^N$.
We will upper bound $\Pi_{\mathcal{F}}(H_{1:N})$ as a function of $N,W,L,T$; requiring $2^N \le \Pi_{\mathcal{F}}(H_{1:N})$ then yields an upper bound on $\mathrm{VCdim}(\mathcal{F})$.

\paragraph{Partition Decomposition.}
Let $\mathcal{P}$ be any partition of $\mathbb{R}^W$ into regions.
For each region $C\in\mathcal{P}$, define
\begin{equation}
\Pi_C := \big|\{ (f^{01}_\theta(H_1), \dots, f^{01}_\theta(H_N)) : \theta \in C \}\big|
\end{equation}
as the number of distinct sign patterns realized by parameters in $C$.
Then trivially
\begin{equation}
\label{eq:total-count-app}
\Pi_{\mathcal{F}}(H_{1:N}) \le \sum_{C \in \mathcal{P}} \Pi_C.
\end{equation}
We construct a partition $\mathcal{P}_L$ (built layer-by-layer) such that: (i) on each $C\in\mathcal{P}_L$, all discrete branching decisions are fixed for $H_{1:N}$, so the pre-threshold output becomes a polynomial in $\theta$; (ii) $|\mathcal{P}_L|$ is controlled by Warren's bound.

\begin{lemma}[Warren's Bound, {\citep[Lemma 2.1]{bartlett1998almost}}]
\label{lem:warren-app}
Let $g_1, \dots, g_m$ be polynomials in $W$ variables, each of degree at most $D$. If $m \ge W$, the number of distinct sign vectors $(\mathrm{sgn}(g_1(\theta)), \dots, \mathrm{sgn}(g_m(\theta))) \in \{-1, 0, 1\}^m$ as $\theta$ varies in $\mathbb{R}^W$ is at most $2(2emD/W)^W$.
\end{lemma}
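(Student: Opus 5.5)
We can simply cite \citet[Lemma 2.1]{bartlett1998almost}, but the self-contained route is the classical Warren--Milnor argument, done in three steps: reduce from sign vectors in $\{-1,0,1\}^m$ to connected components of a semialgebraic complement, bound those components by Betti-number estimates on algebraic sets in general position, and sum over subsets with a binomial estimate.

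\textbf{Step 1 (from sign vectors to components).} Fix the finitely many realized sign vectors $\sigma^{(1)},\dots,\sigma^{(s)}$ and choose a witness $\theta_k$ for each $\sigma^{(k)}$. Pick $\epsilon>0$ smaller than $|g_i(\theta_k)|$ for every $i,k$ with $g_i(\theta_k)\neq0$. Consider the $2m$ perturbed polynomials $g_i\pm\epsilon$, all of degree at most $D$. Two witnesses with distinct $\{-1,0,1\}$ patterns then have distinct strict-sign patterns for the family $\{g_i\pm\epsilon\}$: a zero of $g_i$ becomes ``$g_i-\epsilon<0<g_i+\epsilon$'', which is separated from both strict signs. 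Hence $s$ is at most the number of connected components of $\mathbb R^W\setminus\bigcup_i\{g_i=\pm\epsilon\}$. The finer count $2(2emD/W)^W$, rather than the cruder $(8emD/W)^W$, comes from keeping track of the fact that $g_i+\epsilon$ and $g_i-\epsilon$ never vanish simultaneously. So in the next step only one member of each pair can appear in any intersection.

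\textbf{Step 2 (component count).} By a further generic perturbation of $\epsilon$ (Sard's theorem), the zero sets $\{g_i=\pm\epsilon\}$ are nonsingular and intersect transversally. The standard argument then bounds the number of components of the complement by
\[
\sum_{k\le W}\ \sum_{|I|=k}\#\mathrm{comp}\Bigl(\bigcap_{j\in I}\{p_j=0\}\Bigr).
\]
Each bounded component of the complement contains a local extremum of a generic linear function restricted to some face, and unbounded components are handled by intersecting with a large sphere. Milnor's bound (or Oleinik--Petrovskii) gives at most $2D(2D-1)^{W-1}\le (2D)^W$ components for each transversal intersection. Since at most one of $g_i\pm\epsilon$ appears per index, the subsets $I$ correspond to choosing $k$ indices and a sign for each, so there are $\binom mk 2^k$ of them. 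This yields
\[
s\le \sum_{k=0}^{W}\binom{m}{k}2^k(2D)^{W}\cdot(\text{const}),
\]
and the inner sum is at most $\bigl(\sum_{k\le W}\binom mk\bigr)$ times the appropriate power.

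\textbf{Step 3 (arithmetic).} Using $m\ge W$ and $\sum_{k\le W}\binom mk\le (em/W)^W$, the degree and sign factors combine into $2(2emD/W)^W$.

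\textbf{Expected obstacle.} The delicate part is Step 2. It requires setting up the general-position perturbation so that no sign vector is lost, and matching the exact constant $2(2e\cdot)^W$ rather than a weaker $(c\,emD/W)^W$. If the constant proves hard to match, I would fall back on citing the lemma as \citet{bartlett1998almost} do. For the application in \Cref{thm:upperbound}, any bound of the form $(c\,mD/W)^W$ suffices, since only its logarithm $W\log(mD/W)$ enters the VC estimate.
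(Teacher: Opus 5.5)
Your opening sentence is all the paper does: it states the lemma as a citation of \citet[Lemma~2.1]{bartlett1998almost} and gives no argument of its own. The self-contained route you then sketch has a genuine gap: it does not reach the stated constant, and the mechanism you credit for the sharper constant does not produce it.

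\textbf{Why Steps 2--3 fall short.} Forbidding $g_i+\epsilon$ and $g_i-\epsilon$ from appearing together replaces $\binom{2m}{k}$ by $\binom{m}{k}2^k$. But both $\sum_{k\le W}\binom{2m}{k}$ and $\sum_{k\le W}\binom{m}{k}2^k$ are bounded by $(2em/W)^W$, so nothing is saved to leading order. With your per-intersection factor $(2D)^W$, Steps 2--3 actually give
\[
\sum_{k=0}^{W}\binom{m}{k}2^k(2D)^W\;\le\;2^W\Bigl(\frac{em}{W}\Bigr)^W(2D)^W\;=\;\Bigl(\frac{4emD}{W}\Bigr)^W
\]
(times your unspecified constant). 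This exceeds $2(2emD/W)^W$ by a factor $2^{W-1}$. The claim in Step 3 that the factors ``combine into'' the target is an arithmetic slip, not a consequence of the argument. Doubling $m$ hypersurfaces to detect the zero sign costs roughly $2^W$, and nothing in Steps 1--2 recovers it.

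\textbf{What the citation actually supports.} The cited lemma, like the corresponding growth-function bound in \citet{anthony1999neural}, is stated for a two-valued sign function, and $2(2emD/W)^W$ is the constant for that count. So even the citation supports the ternary form only after a reduction. Your Step 1 is exactly that reduction, and by itself it finishes the job without Sard or Milnor. Take $\epsilon$ below every nonzero $|g_i(\theta_k)|$. Then the vector $\bigl(\mathbf{1}[g_i(\theta_k)>\epsilon],\,\mathbf{1}[-g_i(\theta_k)>\epsilon]\bigr)_{i\le m}$ determines the ternary pattern of $\theta_k$. Applying the two-valued lemma to the $2m$ polynomials $\pm g_i-\epsilon$ gives at most $2(4emD/W)^W$ ternary patterns. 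That is the honest conclusion of your approach. As you note, it suffices for \Cref{thm:upperbound} and \Cref{thm:cot_upper_bound}, where only $W\log(mD/W)$ matters.

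\textbf{A technical point if you keep Step 2.} A single common $\epsilon$ does not yield transversality. For $g_1=\theta_1$ and $g_2=\theta_1+\theta_2^2$, the sets $\{g_1=\epsilon\}$ and $\{g_2=\epsilon\}$ are tangent for every $\epsilon$. You therefore need independent perturbations $\epsilon_i^{\pm}$, with Sard applied to each map $(g_{i_1},\dots,g_{i_k})$.
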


\paragraph{Layerwise Refinement.}
Let $H^{(\ell)}_{n,t}(\theta)$ denote the layer-$\ell$ hidden representation for input $H_n$ at token position $t$, viewed as a function of the parameters $\theta$.
We define partitions $\mathcal{P}_0,\mathcal{P}_1,\dots,\mathcal{P}_L$ of $\mathbb{R}^W$, where $\mathcal{P}_\ell$ freezes all branching choices in the first $\ell$ layers.
For $\ell\in\{0,\dots,L\}$, we maintain that for every region $C\in\mathcal{P}_\ell$, every sample $n\in[N]$, and every token $t\in\{0,\ldots,T-1\}$, $H^{(\ell)}_{n,t}(\theta)$ coincides on $C$ with a polynomial of degree at most $\deg_\ell$.

\paragraph{Base case.} 
Set $\mathcal{P}_0 := \{\mathbb{R}^W\}$. 
In our VC analysis, the inputs $H_n \in \mathbb{R}^{T \times d}$ are fixed embedding sequences, so $H_{n, t}^{(0)}(\theta)=H_{n,t}$ is constant in $\theta$ (degree $0$).
(The same argument also covers learnable embeddings, in which case $H^{(0)}$ is affine in $\theta$.) 
Thus the inductive property holds with $\deg_0 \le 1$.

\paragraph{Inductive step.} 
Fix a layer $\ell \in [L]$ and consider a parent region $C \in \mathcal{P}_{\ell-1}$. 
By the inductive hypothesis, the inputs to layer $\ell$ (denoted $H^{(\ell-1)}(\theta)$) are fixed polynomials on $C$. 
We refine $C$ into sub-regions by fixing the outcomes of all discrete branching operations at layer $\ell$. 
Since the layer applies attention \emph{before} the $\mathsf{FFN}$ (\Cref{subsec:Transformers}), we do this in two stages: first freeze the discrete routing/tie structure of attention to make $\tilde{H}^{(\ell)} := \mathsf{Attn}^{(\ell)}(H^{(\ell-1)})$ polynomial, then freeze the ReLU activation pattern of the $\mathsf{FFN}^{(\ell)}$ on $\tilde{H}^{(\ell)}$.
If residual connections are present, they only add previous polynomial representations, possibly after an affine projection, to sublayer outputs; this preserves the polynomial invariant and introduces no additional branching polynomials, so the same partition argument applies.

\emph{1. $\mathsf{Attn}$ routing ($\mathcal{B}_{\mathsf{Attn}}$).} 
The Average-Hard Attention mechanism routes information based on the \emph{relative order} of attention scores. To determine the set of maximizers, it suffices to determine the sign of the difference between any pair of scores. 
For each head $h$, sample $n$, query position $i\in\{0,\ldots,T-1\}$, and ordered key pair $0 \le p < q \le i$ (due to causal masking), we define the pairwise difference polynomial:
\begin{equation}
g_{n, i, p, q}^{(\ell, h)}(\theta) := \left\langle q_{n, i}^{(\ell, h)}(\theta),\, k_{n, p}^{(\ell, h)}(\theta) - k_{n, q}^{(\ell, h)}(\theta) \right\rangle.
\end{equation}
Since queries $q$ and keys $k$ are affine transformations of the previous layer's polynomial representations, $g$ is a polynomial on $C$.

We partition $C$ based on the \textbf{ternary signs} ($\sigma \in \{+1, 0, -1\}$) of these difference polynomials. This fine-grained partition is crucial for handling the Average-Hard mechanism:
fixing the ternary signs uniquely determines the strict ordering and equality relations among all allowed keys, which freezes the set of maximizers $M_{n, i} := \{j \le i \mid \text{score}_j = \max_{k \le i} \text{score}_k\}$ for every query.
Once $M_{n, i}$ is fixed, its cardinality becomes a constant integer, and the attention weights $\alpha_{i, j} = \mathbf{1}[j \in M_{n, i}] / |M_{n, i}|$ become fixed rational constants.
Thus, within any refined sub-region, the attention output $\mathsf{Attn}(H)_{n, i} = \sum_{j \in M_{n, i}} \frac{1}{|M_{n, i}|} V_{n, j}^{(\ell, h)}(\theta)$ collapses to a fixed linear combination of the polynomial value vectors. Since the values $V$ are affine transformations of $H^{(\ell-1)}$, this preserves the polynomial property.
The set $\mathcal{B}_{\mathsf{Attn}}$ collects all such pairwise differences. Accounting for causal masking, for each query position $i$ there are $\binom{i+1}{2}$ key pairs, so each head contributes $\sum_{i=0}^{T-1} \binom{i+1}{2} = \mathcal{O}(T^3)$ polynomials. Under a $W$-parameter budget, there are at most $\mathcal{O}(W)$ heads, hence $|\mathcal{B}_{\mathsf{Attn}}| = \mathcal{O}(NWT^3)$.

\emph{2. ReLU activations in the $\mathsf{FFN}$ ($\mathcal{B}_{\mathsf{FFN}}$).} 
Fix any region where the attention routing decisions above are frozen. On such a region, every intermediate representation $\tilde{H}^{(\ell)}_{n,t}(\theta)$ is polynomial in $\theta$ (as argued above).
We refine sequentially through the internal affine/ReLU layers of the layer-$\ell$ $\mathsf{FFN}$. For each sample $n$, token $t$, internal FFN layer $r$, and ReLU hidden unit $u$ in that internal layer, let $a_{n,t,r,u}^{(\ell)}(\theta)$ denote the corresponding pre-activation, computed after the previous internal ReLU masks have been fixed. Since the input to each internal layer is polynomial on the current region, each $a_{n,t,r,u}^{(\ell)}$ is polynomial there. We refine by the signs of all these pre-activations; after all internal masks are fixed, the $\mathsf{FFN}$ is a mask-fixed affine network, so its outputs remain polynomial in $\theta$.
The set $\mathcal{B}_{\mathsf{FFN}}$ collects all such pre-activations across all ReLU units in the $\mathsf{FFN}$. Since $D^{(\ell)}\le D_0=\mathcal{O}(1)$, under a $W$-parameter budget, there are at most $\mathcal{O}(W)$ ReLU units, so $|\mathcal{B}_{\mathsf{FFN}}| = \mathcal{O}(NTW)$.

\emph{Remark (sum vs.\ product).} The quantities $|\mathcal{B}_{\mathsf{Attn}}|$ and $|\mathcal{B}_{\mathsf{FFN}}|$ count polynomials, so they add. The sequential composition ($\mathsf{Attn}$ then $\mathsf{FFN}$) manifests instead in a \emph{product} bound on the number of refined regions, since we refine first by $\mathcal{B}_{\mathsf{Attn}}$ and then by $\mathcal{B}_{\mathsf{FFN}}$.

\paragraph{Refinement Procedure.}
Let $\Delta^{\mathsf{Attn}}_\ell$ be the maximum number of child regions obtained by refining a parent region in $\mathcal{P}_{\ell-1}$ using the ternary sign patterns of $\mathcal{B}_{\mathsf{Attn}}$ (padding with constant polynomials if $|\mathcal{B}_{\mathsf{Attn}}| < W$). By Lemma~\ref{lem:warren-app},
\begin{equation}
\label{eq:Delta-attn-ell-app}
\Delta^{\mathsf{Attn}}_\ell \le 2\Big(\frac{2e|\mathcal{B}_{\mathsf{Attn}}| D_\ell}{W}\Big)^W.
\end{equation}

Next, within any attention-frozen region, let $\Delta^{\mathsf{FFN}}_\ell$ be the maximum number of child regions obtained by refining using the (binary) sign patterns of $\mathcal{B}_{\mathsf{FFN}}$ (again padding with constants if needed). The constant number of sequential internal refinements is absorbed into constants. Applying Lemma~\ref{lem:warren-app} gives
\begin{equation}
\label{eq:Delta-ffn-ell-app}
\Delta^{\mathsf{FFN}}_\ell \le 2\Big(\frac{2e|\mathcal{B}_{\mathsf{FFN}}| D_\ell}{W}\Big)^W.
\end{equation}
Let $\Delta_\ell := \max_{C \in \mathcal{P}_{\ell-1}} |\{C' \in \mathcal{P}_\ell : C' \subseteq C\}|$ denote the maximum number of child regions produced by refining any parent region.
Since we refine sequentially, each parent region produces at most $\Delta_\ell \le \Delta^{\mathsf{Attn}}_\ell \cdot \Delta^{\mathsf{FFN}}_\ell$ children in $\mathcal{P}_\ell$, hence $|\mathcal{P}_L| \le \prod_{\ell=1}^L \Delta_\ell$.

\paragraph{Degree Bounds.}
We have the crude parameter-counting bounds
\[
|\mathcal{B}_{\mathsf{Attn}}| = \mathcal{O}(NWT^3),
\qquad
|\mathcal{B}_{\mathsf{FFN}}| = \mathcal{O}(NTW).
\]

For degrees: within any region where routing decisions and ReLU masks are fixed, a Transformer layer applies an attention sublayer and an $\mathsf{FFN}$ with at most $D_0=\mathcal{O}(1)$ internal affine/ReLU layers, hence only a constant number of affine maps (degree 1 in $\theta$) to polynomial inputs (degree $\deg_{\ell-1}$). Hence $\deg_\ell \le \deg_{\ell-1} + \mathcal{O}(1) = \mathcal{O}(\ell)$ for representations.
For branching polynomials, attention score differences are inner products of queries and keys (each degree $\le \deg_{\ell-1}+\mathcal{O}(1)$), yielding $D_\ell \le 2\deg_{\ell-1} + \mathcal{O}(1) = \mathcal{O}(\ell)$.

\subsection{Final Bound Derivation}
\label{app:counting}

\paragraph{Total Number of Regions.}
Combining \eqref{eq:Delta-attn-ell-app} and \eqref{eq:Delta-ffn-ell-app} across layers:
\begin{equation}
\label{eq:PL-app}
|\mathcal{P}_L|
\le \prod_{\ell=1}^L \Delta_\ell
\le \prod_{\ell=1}^L \Delta^{\mathsf{Attn}}_\ell \cdot \Delta^{\mathsf{FFN}}_\ell
\le \prod_{\ell=1}^L 4\Big(\frac{2e|\mathcal{B}_{\mathsf{Attn}}| D_\ell}{W}\Big)^W \Big(\frac{2e|\mathcal{B}_{\mathsf{FFN}}| D_\ell}{W}\Big)^W.
\end{equation}

\paragraph{Patterns within a Final Region.}
Fix a final region $C \in \mathcal{P}_L$.
By construction, all branching decisions are fixed, so each output logit $z_n(\theta) := \langle w_{\mathrm{out}}, H^{(L)}_{n,T-1}(\theta)\rangle$ is a polynomial in $\theta$.
Since $w_{\mathrm{out}}$ is included in $\theta$, $\deg_{\mathrm{out}} \le \deg_L + 1 = \mathcal{O}(L)$.
If $N<W$, the desired VC upper bound is immediate after increasing constants, so assume $N\ge W$. Applying Lemma~\ref{lem:warren-app} to the $N$ polynomials $\{z_n\}_{n=1}^N$ yields
\begin{equation}
\label{eq:Pi-in-app}
\Pi_C \le 2\Big(\frac{2eN \deg_{\mathrm{out}}}{W}\Big)^W.
\end{equation}

\paragraph{Putting it Together.}
From \eqref{eq:total-count-app}, $\Pi_{\mathcal{F}}(H_{1:N}) \le \sum_{C \in \mathcal{P}_L} \Pi_C \le |\mathcal{P}_L| \cdot \max_{C \in \mathcal{P}_L} \Pi_C$.
Combining with \eqref{eq:PL-app} and \eqref{eq:Pi-in-app}:
\begin{equation}
\Pi_{\mathcal{F}}(H_{1:N})
\le |\mathcal{P}_L|\cdot \max_C \Pi_C
\le \left( \prod_{\ell=1}^L 4\Big(\frac{2e|\mathcal{B}_{\mathsf{Attn}}| D_\ell}{W}\Big)^W \Big(\frac{2e|\mathcal{B}_{\mathsf{FFN}}| D_\ell}{W}\Big)^W \right) \cdot 2\Big(\frac{2eN \deg_{\mathrm{out}}}{W}\Big)^W.
\end{equation}
If $H_{1:N}$ is shattered, then $\Pi_{\mathcal{F}}(H_{1:N})=2^N$. Taking logarithms:
\begin{align}
N
&\le \mathcal{O}(WL)
+ \sum_{\ell=1}^L W \log\Big(\frac{C\,|\mathcal{B}_{\mathsf{Attn}}| D_\ell}{W}\Big)
+ \sum_{\ell=1}^L W \log\Big(\frac{C\,|\mathcal{B}_{\mathsf{FFN}}| D_\ell}{W}\Big)
+ W\log\Big(\frac{C\,N \deg_{\mathrm{out}}}{W}\Big). \label{eq:log-ineq-app}
\end{align}
Substituting $|\mathcal{B}_{\mathsf{Attn}}| \le c_1 NWT^3$, $|\mathcal{B}_{\mathsf{FFN}}| \le c_2 NTW$, and $D_\ell = \mathcal{O}(\ell)$, the factor $W$ cancels with the $/W$:
\begin{equation}
\log\Big(\frac{C\,|\mathcal{B}_{\mathsf{Attn}}| D_\ell}{W}\Big) \le \log(C'\,N T^3 \ell),
\qquad
\log\Big(\frac{C\,|\mathcal{B}_{\mathsf{FFN}}| D_\ell}{W}\Big) \le \log(C''\,N T \ell).
\end{equation}
Also $\deg_{\mathrm{out}}=\mathcal{O}(L)$, so \eqref{eq:log-ineq-app} yields
\begin{equation}
\label{eq:N-upper-app}
N \le \mathcal{O}(WL\log N + WL\log T + WL\log L).
\end{equation}

\paragraph{Inverting the Inequality.}
We use the standard implication: if $A\ge 2$, $B\ge 0$, and $N \le A\log N + B$, then $N = \mathcal{O}(A\log A + B)$.
Applying this to \eqref{eq:N-upper-app} with $A=\mathcal{O}(WL)$ and $B=\mathcal{O}(WL(\log T + \log L))$ gives
\begin{equation}
N \le \mathcal{O}(WL\log(WL) + WL\log T + WL\log L) = \mathcal{O}(WL\log(TWL)).
\end{equation}
Hence $\mathrm{VCdim}(\mathcal{F}) = \mathcal{O}(WL\log(TWL))$.
When $L\le W$, this simplifies to $\mathrm{VCdim}(\mathcal{F}) = \mathcal{O}(WL\log(TW))$.
\hfill \BlackBox

%% file: tf_lb.tex
\section{Proof of VC Dimension Lower Bound (\texorpdfstring{\Cref{thm:lowerbound}}{Theorem \ref{thm:lowerbound}})}\label{app:lower-bound-proof}

We provide a constructive proof. Throughout this section, $W$ denotes the \emph{total} number of trainable parameters.
We work at a fixed sequence length $T$ (the same $T$ in all parts). We index positions by $t\in\{0,\ldots,T-1\}$, where the last token ($t=T-1$) plays the role of the query/output token.
For the Recursive Retrieval Machine construction, we use residual connections, which are allowed in our architecture class.

The proof has two components. First, in \Cref{app:pointer-lb}, we present a Recursive Retrieval Machine construction that yields a class $\mathcal F^{01}_{\mathrm{RRM}}\in\Ffam^{01}_{L,W}$ with VC-dimension $\Omega(WL\log T)$. Second, in \Cref{app:mlp-subclass}, we demonstrate that the same Transformer setup contains a standard deep ReLU-network subclass $\mathcal F^{01}_{\mathrm{ReLU}}\in\Ffam^{01}_{L,W}$ with VC-dimension $\Omega(WL\log(W/L))$. One of these two classes witnesses the final lower bound.

\subsection{Recursive Retrieval Machine Lower Bound \texorpdfstring{$\Omega(WL\log T)$}{Omega(WL log T)}}\label{app:pointer-lb}

\paragraph{Goal and Setup.} Let $B=\lfloor \log_2 (T-1)\rfloor$, so that $2^B \le T-1$. Note that $B=\Theta(\log T)$ for $T\ge 3$, so we freely write $\log T$ in asymptotic bounds. Define a sample set $S$ of size $N := n\cdot L \cdot B$, indexed by $(j,\ell,i)$ with $j\in[n]$, $\ell\in[L]$, $i\in[B]$. 
Fix an arbitrary labeling $Y\in\{0,1\}^{N}$, written as $Y_{j,\ell,i}$. Our goal is to construct weights $\theta$ so that $f^{01}_\theta(H^{(j,\ell,i)})=Y_{j,\ell,i}$ for all points in $S$.

\paragraph{Input Construction.} Since our classifiers take embedding sequences as input, we directly specify inputs $H^{(j,\ell,i)}\in\mathbb{R}^{T\times d^{(0)}}$ of length $T$, consisting of $T$ embedding vectors:
\begin{equation}
H^{(j,\ell,i)} = \big[h_0^{(j,\ell,i)},\, h_1^{(j,\ell,i)},\, \dots,\, h_{T-2}^{(j,\ell,i)},\, h_{T-1}^{(j,\ell,i)}\big]^\top.
\end{equation}
We append two scalar coordinates to every embedding: (i) a constant coordinate $c\equiv 1$ (present in all positions), and (ii) an indicator coordinate $\eta$ (equal to $1$ on context positions and $0$ on the query position).

\noindent\emph{(i) Context sequence.} For each $t\in\{0,\ldots,T-2\}$, define the context embedding
\begin{equation}\label{eq:pointer-context-app}
    h_t^{(j,\ell,i)}
    \;=\;
    \big[t,\; -t^2,\; \mathrm{bit}_i(t),\; c=1,\; \eta=1,\; \mathbf{0}\big]^\top.
\end{equation}
The component $\mathrm{bit}_i(t)\in\{0,1\}$ is the $i$-th bit of integer $t$ (in any fixed convention). Thus position $t \in \{0,\ldots,T-2\}$ stores the bit pattern of integer $t \in \{0,\ldots,T-2\}$.

\noindent\emph{(ii) Query token.} Let $\mathbf{e}_j\in\mathbb{R}^n$ and $\mathbf{e}_\ell\in\mathbb{R}^L$ be one-hot vectors. The query token activates the memory address $j$ and the target layer $\ell$:
\begin{equation}\label{eq:pointer-query-app}
    h_{T-1}^{(j,\ell,i)}
    \;=\;
    \big[0,\;0,\;0,\; c=1,\; \eta=0,\; \mathbf{e}_j^\top,\; \mathbf{e}_\ell^\top,\; \mathbf{0}\big]^\top.
\end{equation}

\paragraph{Parameter Encoding.} For each address $j$ and layer $\ell$, aggregate the $B$ labels into an integer address $s_{j,\ell} := \sum_{k=0}^{B-1} Y_{j,\ell,k+1}\,2^k \in \{0,\dots,2^B-1\} \subseteq \{0,\dots,T-2\}$, which corresponds to context position $s_{j,\ell} \in \{0,\dots,T-2\}$. Then compress $(s_{j,1},\dots,s_{j,L})$ into a single real number via base-$2T$ encoding:
\begin{equation}\label{eq:aj-app}
    a_j
    \;:=\;
    \sum_{m=1}^L \frac{s_{j,m}}{(2T)^m}
    \;\in\; [0,0.5).
\end{equation}
We designate a specific scalar dimension in the hidden state to act as an \emph{address register} $r$. The value $a_j$ is not part of the input; it is stored in the trainable weights. The one-hot vector $\mathbf e_j$ only tells the first block which stored number to use.

\paragraph{Reading the identifiers once.}
The one-hot coordinates $\mathbf e_j$ and $\mathbf e_\ell$ are used only in the first Transformer block. In that block, trainable linear maps read them once: the query map uses $\mathbf e_j$ to produce the scalar address $r_0=2T a_j$ in the first attention score, and the FFN/residual part records the few scalars needed later, such as the updated address register, the target-layer number $g=\ell$, and the accumulator. The context features $(t,-t^2,\mathrm{bit}_i(t),c,\eta)$ already have constant dimension and are carried forward by residual connections. After the first block, the network no longer carries $\mathbf e_j$ or $\mathbf e_\ell$; all later layers operate on a constant number of coordinates. This first read costs $\Theta(n)$ parameters for choosing $a_j$ from $\mathbf e_j$ and $\mathcal O(L)$ parameters for choosing $g$ from $\mathbf e_\ell$.

\paragraph{Layer Dynamics.} We analyze execution through an inductive invariant. We reserve disjoint coordinate subspaces: (1) a \emph{context subspace} that stores (embedded versions of) the context features $t,-t^2,\mathrm{bit}_i(t),c,\eta$ and is never overwritten; (2) a \emph{query work subspace} that stores the scalar register $r_k$, the selected index $t^*$, the extracted bit $b_k$, and an accumulator $z$. The residual path carries the context coordinates unchanged from layer to layer. The attention and FFN updates may read these coordinates, but they write only to the query work coordinates. In particular, the computations that update the query work subspace never overwrite the context features needed for later retrievals.

\begin{lemma}[Inductive Extraction]\label{lem:pointer-inductive-app}
Suppose that at retrieval step $k$ ($1\le k\le L$) the scalar address available to the query projection equals $r_{k-1} = \sum_{m=k}^L \frac{s_{j,m}}{(2T)^{m-k}}$. For $k=1$, this value is produced directly from $\mathbf e_j$ by the first block's query map; for $k\ge2$, it is the register value left by the previous block. Then:
\begin{enumerate}
    \item The attention mechanism uniquely attends to the context token $t^* = s_{j,k}$.
    \item The $\mathsf{FFN}$ updates the register to
    $r_k = \sum_{m=k+1}^L \frac{s_{j,m}}{(2T)^{m-(k+1)}}$.
\end{enumerate}
\end{lemma}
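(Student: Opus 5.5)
The plan is to verify the two claims directly from the explicit query/key/value choices, using the score identity $2rt - t^2 = r^2 - (t-r)^2$ for attention and the identity $r_{k-1} - s_{j,k} = \sum_{m>k} s_{j,m}/(2T)^{m-k}$ for the FFN. For $k=1$ the hypothesis is consistent with the parameter encoding \eqref{eq:aj-app}. Since $r_0 = 2T a_j = \sum_{m=1}^L s_{j,m}/(2T)^{m-1}$, this is exactly the formula with $k=1$, and it is produced by a query map that is linear in $\mathbf e_j$, with the weight on $\mathbf e_j$ equal to $2Ta_j$.

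\textbf{Attention (claim 1).} At block $k$, take the query at position $T-1$ to be $(r_{k-1}, 1, 1)$, read from the work subspace and the constant coordinate $c$. Take the key at position $t$ to be $(t,\,-t^2,\,-\Lambda(c-\eta))$, obtained by a linear map of the context subspace with the factor $2$ absorbed into the key on the first coordinate. A context position $t\le T-2$ then receives score $2r_{k-1}t - t^2 = r_{k-1}^2 - (t-r_{k-1})^2$. The query position carries zeros in its context features, because the residual path never writes there, so it receives score $-\Lambda$. I take $\Lambda = 1$.

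Write $\rho := r_{k-1} - s_{j,k} = \sum_{m>k} s_{j,m}/(2T)^{m-k}$. Using $0\le s_{j,m}\le T-2$ gives
\[
0\le \rho \le (T-2)\sum_{p\ge1}(2T)^{-p} = \frac{T-2}{2T-1} < \tfrac12 .
\]
Hence $s_{j,k}$ is the unique integer in $\{0,\dots,T-2\}$ nearest to $r_{k-1}$. Its score is $r_{k-1}^2 - \rho^2 > r_{k-1}^2 - \tfrac14$, and every other context position scores at most $r_{k-1}^2 - (1-\rho)^2 < r_{k-1}^2 - \tfrac14$. This score is also $\ge -\tfrac14 > -\Lambda$, so the query position loses as well. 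Causal masking is harmless, since position $T-1$ sees everything.

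Because the maximizer set is the singleton $\{s_{j,k}\}$, the output is the same under average-, leftmost- and rightmost-hard attention. The value map $V_t = (t,\ \mathrm{bit}_i(t))$, written into the work slots for $t^*$ and $b_k$, therefore delivers exactly $(s_{j,k},\ \mathrm{bit}_i(s_{j,k}))$. Attention outputs at context positions can be routed only into work coordinates, so the context subspace is not corrupted and stays intact for later blocks.

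\textbf{FFN (claim 2).} After the attention residual, the work subspace holds $r_{k-1}$ and $t^* = s_{j,k}$. The FFN must produce $r_k = 2T(r_{k-1} - t^*)$. With a residual connection, it outputs the increment $(2T-1)r_{k-1} - 2T t^*$ on the register coordinate and $-t^*$ on the $t^*$ slot, which clears that slot for the next block. This update is affine, so it needs no ReLU at all; a ReLU can be used as the identity on a nonnegative quantity or via $x = [x]_+ - [-x]_+$. Substituting gives
\[
r_k = 2T\rho = \sum_{m=k+1}^L \frac{s_{j,m}}{(2T)^{m-(k+1)}},
\]
as claimed. The same sublayer can also add the gated bit to the accumulator, but that is not part of this lemma.

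\textbf{Main obstacle.} I expect the delicate step to be the strict margin in claim 1. This covers uniqueness against the neighbouring integers and, in particular, the case $s_{j,k}=0$ with small $r$, where without the $\eta$ penalty the query position would tie at score $0$. I handle it with the $-\Lambda(c-\eta)$ key term and the bound $\rho<\tfrac12$ derived above. The remaining bookkeeping is to check that the residual path keeps the query's context coordinates at zero and that every write lands in disjoint work coordinates.
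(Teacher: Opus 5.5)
Your proof is correct and follows the paper's argument. Both use the score $2r_{k-1}t-t^2$ with an $\eta$-based mask that excludes the query token, and the geometric-series bound $\delta_k\le (T-2)/(2T-1)<1/2$, which makes $s_{j,k}$ the unique maximizer so the tie-breaking rule is irrelevant. The value head reads $(t,\mathrm{bit}_i(t))$, and the FFN performs the affine left shift $r_k=2T(r_{k-1}-t^*)$. Your explicit margin with $\Lambda=1$ and the clearing of the $t^*$ slot are minor refinements of the paper's ``sufficiently large $M$'' and its informal residual bookkeeping.
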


\begin{proof}
\textbf{Attention step (Nearest-Neighbor Retrieval).}
We configure the query/key projections so that the attention score at the query position for a context token $t$ equals
\begin{equation}\label{eq:score-impl-app}
\langle q, k_t\rangle \;=\; 2 r_{k-1}\cdot t \;-\; t^2 \;+\; M\eta_t,
\end{equation}
where $M>0$ is a sufficiently large constant.
Concretely, this is achieved by choosing $k_t = [\,t,\,-t^2,\,\eta_t\,]$ and $q = [\,2 r_{k-1},\,1,\,M\,]$. Observe that maximizing the term $2 r_{k-1} t - t^2$ is equivalent to minimizing the squared Euclidean distance between the register and $t$:
\begin{equation}
\operatorname*{argmax}_{t} (2 r_{k-1} t - t^2) = \operatorname*{argmin}_{t} |t - r_{k-1}|^2.
\end{equation}
The term $M\eta_t$ simply acts as a mask to exclude the query token itself (where $\eta_{T-1}=0$).
Thus, the attention mechanism effectively searches for the position $t \in \{0,\ldots,T-2\}$ such that $t$ is closest to $r_{k-1}$.

Now write $r_{k-1} = s_{j,k} + \delta_k$, where
\[
\delta_k := \sum_{m=k+1}^L \frac{s_{j,m}}{(2T)^{m-k}}.
\]
Using $0\le s_{j,m}\le T-2$, we have
\begin{equation}
0 \le \delta_k \le (T-2)\sum_{p=1}^\infty (2T)^{-p} = \frac{T-2}{2T-1} < 0.5.
\end{equation}
Since $|\delta_k| < 0.5$, the integer $s_{j,k}$ is strictly closer to $r_{k-1}$ than any other integer. Thus $t = s_{j,k}$ is the unique minimizer, i.e., $t^* = s_{j,k}$ is the selected position.
Since the maximizer is unique, Average-Hard Attention reduces to standard deterministic selection.

\paragraph{Update step (Left-Shift Operation).}
We choose the value projection so that the attention output retrieves two quantities: (i) the value $t^*=s_{j,k}$ (read from the {first} coordinate in \eqref{eq:pointer-context-app}), and (ii) the payload bit $b_k=\mathrm{bit}_i(s_{j,k})$ (read from the {third} coordinate).

The $\mathsf{FFN}$ then computes the next register value using $t^* = s_{j,k}$:
\begin{equation}
r_k := 2 T (r_{k-1} - s_{j,k}) = 2 T \delta_k.
\end{equation}
This is the base-$2T$ left shift.
It is affine on the bounded relevant domain, so a two-layer ReLU $\mathsf{FFN}$ implements it exactly.
Substituting the definition of $\delta_k$ confirms that
\[
r_k = \sum_{m=k+1}^L \frac{s_{j,m}}{(2T)^{m-(k+1)}},
\]
satisfying the inductive hypothesis.
\end{proof}

\paragraph{Output Gating via Scalar Addressing.}
To select the output of the target layer $\ell$ without violating the constant-width constraint (i.e., avoiding an $L$-dimensional state), we employ a scalar addressing mechanism. As described above, the first block maps the target index $\mathbf{e}_\ell$ to the scalar $g := \ell$, which is stored in a protected coordinate. At each layer $k$, we implement a localized indicator function using a ReLU ``hat'' construction:
\begin{equation}\label{eq:hat-gate-app}
\operatorname{gate}_k(g):=2\Bigl(\operatorname{ReLU}\left(g-\left(k-\frac{1}{2}\right)\right)-2 \operatorname{ReLU}(g-k)+\operatorname{ReLU}\left(g-\left(k+\frac{1}{2}\right)\right)\Bigr).
\end{equation}
One verifies that for integer $g \in \{1, \dots, L\}$, $\operatorname{gate}_k(g)=\mathbf{1}\{g=k\}$. We then maintain an accumulator $z$ (initialized to 0) via the conditional update:
\begin{equation}
z \leftarrow z+\operatorname{ReLU}\left(b_k+\operatorname{gate}_k(g)-1\right).
\end{equation}
Since $b_k \in \{0, 1\}$, the update term $\operatorname{ReLU}(b_k + \operatorname{gate}_k(g) - 1)$ simplifies to $b_k$ if $k=\ell$ (where $\operatorname{gate}_k(g)=1$) and vanishes otherwise. Thus, the final state satisfies $z = b_{\ell} = Y_{j, \ell, i}$.
Finally, choosing $w_{\mathrm{out}}$ such that the output logit equals $z-\frac{1}{2}$ ensures that the sign activation outputs the correct label.

\paragraph{Parameter Budget Analysis.}
The construction utilizes an asymmetric parameter allocation. The only parameters that scale with $n$ or $L$ are the first-block weights that read $\mathbf e_j$ and $\mathbf e_\ell$: $\Theta(n)$ weights choose the encoded value $a_j$, and $\mathcal O(L)$ weights choose $g=\ell$. In contrast, the Transformer backbone requires only $\mathcal{O}(L)$ parameters, as the $\mathsf{Attn}$ and $\mathsf{FFN}$ weights at each layer implement fixed algebraic operations on a constant-dimensional subspace, independent of $n$ and $T$. Thus, the total parameter count is $W = \Theta(n) + \mathcal{O}(L)$. For a sufficiently large absolute constant $C_0$, the regime $W \ge C_0L$ lets us choose $n = \Theta(W)$. Let $\mathcal F^{01}_{\mathrm{RRM}}\in\Ffam^{01}_{L,W}$ denote the fixed architecture class used in this construction. Since each sample shatters $\Omega(L \log T)$ bits (specifically, $B$ bits across $L$ layers),
\begin{equation}
\mathrm{VCdim}(\mathcal F^{01}_{\mathrm{RRM}}) \ge n \cdot L \cdot B = \Omega(W L \log T).
\end{equation}

\subsection{ReLU-Network Subclass and Combined Lower Bound}\label{app:mlp-subclass}

We complement the Recursive Retrieval Machine construction with a depth-dependent lower bound by identifying a standard ReLU network subclass within the Transformer architecture.

\begin{lemma}[ReLU Subclass Containment]\label{lem:relu-subclass-app}
For any $T \ge 2$, $L \ge 1$, and $W \ge L$, there exists a class $\mathcal F^{01}_{\mathrm{ReLU}}\in\Ffam^{01}_{L,W}$ that contains, as a subclass, standard ReLU networks with $\Theta(L)$ layers and $\Theta(W)$ trainable parameters acting on the query position.
\end{lemma}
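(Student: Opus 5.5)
The plan is a direct embedding of a standard deep ReLU network into the Transformer architecture, acting only at the query position $T-1$. Fix any $T\ge2$, $L\ge1$, $W\ge L$. Choose a target ReLU network class: depth $\Theta(L)$ and $\Theta(W)$ weights, for instance width $w=\Theta(\sqrt{W/L})$ per hidden layer (or the Bartlett et al. architecture used for the $\Omega(WL\log(W/L))$ bound). Distribute its affine/ReLU layers across the $L$ Transformer blocks, assigning a constant number of ReLU layers to each $\mathsf{FFN}^{(\ell)}$; at most $D_0$ internal layers are allowed, so taking $D_0\ge1$ and one ReLU layer per block suffices.

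\textbf{Neutralizing attention.} Each attention sublayer should add nothing that breaks the simulation. The cheapest route is to use the residual flag: give each block's attention a single head with all-zero value projection (or zero $W_V,b_V$ fixed in the architecture by giving the head value dimension zero/omitted) and a residual path, so $\tilde H^{(\ell)}=H^{(\ell-1)}$ exactly regardless of routing. If a head must exist with trainable parameters, these are $O(1)$ per layer and can be set to zero in the subclass, so the total parameter count increases by only $O(L)\le O(W)$. Since the output reads only $H^{(L)}_{T-1}$, and the FFNs are position-wise, the classifier equals $x\mapsto \mathbf 1[w_{\mathrm{out}}^\top \phi(x)>0]$, where $x$ is the query row and $\phi$ is the composed FFN stack. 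This is exactly a ReLU network with $\Theta(L)$ layers on input $H_{T-1}\in\mathbb R^d$.

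\textbf{Parameter accounting.} Count the FFN weights as $\Theta(W)$ and the attention and residual weights as $O(L)$. Residual projections can be identity-free, with no residual on the FFNs, so that the stack is a plain feed-forward net. Also count $w_{\mathrm{out}}$. Scale the widths so the total is at most $W$ while the ReLU part is $\Theta(W)$; this is possible because $W\ge L$ leaves at least a constant width per layer.

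The only delicate point is making sure the residual/attention structure neither adds uncontrolled parameters nor mixes positions. I would handle this by fixing the architecture so the attention output is identically zero, which the definition permits via head-specific output dimensions or zero value maps.
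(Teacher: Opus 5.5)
Your proof is correct, but it neutralizes attention differently from the paper.

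\textbf{The paper's route.} The paper keeps the Transformer residual-free and makes each hard-attention sublayer act as the identity on the query row. It fixes the context tokens and adds a marker coordinate equal to $1$ only on the query token. This makes the query the unique score maximizer, and an identity value map then returns $H^{(\ell-1)}_{T-1}$.

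\textbf{Your route.} You switch on the residual flag for each attention sublayer and set the value map to zero (or use no head). Then $\tilde H^{(\ell)}=H^{(\ell-1)}$ holds identically, regardless of routing, tie-breaking, or the context rows.

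\textbf{Comparison.} Your version is cleaner. It needs no marker coordinate that every $\mathsf{FFN}^{(\ell)}$ must carry forward and keep at $0$ on context positions. It needs no $d^{(\ell-1)}\times d^{(\ell-1)}$ identity value matrix. It places no restriction on the context tokens. What the paper's route buys is that containment holds even for residual-free architectures, so it does not depend on the $\mathsf{Res}$ flag.

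\textbf{Two small corrections to your accounting.}
\begin{itemize}
\item A head that does exist does not cost $O(1)$ parameters per layer, because $W_Q,W_K,W_V$ read a $d^{(\ell-1)}$-dimensional input. With scalar query/key/value and a head-specific output projection, as the paper permits, it costs $O(d^{(\ell-1)})$. That is dominated by the first affine map of $\mathsf{FFN}^{(\ell)}$, so the total stays $\Theta(W)$.
\item When $W$ is only a small constant multiple of $L$, $L$ blocks with at least two weights each may exceed the budget. Use $\lfloor L/C\rfloor$ blocks instead. This is still within the allowed $\Theta(L)$ depth, since $\Ffam^{01}_{L,W}$ only requires $L(\mathcal A)\le L$.
\end{itemize}
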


\begin{proof}
We restrict to inputs where all context tokens are fixed, and only the query token varies. By choosing attention parameters so that, at the query position, the unique score-maximizing key is the query token itself (e.g., using a marker coordinate that equals $1$ only on the query token), hard attention acts as an identity map on the query token. Hence, on this subclass, the residual-free Transformer reduces to iterated application of the position-wise $\mathsf{FFN}^{(\ell)}$ on the query token, i.e., a deep ReLU MLP (up to constant-factor depth) with $\Theta(W)$ parameters.
\end{proof}

It is a standard result that depth-$L'$ ReLU networks with $W'$ parameters have VC-dimension $\Omega(W'L' \log(W'/L'))$~\citep{bartlett2019nearly}.
Applying this to our subclass with $L' = \Theta(L)$ and $W' = \Theta(W)$ yields a lower bound of $\Omega(WL \log(W/L))$.

\paragraph{Combined Lower Bound.}
Combining the two regimes, one of the two fixed classes witnesses the sum up to constants:
\begin{equation}
\begin{aligned}
&\exists\,\mathcal F^{01}\in\Ffam^{01}_{L,W}\quad\text{s.t.}\\
&
\operatorname{VCdim}(\mathcal F^{01}) \ge \Omega\left( WL \log T + WL \log \frac{W}{L} \right)
= \Omega\left( WL \log \frac{TW}{L} \right).
\end{aligned}
\end{equation}
\hfill \BlackBox

%% file: cot_fundamental_proof.tex
\section{Proof of Generic Sample Complexity Bounds for CoT (\texorpdfstring{\Cref{thm:cot_fundamental}}{Theorem \ref{thm:cot_fundamental}})}\label{app:cot-fundamental-proof}

In this appendix, we provide the detailed proofs for \Cref{thm:cot_fundamental}. We first establish two auxiliary lemmas, then prove the upper and lower bounds.

\paragraph{Full statement (for reference).}
Fix $(\mathcal{F}, \mathcal{X}, \Tout)$ and let $d_{\mathrm{TS}} := \TSdim(\mathcal{F}; \mathcal{X}, \Tout)$ and
$d_{\mathrm{AS}} := \ASdim(\mathcal{F}; \mathcal{X}, \Tout)$.
For every realizable pair $(\mathcal{D}, f_*)$ and every $\varepsilon,\delta\in(0,1)$, if
\begin{equation}
m \ge C \cdot \frac{d_{\mathrm{TS}} \log(1/\varepsilon) + \log(1/\delta)}{\varepsilon},
\end{equation}
then with probability at least $1-\delta$ over an i.i.d.\ sample $S_{\sCoT}\sim\mathcal{D}^m$ labeled by traces
$Z_i = {f_*}^{(\Tout)}(X_i)[-\Tout{:}]$, the output $\hat{f} = \mathrm{Cons}_{\sCoT}(S_{\sCoT})$ satisfies
$\mathcal{L}_{\mathrm{e2e}}(\hat{f}) \le \varepsilon$.
Moreover, if $d_{\mathrm{AS}}\ge 2$, there exist universal constants $c_0,c_1>0$ such that for any learning rule
$A:(\mathcal{X}\times\Sigma^{\Tout})^*\to \Sigma^{\mathcal X}$ and any $\varepsilon\in(0,1/8)$, if
$m < c_0 \cdot d_{\mathrm{AS}}/\varepsilon$, then there exist a realizable distribution $\mathcal{D}$ over $\mathcal{X}$
and a target $f_*\in\mathcal{F}$ such that for $S_{\sCoT}\sim\mathcal{D}^m$ labeled by traces
$Z_i = {f_*}^{(\Tout)}(X_i)[-\Tout{:}]$,
\begin{equation}
\Pr\!\left[\Pr_{X \sim \mathcal{D}}\big[A(S_{\sCoT})(X) \neq {f_*}^{(\Tout)}(X)[-1]\big] \ge \varepsilon\right] \ge c_1.
\end{equation}

\paragraph{CoT loss class.}
For any generator $f \in \mathcal{F}$ and any pair $(X, Z) \in \mathcal{X} \times \Sigma^{\Tout}$, define the binary CoT loss
\begin{equation}
\ell_f(X, Z) := \mathbf{1}\big[f^{(\Tout)}(X)[-\Tout{:}] \neq Z\big].
\end{equation}
Let $\mathcal{L}_{\mathrm{CoT}} := \{\ell_f : f \in \mathcal{F}\}$.

\subsection{Auxiliary Lemmas}

\begin{lemma}[End-to-end error is dominated by CoT loss]\label{lem:e2e_le_cot}
For every $f \in \mathcal{F}$ and every realizable pair $(\mathcal{D}, f_*)$,
\begin{equation}
\mathcal{L}_{\mathrm{e2e}}(f) \le \mathbb{E}_{X\sim \mathcal{D}}\big[\ell_f\big(X, {f_*}^{(\Tout)}(X)[-\Tout{:}]\big)\big].
\end{equation}
\end{lemma}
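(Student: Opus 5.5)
The plan is a pointwise comparison of indicators. The expectation on each side is then handled by monotonicity of $\mathbb{E}_{X\sim\mathcal D}$. Fix $f\in\mathcal F$ and a realizable pair $(\mathcal D,f_*)$. For each input $X\in\mathcal X$, write $Z_*(X):={f_*}^{(\Tout)}(X)[-\Tout{:}]$ for the ground-truth trace. By the definition of the CoT-learning setup (\Cref{def:learnable-with-CoT}), the target final answer is $Z_*(X)[-1]={f_*}^{(\Tout)}(X)[-1]$. So the end-to-end loss in \eqref{eq:e2e-loss} can be written as
\[
\mathcal L_{\mathrm{e2e}}(f)=\mathbb E_{X\sim\mathcal D}\bigl[\mathbf 1[f^{(\Tout)}(X)[-1]\neq Z_*(X)[-1]]\bigr].
\]

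The key step is the implication, for every fixed $X$,
\[
f^{(\Tout)}(X)[-1]\neq Z_*(X)[-1]\;\Longrightarrow\; f^{(\Tout)}(X)[-\Tout{:}]\neq Z_*(X).
\]
This holds because the last token of the generated suffix $f^{(\Tout)}(X)[-\Tout{:}]$ is exactly $f^{(\Tout)}(X)[-1]$. Two length-$\Tout$ sequences that differ in their final coordinate cannot be equal. Hence
\[
\mathbf 1[f^{(\Tout)}(X)[-1]\neq Z_*(X)[-1]]\le \ell_f(X,Z_*(X))
\]
pointwise. Taking expectation over $X\sim\mathcal D$ gives the claimed inequality.

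There is essentially no obstacle. The only point needing care is bookkeeping of indices: one must check that the final token of the full sequence $f^{(\Tout)}(X)\in\Sigma^{\Tin+\Tout}$ coincides with the last entry of its length-$\Tout$ suffix, so the two uses of ``$[-1]$'' refer to the same token. This holds for $\Tout\ge 1$ by the Python-style indexing convention. Realizability is not actually needed for the inequality; it only guarantees that $Z_*(X)$ is the trace appearing in the training data, so the right-hand side is the population version of the quantity that $\mathrm{Cons}_{\sCoT}$ drives to zero empirically.
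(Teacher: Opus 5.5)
Your proof is correct and is essentially the paper's argument: a pointwise comparison showing that the final-token mismatch indicator is bounded by $\ell_f(X,Z_*(X))$, followed by taking the expectation over $X\sim\mathcal D$. You state the contrapositive of the paper's step (equal traces imply equal final tokens), and your remark that realizability is not used in the inequality itself is accurate.
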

\begin{proof}
Fix $X \in \mathcal{X}$ and write $Z := {f_*}^{(\Tout)}(X)[-\Tout{:}]\in\Sigma^{\Tout}$.
If $f^{(\Tout)}(X)[-\Tout{:}]=Z$, then in particular $f^{(\Tout)}(X)[-1]=Z[-1]$, so
\begin{equation}
\mathbf{1}\big[f^{(\Tout)}(X)[-1] \neq {f_*}^{(\Tout)}(X)[-1]\big]
\le
\mathbf{1}\big[f^{(\Tout)}(X)[-\Tout{:}] \neq {f_*}^{(\Tout)}(X)[-\Tout{:}]\big]
= \ell_f(X, Z)
\end{equation}
pointwise. Taking expectation over $X\sim\mathcal{D}$ proves the claim.
\end{proof}

\begin{lemma}[$\TSdim$ equals the VC-dimension of the CoT loss class]\label{lem:tcs_equals_vc}
\begin{equation}
\TSdim(\mathcal{F}; \mathcal{X}, \Tout) = \mathrm{VCdim}(\mathcal{L}_{\mathrm{CoT}}),
\end{equation}
where $\mathcal{L}_{\mathrm{CoT}} = \{\ell_f : f \in \mathcal{F}\}$ with $\ell_f(X, Z) = \mathbf{1}[f^{(\Tout)}(X)[-\Tout{:}] \neq Z]$.
\end{lemma}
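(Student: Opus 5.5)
We prove the equality by establishing the two inequalities separately. In each direction we translate a shattered set for one notion into a shattered set for the other. The translation is a direct unwinding of \Cref{def:tsdim} and \Cref{def:shattering} applied to the loss class $\mathcal{L}_{\mathrm{CoT}}$, whose domain is $\mathcal{X}\times\Sigma^{\Tout}$. The one convention to fix is polarity. In \Cref{def:tsdim}, $b_i=1$ means the generated trace \emph{equals} $R_S(X_i)$, which corresponds to $\ell_f=0$. We will therefore use the complement labeling $1-b$ throughout. Since $b\mapsto 1-b$ is a bijection of $\{0,1\}^n$, shattering is unaffected.

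\emph{Direction $\TSdim\le\mathrm{VCdim}(\mathcal{L}_{\mathrm{CoT}})$.} Suppose $S=\{X_1,\dots,X_n\}$ is trace shattered with witness $R_S$. Set $P:=\{(X_i,R_S(X_i))\}_{i=1}^n\subseteq\mathcal{X}\times\Sigma^{\Tout}$. These $n$ points are distinct because the $X_i$ are distinct. Given any target labeling $c\in\{0,1\}^n$ of $P$, take $b:=1-c$ and the generator $f_b$ from \Cref{def:tsdim}. Then $\ell_{f_b}(X_i,R_S(X_i))=\mathbf{1}[f_b^{(\Tout)}(X_i)[-\Tout{:}]\neq R_S(X_i)]=1-b_i=c_i$. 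Hence $P$ is shattered by $\mathcal{L}_{\mathrm{CoT}}$.

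\emph{Direction $\mathrm{VCdim}(\mathcal{L}_{\mathrm{CoT}})\le\TSdim$.} This is the only step requiring care. A set $P=\{(X_i,Z_i)\}_{i=1}^n$ shattered by $\mathcal{L}_{\mathrm{CoT}}$ might contain two points with the same first coordinate, say $X_i=X_k$ with $Z_i\neq Z_k$. Such a pair cannot be mapped to a trace-shattered set of $n$ \emph{distinct} inputs. We rule this out using the fact that each $f$ produces exactly one trace on $X_i$. Any $f$ can therefore have $\ell_f=0$ on at most one of $(X_i,Z_i)$ and $(X_i,Z_k)$. Consequently the labeling assigning $0$ to both points is not realizable, contradicting shattering. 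So the $X_i$ are distinct. We define $R_S(X_i):=Z_i$, and for each $b$ we take the $f$ realizing the labeling $1-b$ on $P$. By the same identity as in the first direction, this $f$ witnesses \Cref{def:tsdim}, so $\{X_1,\dots,X_n\}$ is trace shattered.

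The two inequalities together give the equality. Both sides are suprema over finite shattered sets, so the infinite case is handled by the same argument applied to every finite $n$.
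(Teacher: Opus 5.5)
Your proof is correct and follows essentially the same route as the paper. Both directions unwind the definitions using the polarity flip $b\mapsto 1-b$. In the reverse direction you get distinct inputs exactly as the paper does: a deterministic generator cannot have zero loss on two points that share an input but carry different traces.
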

\begin{proof}
\emph{($\ge$)} Suppose $S = \{X_1, \ldots, X_n\}$ is trace shattered with witness $R_S: S \to \Sigma^{\Tout}$.
Consider the $n$ labeled examples in the loss domain: $\{(X_i, R_S(X_i))\}_{i=1}^n$.
For any $b \in \{0,1\}^n$, by definition there exists $f_b \in \mathcal{F}$ such that
$\ell_{f_b}(X_i, R_S(X_i)) = 0$ when $b_i = 1$ and $= 1$ when $b_i = 0$.
Thus $\mathcal{L}_{\mathrm{CoT}}$ shatters these $n$ points, so $\mathrm{VCdim}(\mathcal{L}_{\mathrm{CoT}}) \ge n$.

\emph{($\le$)} Conversely, suppose $\mathrm{VCdim}(\mathcal{L}_{\mathrm{CoT}}) \ge n$.
Then there exist $(X_i, y^{(i)}_{0:\Tout-1}) \in \mathcal{X} \times \Sigma^{\Tout}$ for $i \in [n]$ such that for every label vector
$a \in \{0,1\}^n$ there exists $f_a \in \mathcal{F}$ with $\ell_{f_a}(X_i, y^{(i)}_{0:\Tout-1}) = a_i$.
We may assume the inputs $X_i$ are distinct. Indeed, if $X_i=X_j$ but $y^{(i)}_{0:\Tout-1}\neq y^{(j)}_{0:\Tout-1}$, no deterministic generator can realize loss $0$ on both points; if the two traces are equal, the two loss-domain points are identical and cannot both belong to a shattered set of distinct points.
Let $S = \{X_1, \ldots, X_n\}$ and define $R_S(X_i) := y^{(i)}_{0:\Tout-1}$.
For any $b \in \{0,1\}^n$, apply shattering with $a_i := 1 - b_i$.
Then $\ell_{f_a}(X_i, R_S(X_i)) = 0$ when $b_i = 1$ and $= 1$ when $b_i = 0$,
which means ${f_a}^{(\Tout)}(X_i)[-\Tout{:}] = R_S(X_i)$ if $b_i = 1$ and differs if $b_i = 0$.
So $S$ is trace shattered and $\TSdim(\mathcal{F}; \mathcal{X}, \Tout) \ge n$.
\end{proof}

\subsection{Upper Bound}

\begin{proof}[Proof of the upper bound in \Cref{thm:cot_fundamental}]
By \Cref{lem:tcs_equals_vc}, $\mathrm{VCdim}(\mathcal{L}_{\mathrm{CoT}}) = d_{\mathrm{TS}}$.
In the realizable setting, $\mathrm{Cons}_{\sCoT}$ can output some $\hat{f}$ with
zero empirical loss on the sample (i.e., $\ell_{\hat{f}}(X_i, Z_i) = 0$ for all $i$), where $Z_i = {f_*}^{(\Tout)}(X_i)[-\Tout{:}]$.

It remains to show that \emph{any} hypothesis in a VC class that is consistent on $m$ samples has small true error.
A standard VC generalization bound (realizable case) states that there exists a universal constant $C > 0$
such that if
\begin{equation}
m \ge C \cdot \frac{d_{\mathrm{TS}} \log(1/\varepsilon) + \log(1/\delta)}{\varepsilon},
\end{equation}
then with probability at least $1 - \delta$,
\begin{equation}
\mathbb{E}_{X\sim \mathcal{D}}\big[\ell_{\hat{f}}\big(X, {f_*}^{(\Tout)}(X)[-\Tout{:}]\big)\big] \le \varepsilon.
\end{equation}
Finally, \Cref{lem:e2e_le_cot} gives $\mathcal{L}_{\mathrm{e2e}}(\hat{f}) \le \varepsilon$.
\end{proof}

\subsection{Lower Bound}

\begin{proof}[Proof of the lower bound in \Cref{thm:cot_fundamental}]
Let $d := d_{\mathrm{AS}}$ and assume $d \ge 2$ (otherwise the claim is trivial).
By \Cref{def:tidim}, there exist a set $S = \{X_1, \ldots, X_d\} \subseteq \mathcal{X}$,
a prefix assignment $R_S: S \to \Sigma^{\Tout-1}$,
such that for every $b \in \{0,1\}^d$ there exists $f_b \in \mathcal{F}$ with
\begin{equation}
{f_b}^{(\Tout)}(X_i)[-\Tout{:}-1] = R_S(X_i),
\qquad
{f_b}^{(\Tout)}(X_i)[-1] = b_i
\quad \forall i \in [d],
\end{equation}
so the first $\Tout-1$ generated tokens are fixed while the final token equals $b_i$.

Fix $\varepsilon \in (0, 1/8)$.
We define a hard distribution $\mathcal{D}$ supported on $S$:
put a large mass on $X_1$ and spread the remaining mass over $X_2, \ldots, X_d$:
\begin{equation}
\mathcal{D}(X_1) = 1 - 4\varepsilon,
\qquad
\mathcal{D}(X_i) = \frac{4\varepsilon}{d-1} \quad \text{for } i = 2, \ldots, d.
\end{equation}

\paragraph{Random target choice.}
Let $B = (B_1, \ldots, B_d) \in \{0,1\}^d$ be uniformly random, and choose the corresponding
target generator $f_B \in \mathcal{F}$ guaranteed above.
We analyze the performance of an arbitrary learning rule $A$ under the joint randomness of
$B$ and the sample $S_{\sCoT} \sim \mathcal{D}^m$ labeled by full traces of $f_B$.

Let $\widehat g:=A(S_{\sCoT})\in\Sigma^{\mathcal X}$ be the final-token predictor returned by the learner.
Define the \emph{rare-points risk}
\begin{equation}
\mathcal{L}_{\mathrm{rare}} := \sum_{i=2}^d \mathcal{D}(X_i) \cdot \mathbf{1}\bigl[\widehat{g}(X_i) \neq B_i\bigr].
\end{equation}
Note that $0 \le \mathcal{L}_{\mathrm{rare}} \le \sum_{i=2}^d \mathcal{D}(X_i) = 4\varepsilon$.

\paragraph{Key observation: if a point is unseen, its label is unpredictable.}
Fix $i \in \{2, \ldots, d\}$ and let $E_i$ be the event that $X_i$ does not appear in the sample $S_{\sCoT}$.
On the event $E_i$, the entire observed sample (inputs and traces) is independent of the bit $B_i$:
indeed, the sample contains traces only for points $X_j$ that were drawn, and for every such point
$X_j \in S$ the trace prefix is the fixed value $R_S(X_j)$ (independent of $B$), while the final token
reveals only $B_j$ for those drawn indices $j$.
Formally, conditioned on $E_i$, the sample is a function of $\{B_j : X_j \text{ appears in } S_{\sCoT}\}$ and is therefore independent of $B_i$.
Since $X_i$ never appears under $E_i$, the bit $B_i$ is never revealed in the data and remains
uniform $\{0,1\}$ even conditioned on the realized sample.
Therefore,
\begin{equation}
\mathbb{P}\bigl[\widehat{g}(X_i) \neq B_i \mid E_i\bigr] \ge \frac{1}{2}.
\end{equation}
Taking expectation,
$\mathbb{P}\bigl[\widehat{g}(X_i) \neq B_i\bigr] \ge \mathbb{P}(E_i) \cdot \frac{1}{2}$.

\paragraph[Compute probability of $E_i$]{Compute $\mathbb{P}(E_i)$.}
Each draw hits $X_i$ with probability $4\varepsilon/(d-1)$, hence
$\mathbb{P}(E_i) = \bigl(1 - \frac{4\varepsilon}{d-1}\bigr)^m$.

\paragraph{Lower bound the expected risk.}
By linearity of expectation and symmetry over $i = 2, \ldots, d$,
\begin{equation}
\mathbb{E}[\mathcal{L}_{\mathrm{rare}}]
= \sum_{i=2}^d \mathcal{D}(X_i) \cdot \mathbb{P}\bigl[\widehat{g}(X_i) \neq B_i\bigr]
\ge \sum_{i=2}^d \frac{4\varepsilon}{d-1} \cdot \frac{1}{2} \cdot
\Bigl(1 - \frac{4\varepsilon}{d-1}\Bigr)^m
= 2\varepsilon \Bigl(1 - \frac{4\varepsilon}{d-1}\Bigr)^m.
\end{equation}
Using Bernoulli's inequality $(1-u)^m \ge 1 - mu$ for $u \in [0,1]$, if
$m \le \frac{d-1}{16\varepsilon}$,
then $(1 - \frac{4\varepsilon}{d-1})^m \ge \frac{3}{4}$, so
$\mathbb{E}[\mathcal{L}_{\mathrm{rare}}] \ge 2\varepsilon \cdot \frac{3}{4} = \frac{3}{2}\varepsilon$.
Since $d\ge 2$, the assumption $m < d/(32\varepsilon)$ implies $m \le (d-1)/(16\varepsilon)$, so this condition is satisfied after adjusting constants.

\paragraph{From expectation to probability.}
Let
\[
p := \mathbb{P}_{B,S_{\sCoT}}(\mathcal{L}_{\mathrm{rare}} \ge \varepsilon).
\]
Since $\mathcal{L}_{\mathrm{rare}} \le 4\varepsilon$, we have
$\mathbb{E}[\mathcal{L}_{\mathrm{rare}}] \le \varepsilon \cdot (1-p) + 4\varepsilon \cdot p = \varepsilon + 3\varepsilon p$.
Thus $\mathbb{E}[\mathcal{L}_{\mathrm{rare}}] \ge \frac{3}{2}\varepsilon$ implies $p \ge \frac{1}{6}$.
In other words,
$\mathbb{P}_{B, S_{\sCoT}}(\mathcal{L}_{\mathrm{rare}} \ge \varepsilon) \ge \frac{1}{6}$.

\paragraph{Fix a hard target.}
By averaging over $B$, there exists a particular $b^* \in \{0,1\}^d$ such that, letting $f_* := f_{b^*}$,
\begin{equation}
\mathbb{P}_{S_{\sCoT} \sim \mathcal{D}^m}\!\left(
\sum_{i=2}^d \mathcal{D}(X_i) \cdot \mathbf{1}\bigl[A(S_{\sCoT})(X_i) \neq {f_*}^{(\Tout)}(X_i)[-1]\bigr]
\ge \varepsilon
\right) \ge \frac{1}{6}.
\end{equation}
Since the full end-to-end risk dominates the rare-points risk $\mathcal{L}_{\mathrm{rare}}$, the same lower bound holds for
$\mathbb{P}\bigl(\mathbb{P}_{X \sim \mathcal{D}}[A(S_{\sCoT})(X) \neq {f_*}^{(\Tout)}(X)[-1]] \ge \varepsilon\bigr)$.
Setting $c_0 = 1/32$ (absorbing constants) and $c_1 = 1/6$ proves the theorem.
\end{proof}

%% file: cot_ub.tex
\section{Proof of CoT Sample Complexity Upper Bound (\texorpdfstring{\Cref{thm:cot_upper_bound}}{Theorem \ref{thm:cot_upper_bound}})}\label{app:cot-upper-bound-proof}

\begin{proof}[Proof of \Cref{thm:cot_upper_bound}]
Throughout this proof, we use the notation from \Cref{thm:cot_upper_bound}: $\Tout$ denotes the generation length, and $\Ttotal = \Tin + \Tout$ denotes the total context length.
Fix any $\mathcal{F}\in \Ffam_{L,W}$.

We prove the $\TSdim$ bound for this fixed class by upper bounding the VC-dimension of the CoT loss class.
Recall the CoT loss class:
\begin{equation}
\ell_f(X, Z) := \mathbf{1}\big[f^{(\Tout)}(X)[-\Tout{:}] \neq Z\big],
\qquad
\mathcal{L}_{\mathrm{CoT}} := \{\ell_f : f \in \mathcal{F}\}.
\end{equation}
Equivalently (since $f$ is deterministic), $\ell_f(X, Z) = 0$ iff $f(X \concat Z[{:}t]) = Z[t]$ for all $t \in \{0,\ldots,\Tout-1\}$.
Let $K := |\Sigma|$.
By \Cref{lem:tcs_equals_vc} (with $T \leftarrow \Tout$),
\begin{equation}\label{eq:tc-vc}
\TSdim(\mathcal{F}; \mathcal{X}, \Tout) = \mathrm{VCdim}(\mathcal{L}_{\mathrm{CoT}}).
\end{equation}
Hence it suffices to show $\mathrm{VCdim}(\mathcal{L}_{\mathrm{CoT}}) = \mathcal{O}(WL \log(\Ttotal WLK))$.

\paragraph[Step 1: Unroll teacher-forcing prefixes]{Step 1: Unroll teacher-forcing prefixes and reduce to a growth bound for $\mathcal{F}$.}
Fix $n \ge 1$ and consider any $n$ labeled examples $\{(X_i, Z_i)\}_{i=1}^n \subseteq \mathcal{X} \times \Sigma^{\Tout}$.
For each $(i, t) \in [n] \times \{0,\ldots,\Tout-1\}$, define the teacher-forcing prefix context
\begin{equation}
c_{i,t} := X_i \concat Z_i[{:}t].
\end{equation}
Each $c_{i,t}$ has length at most $\Tin + t \le \Ttotal-1$.

Let $M := n\Tout$ and enumerate these contexts as $\{c_j\}_{j=1}^M := \{c_{i,t}\}_{(i,t) \in [n] \times \{0,\ldots,\Tout-1\}}$.
Define the multi-class growth function for next-token generators:
\begin{equation}
\Pi_{\mathcal{F}}(c_{1:M}) := \left|\left\{(f(c_1), \ldots, f(c_M)) \in \Sigma^M : f \in \mathcal{F}\right\}\right|,
\qquad
\Pi_{\mathcal{F}}(M) := \max_{\substack{c_{1:M}\\ |c_m|\le \Ttotal\ \forall m}} \Pi_{\mathcal{F}}(c_{1:M}).
\end{equation}

Now observe: for a fixed $f$, the loss vector $(\ell_f(X_1, Z_1), \ldots, \ell_f(X_n, Z_n)) \in \{0,1\}^n$
is a deterministic function of the prediction vector $(f(c_1), \ldots, f(c_M)) \in \Sigma^M$,
since $\ell_f(X_i, Z_i) = 0$ iff $f(c_{i,t}) = Z_i[t]$ for all $t \in \{0,\ldots,\Tout-1\}$.

Therefore, the number of distinct $\{0,1\}^n$ labelings realized by $\mathcal{L}_{\mathrm{CoT}}$ on $\{(X_i, Z_i)\}_{i=1}^n$ is at most $\Pi_{\mathcal{F}}(c_{1:M}) \le \Pi_{\mathcal{F}}(M)$.
In particular, if $\mathcal{L}_{\mathrm{CoT}}$ shatters these $n$ examples, then $2^n$ distinct loss vectors occur, so
\begin{equation}\label{eq:key-ineq}
2^n \le \Pi_{\mathcal{F}}(n\Tout).
\end{equation}

\paragraph{Step 2: A growth bound for multi-class Transformer next-token generators.}
We claim there exist universal constants $C, C' > 0$ such that for all $M \ge 1$,
\begin{equation}\label{eq:PiF-bound}
\log \Pi_{\mathcal{F}}(M) \le C \cdot WL \cdot \log(C' M\Ttotal WLK).
\end{equation}

\begin{proof}[Proof of \eqref{eq:PiF-bound}]
		Fix arbitrary contexts $c_{1:M}$, each of length at most $\Ttotal$, and write $\tau_m:=|c_m|-1$ for the last position of context $c_m$. Consider the parameter vector $\theta \in \mathbb{R}^W$.
		We apply the same recursive partitioning construction as in the VC upper bound proof (\Cref{app:upper-bound-proof}), with $N \leftarrow M$ and context length $\Ttotal$.
		Concretely, at each layer we first refine by the attention score-difference polynomials (freezing Average-Hard routing/ties), and then refine by all internal $\mathsf{FFN}$ ReLU pre-activations (freezing ReLU masks), exactly as in \Cref{app:upper-bound-proof}.
	Note that since the token embedding matrix $W_{\mathsf{TE}}$ is included in $\theta$, the layer-$0$ representations are affine in $\theta$; this is already accounted for in the base case of \Cref{app:upper-bound-proof}.
	Thus we obtain a final partition $\mathcal{P}_L$ of $\mathbb{R}^W$ such that:
\begin{itemize}
    \item[(i)] For every cell $C \in \mathcal{P}_L$, all branching decisions are fixed on $c_{1:M}$;
    \item[(ii)] On each $C \in \mathcal{P}_L$, every hidden representation $H^{(L)}_{m,t}(\theta)$ is polynomial in $\theta$ with degree $\deg_{\mathrm{out}} = \mathcal{O}(L)$;
    \item[(iii)] The number of cells satisfies (by the same Warren bound calculation as the VC upper bound proof):
    \begin{equation}\label{eq:cells}
    \log |\mathcal{P}_L| \le \mathcal{O}(WL \log(M\Ttotal WL)).
    \end{equation}
\end{itemize}

It remains to bound, for a fixed final cell $C \in \mathcal{P}_L$, how many distinct next-token prediction vectors $(f_\theta(c_1), \ldots, f_\theta(c_M))$ can arise as $\theta$ ranges over $C$.

For each input $m \in [M]$ and token $x \in \Sigma$, define the output logit
\begin{equation}
z_{m,x}(\theta) := \big\langle W_{\mathsf{DE}}[x,:], H^{(L)}_{m, \tau_m}(\theta) \big\rangle.
\end{equation}
On a fixed cell $C$, $H^{(L)}_{m, \tau_m}(\theta)$ is polynomial in $\theta$ of degree $\mathcal{O}(L)$, so the logit $z_{m,x}(\theta)$ is polynomial in $\theta$ of degree at most $\mathcal{O}(L)+1 = \mathcal{O}(L)$ (since the decoding matrix $W_{\mathsf{DE}}$ is a coordinate projection of $\theta$).
The predicted token is $f_\theta(c_m) = \arg\max_{x \in \Sigma} z_{m,x}(\theta)$, with any fixed tie-breaking rule.

For multi-class argmax outputs, it suffices to control the ternary sign pattern of all pairwise logit differences:
\begin{equation}
\Delta_{m,x,x'}(\theta) := z_{m,x}(\theta) - z_{m,x'}(\theta), \qquad m \in [M],\ x < x' \in \Sigma.
\end{equation}
There are $m_{\mathrm{out}} = M \binom{K}{2} = \mathcal{O}(MK^2)$ such polynomials, each of degree at most $D_{\mathrm{out}} = \mathcal{O}(L)$.
On any region where all these ternary signs are fixed, the induced ordering (with ties) of the logits is fixed for every $m$, hence the argmax output $f_\theta(c_m)$ is fixed as well.
Therefore, the number of distinct prediction vectors realized within $C$ is at most the number of distinct ternary sign patterns of $\{\Delta_{m,x,x'}\}$.

Applying Warren's bound (\Cref{lem:warren-app}) to these $m_{\mathrm{out}}$ polynomials (padding with constant polynomials if $m_{\mathrm{out}} < W$) gives
\begin{equation}\label{eq:within-cell}
\log \Pi_C \le \mathcal{O}(W \log(MKL)),
\end{equation}
where $\Pi_C$ denotes the number of distinct prediction vectors realized by parameters in cell $C$.

Finally, summing over cells,
\begin{equation}
\Pi_{\mathcal{F}}(c_{1:M}) \le \sum_{C \in \mathcal{P}_L} \Pi_C \le |\mathcal{P}_L| \cdot \max_{C \in \mathcal{P}_L} \Pi_C.
\end{equation}
Combining \eqref{eq:cells} and \eqref{eq:within-cell}, absorbing the within-cell term into $O(WL\log(M\Ttotal WLK))$, yields \eqref{eq:PiF-bound}.
\end{proof}

\paragraph[Step 3: Conclude the VC bound]{Step 3: Conclude the VC bound for $\mathcal{L}_{\mathrm{CoT}}$.}
Combine \eqref{eq:key-ineq} with \eqref{eq:PiF-bound} at $M = n\Tout$:
\begin{equation}
n \le \log \Pi_{\mathcal{F}}(n\Tout) \le C \cdot WL \cdot \log(C' n\Tout \Ttotal WLK).
\end{equation}
Using $\log(n\Tout \Ttotal WLK) \le \log n + \log(\Tout \Ttotal WLK)$ and $\Tout\le \Ttotal$, we get
\begin{equation}
n \le C \cdot WL \cdot \log n + C \cdot WL \cdot \log(\Ttotal WLK) + \mathcal{O}(WL).
\end{equation}
Applying the standard inversion (if $n \le a \log n + b$ then $n = \mathcal{O}(a \log a + b)$), with $a = \Theta(WL)$ and $b = \Theta(WL \log(\Ttotal WLK))$, yields
\begin{equation}
\mathrm{VCdim}(\mathcal{L}_{\mathrm{CoT}}) = \mathcal{O}(WL \log(\Ttotal WLK)).
\end{equation}
When $K\le \mathrm{poly}(W)$, this simplifies to $\mathcal{O}(WL \log(\Ttotal W))$.
By \eqref{eq:tc-vc}, this proves the $\TSdim$ bound.

\paragraph{Step 4: Consequence for sample complexity.}
Plugging $d_{\mathrm{TS}} = \TSdim(\mathcal{F}; \mathcal{X}, \Tout)$ into \Cref{thm:cot_fundamental} gives
\begin{equation}
m(\varepsilon, \delta) = O\!\left(\frac{d_{\mathrm{TS}} \log(1/\varepsilon) + \log(1/\delta)}{\varepsilon}\right),
\end{equation}
and substituting $d_{\mathrm{TS}} = \mathcal{O}(WL \log(\Ttotal WLK))$ yields the corresponding sample-complexity upper bound. Since $\mathcal{F}\in\Ffam_{L,W}$ was arbitrary, the $\TSdim$ bound holds for every class in the family.
\end{proof}

%% file: cot_lb.tex
\section{Proof of CoT Sample Complexity Lower Bound}\label{app:cot-lower-bound-proof}

In this appendix, we prove the lower bounds for CoT sample complexity. We first establish that the answer shattering dimension can grow logarithmically with the generation length (\Cref{thm:cot_shattering}), then prove the Transformer scratchpad $\ASdim$ lower bound and combine it with the ordinary supervised obstruction (\Cref{cor:cot-tidim-lower}).

\subsection{Logarithmic Shattering via Scratchpad (\texorpdfstring{\Cref{thm:cot_shattering}}{Theorem \ref{thm:cot_shattering}})}

\begin{proof}[Proof of \Cref{thm:cot_shattering}]
We construct a constant-depth, constant-width autoregressive Transformer whose first
$\Tout-1$ generated tokens are independent of the labeling, while the final token
realizes an arbitrary labeling of $\Omega(\log \Ttotal)$ inputs.

\paragraph{Next-position convention.}
At the step that generates the token in absolute position $\tau$, the visible prefix
occupies positions $0,\ldots,\tau-1$, and the Transformer output is read from the
last visible position $\tau-1$, as in \Cref{subsec:autoregressive}. Since the
positional encoding contains the scalar position, the representation at the last
visible position can compute the next-position scalar $\tau=(\tau-1)+1$. Below,
when we say that the current step uses $\tau$, we mean this computed next-position
scalar; no extra query token is introduced.

Assume first that $N$ is larger than a universal constant; smaller $N$ are absorbed
by constants. Let
\begin{equation}
m:=\lfloor \log_2 N\rfloor,\qquad M:=2^m,
\end{equation}
so $M=\Theta(N)$. We answer-shatter $m$ inputs. Set
\begin{equation}
\ell:=\lceil \log_2 m\rceil,\qquad L_{\mathrm{in}}:=\ell+1,\qquad
L_{\mathrm{out}}:=m+1,\qquad \Tin:=L_{\mathrm{in}}.
\end{equation}

\paragraph{Decorated constant-size alphabet.}
The main text displays the scratchpad using the symbols
$\{0,1,\#,\mathrm{END0},\mathrm{END1}\}$. Formally, we use constantly many
decorated copies of these tokens. These are still ordinary tokens in a
constant-size alphabet; the decorations simply store the finite-state
information, such as carry and borrow bits, in the generated prefix itself. Let
\begin{equation}
\Sigma =
\{0,1,\mathrm{END0},\mathrm{END1},\mathrm{PAD}\}
\cup \Gamma_{\mathrm{sep}}\cup \Gamma_{\mathrm{dec}}\cup \Gamma_{\mathrm{tab}},
\end{equation}
where
\begin{equation}
\Gamma_{\mathrm{sep}}
=\{\#_{\mathrm{in}}^{z},\#_{\mathrm{dec}}^{z}:z\in\{0,1\}\}\cup\{\#_{\mathrm{tab}}\},
\end{equation}
\begin{equation}
\Gamma_{\mathrm{dec}}=\{D_{b,\rho,z}: b,\rho,z\in\{0,1\}\},\qquad
\Gamma_{\mathrm{tab}}=\{T_{b,\rho}: b,\rho\in\{0,1\}\}.
\end{equation}
Here $b$ is the visible bit. In $D_{b,\rho,z}$, $\rho$ is the outgoing borrow
for the decrement rule and $z$ records whether all bits generated so far in the
current decode row are zero. In $T_{b,\rho}$, $\rho$ is the outgoing carry for
the increment rule. The superscript on $\#_{\mathrm{in}}^z$ and
$\#_{\mathrm{dec}}^z$ records whether the preceding row is all zero. These
decorations are deterministic functions of the input and the already generated
prefix, and are never label-dependent.

For each integer $a$, let $\mathrm{bin}_r(a)$ be the $r$-bit binary representation
of $a$ in least-significant-bit first order. Define
\begin{equation}
x_i:=\mathrm{bin}_{\ell}(i)\#_{\mathrm{in}}^{\mathbf{1}[i=0]}\in\Sigma^{\Tin},
\qquad i=0,\ldots,m-1,
\end{equation}
and let $S:=\{x_0,\ldots,x_{m-1}\}$.

\paragraph{Addressing and finite-state primitives.}
We use fixed quadratic positional features: the token at absolute position $p$
contains positional coordinates $(p,-p^2,1)$. Thus a hard-attention head can
retrieve any visible integer position $r$: with query $q(r)=(2r,1,0)$ and keys
$k_p=(p,-p^2,1)$, the score is
\begin{equation}
\langle q(r),k_p\rangle = 2rp-p^2 = r^2-(p-r)^2,
\end{equation}
whose unique visible maximizer is $p=r$.

We also use constant-size ReLU equality gates. For integer $a$, define
\begin{equation}
\mathrm{eq}_a(u):=
2\left(
\mathrm{ReLU}\!\left(u-a+\frac12\right)
-2\mathrm{ReLU}(u-a)
+\mathrm{ReLU}\!\left(u-a-\frac12\right)
\right).
\end{equation}
For every integer $u$, $\mathrm{eq}_a(u)=\mathbf{1}[u=a]$. Any Boolean function
of constantly many token-type indicators, state bits, and such equality gates
can be implemented by a constant-size ReLU FFN. All constants such as $m,M$ and
the final position are fixed as part of the architecture for this value of $N$.

\paragraph{The label-invariant scratchpad.}
Fix an arbitrary labeling $y=(y_0,\ldots,y_{m-1})\in\{0,1\}^m$ and encode it as
\begin{equation}
s=s(y):=\sum_{i=0}^{m-1} y_i2^i\in\{0,\ldots,M-1\},
\qquad\text{so that}\qquad \mathrm{bit}_i(s)=y_i.
\end{equation}
The only label-dependent trainable scalar is
\begin{equation}
\beta_s:=sL_{\mathrm{out}}.
\end{equation}
It is gated off until the final generation step. Set
\begin{equation}
\Tout:=mL_{\mathrm{in}}+ML_{\mathrm{out}}+3.
\end{equation}
For input $x_i$, the first $\Tout-1$ generated tokens form a prefix
$R_S(x_i)$ independent of $s$. If decorations are ignored, this prefix is
\begin{equation}
\mathrm{bin}_{\ell}(i-1)\#,\ \mathrm{bin}_{\ell}(i-2)\#,\ \ldots,\ 
\mathrm{bin}_{\ell}(0)\#,\ \mathrm{END0}
\end{equation}
(empty before $\mathrm{END0}$ when $i=0$), followed by
\begin{equation}
\mathrm{bin}_{m}(0)\#,\ \mathrm{bin}_{m}(1)\#,\ \ldots,\ 
\mathrm{bin}_{m}(M-1)\#,\ \mathrm{END1},
\end{equation}
followed by $\mathrm{PAD}$ tokens until the prefix length is exactly $\Tout-1$.

\paragraph{Generating the scratchpad.}
We describe the next-token rule. A constant number of heads retrieve: (i) the
latest boundary token among the decorated separators, $\mathrm{END0}$, and
$\mathrm{END1}$, using a score with a large type bonus plus the position
coordinate; (ii) the unique $\mathrm{END0}$ token if it has appeared; (iii) the unique
$\mathrm{END1}$ token if it has appeared; (iv) the same-column predecessor at
position $\tau-L_{\mathrm{in}}$; and (v) the same-column predecessor at position
$\tau-L_{\mathrm{out}}$. The quadratic addressing primitive implements the
constant-offset predecessor retrievals. When such a predecessor position is not
visible, the corresponding value is ignored by the FFN.

If $\mathrm{END1}$ has already appeared and the current step is not the final
step, the model outputs $\mathrm{PAD}$. If $\mathrm{END0}$ has appeared but
$\mathrm{END1}$ has not, the model is in the table phase. Let $p_0$ be the
position of $\mathrm{END0}$. The table ends when
\begin{equation}
\mathrm{eq}_{ML_{\mathrm{out}}}\bigl(\tau-(p_0+1)\bigr)=1,
\end{equation}
at which point the model outputs $\mathrm{END1}$. Otherwise, write
$k=\tau-b-1$, where $b$ is the latest boundary position. If $k=m$, the model
outputs $\#_{\mathrm{tab}}$. If $0\le k<m$, it outputs the next bit of the
current table row. The first row after $\mathrm{END0}$ is all zeros. Later rows
are obtained by incrementing the previous row by one in LSB-first order:
\begin{equation}
v_k=q_k\oplus c_k,\qquad c_{k+1}=q_k\wedge c_k,
\end{equation}
where $q_k$ is the bit retrieved from position $\tau-L_{\mathrm{out}}$, and
$c_k$ is $1$ for $k=0$ and otherwise the carry bit stored in the previous
generated table token. The emitted token is $T_{v_k,c_{k+1}}$.

If $\mathrm{END0}$ has not appeared, the model is in the decode phase. If the
latest separator is $\#_{\mathrm{in}}^1$ or $\#_{\mathrm{dec}}^1$, then the
preceding row is all zero and the model outputs $\mathrm{END0}$. Otherwise let
$k=\tau-b-1$. If $k=\ell$, the model has completed a decode row; it reads the
zero-so-far flag $z$ from the previous generated decode token and outputs
$\#_{\mathrm{dec}}^z$. If $0\le k<\ell$, the model decrements the previous row
by one in LSB-first order:
\begin{equation}
v_k=q_k\oplus \rho_k,\qquad
\rho_{k+1}=(1-q_k)\wedge\rho_k,
\end{equation}
where $q_k$ is the bit retrieved from position $\tau-L_{\mathrm{in}}$, and
$\rho_k$ is $1$ for $k=0$ and otherwise the borrow bit stored in the previous
generated decode token. The zero-so-far flag is updated as
\begin{equation}
z_{k+1}=
\begin{cases}
\mathbf{1}[v_0=0], & k=0,\\
z_k\wedge \mathbf{1}[v_k=0], & k>0.
\end{cases}
\end{equation}
The emitted token is $D_{v_k,\rho_{k+1},z_{k+1}}$.

All these rules are Boolean functions of constantly many retrieved symbols,
state bits, and equality gates, and hence are implemented exactly by a
constant-size FFN. They generate the desired decode segment, then the universal
table, then padding. In particular, $\mathrm{END0}$ appears at absolute position
\begin{equation}
p_0(i)=(i+1)L_{\mathrm{in}}.
\end{equation}
The entire generated prefix is independent of $s$.

\paragraph{Gating the label-dependent scalar.}
Let
\begin{equation}
\tau_\star:=\Tin+\Tout-1
\end{equation}
be the absolute position of the final generated token, and set
$G_{\mathrm{fin}}:=\mathrm{eq}_{\tau_\star}(\tau)$. Since
$0\le \beta_s\le ML_{\mathrm{out}}$, choose a hard-wired constant
$B>ML_{\mathrm{out}}$. The product of the label-dependent scalar with the
final-step gate is implemented by
\begin{equation}
\beta_s^{\mathrm{gated}}
:=\mathrm{ReLU}\bigl(\beta_s-B(1-G_{\mathrm{fin}})\bigr).
\end{equation}
This equals $0$ before the final step and equals $\beta_s$ at the final step.
The output logits are gated similarly, so the scratchpad rule is used before
the final step and the readout rule below is used at the final step.

\paragraph{Final readout.}
At the final step, the visible prefix contains $\mathrm{END0}$, the full table,
$\mathrm{END1}$, and padding. The readout uses two consecutive attention
sublayers. The first attention sublayer retrieves the unique $\mathrm{END0}$
token and writes its position $v=p_0(i)$ into a work coordinate. The following
FFN computes
\begin{equation}
r=\left(1+\frac{1}{L_{\mathrm{in}}}\right)v+\beta_s^{\mathrm{gated}}.
\end{equation}
At the final step, $\beta_s^{\mathrm{gated}}=\beta_s=sL_{\mathrm{out}}$. Since
$v=p_0(i)=(i+1)L_{\mathrm{in}}$,
\begin{equation}
r=p_0(i)+i+1+sL_{\mathrm{out}}.
\end{equation}
The table starts at position $p_0(i)+1$, and row $s$ starts at
$p_0(i)+1+sL_{\mathrm{out}}$. Thus the position of the $i$-th bit of row $s$ is
\begin{equation}
p^\star=p_0(i)+1+sL_{\mathrm{out}}+i=r.
\end{equation}
The second attention sublayer uses query $q(r)=(2r,1,0)$ and retrieves the token
at $p^\star$. Its visible bit is $\mathrm{bit}_i(s)=y_i$, and the output layer
maps visible bit $0/1$ to the answer token $0/1$.

Therefore, for every labeling $y\in\{0,1\}^m$, choosing
$\beta_s=sL_{\mathrm{out}}$ gives a generator $f_s$ such that, for every
$i\in\{0,\ldots,m-1\}$,
\begin{equation}
f_s^{(\Tout)}(x_i)[-\Tout{:}-1]=R_S(x_i),
\qquad
f_s^{(\Tout)}(x_i)[-1]=y_i.
\end{equation}
Hence $S$ is answer shattered, and
\begin{equation}
\ASdim(\mathcal F;\Sigma^{\Tin},\Tout)\ge |S|=m.
\end{equation}

\paragraph{Size and length.}
The construction uses a constant number of heads, a constant number of layers,
and constant hidden dimension: the only work coordinates are constantly many
positions, Boolean flags, and token-type indicators. Residual connections or
identity sublayers copy these work coordinates between the two readout hops.

Finally,
\begin{equation}
\Tin=L_{\mathrm{in}}=O(\log m)=O(\log\log N),
\end{equation}
and
\begin{equation}
\Tout=mL_{\mathrm{in}}+ML_{\mathrm{out}}+3=\Theta(Mm)=\Theta(N\log N).
\end{equation}
Thus
\begin{equation}
\Ttotal=\Tin+\Tout=\Theta(\Tout)=\Theta(N\log N),
\end{equation}
and so $m=\Theta(\log N)=\Omega(\log \Ttotal)$. This proves the theorem.
\end{proof}

\subsection{CoT Lower Bounds for Transformers (Corollary~\ref{cor:cot-tidim-lower})}\label{app:cot-tidim-lower-proof}

\begin{proof}[Proof of \Cref{cor:cot-tidim-lower}]
We prove the scratchpad amplification lower bound.

\paragraph{Scratchpad amplification.}
Let $C_{\mathrm{amp}}$ be a sufficiently large universal constant and set
\begin{equation}
R:=\left\lfloor \frac{L}{C_{\mathrm{amp}}}\right\rfloor .
\end{equation}
The constant $C_{\mathrm{amp}}$ accounts for one recursive retrieval round, the constant-depth scratchpad transducer of \Cref{thm:cot_shattering}, and the constant-depth final readout. Choose $L_0\ge 2C_{\mathrm{amp}}$; since $L\ge L_0$, we have $R\ge1$.

Choose $m$ as the largest integer such that, with $M:=2^m$,
\begin{equation}
M(m+1)\le \Tout/C
\end{equation}
for a sufficiently large universal constant $C$. Then $m=\Theta(\log\Tout)$ and $M\le\Tout$. Let
\begin{equation}
\ell:=\lceil\log_2 m\rceil,\qquad L_{\mathrm{in}}:=\ell+1,\qquad L_{\mathrm{out}}:=m+1.
\end{equation}

\emph{Alphabet and inputs.}
Let $\Gamma$ be the constant-size decorated alphabet used in the proof of \Cref{thm:cot_shattering}. Enlarge it by identifier and round tokens:
\begin{equation}
\Sigma:=\Gamma\cup\{\mathtt{ID}_1,\ldots,\mathtt{ID}_n\}
\cup\{\mathtt{LYR}_1,\ldots,\mathtt{LYR}_R\}.
\end{equation}
Here $n$ will be chosen as $\Theta(W)$, so $|\Sigma|=O(W+L)$. For each triple $(j,k,i)\in[n]\times[R]\times\{0,\ldots,m-1\}$, define
\begin{equation}
X_{j,k,i}:=
\mathtt{ID}_j\concat \mathtt{LYR}_k\concat
\mathrm{bin}_{\ell}(i)\concat \#_{\mathrm{in}}^{\mathbf 1[i=0]}
\in\Sigma^{\Tin},
\qquad
\Tin:=2+L_{\mathrm{in}}.
\end{equation}
Let $S:=\{X_{j,k,i}\}$ and $\mathcal X:=S$. Then $|S|=nRm$.

\emph{Common prefix.}
For each $X_{j,k,i}$, the generator runs the scratchpad construction of \Cref{thm:cot_shattering} on the suffix $\mathrm{bin}_{\ell}(i)\concat \#_{\mathrm{in}}^{\mathbf 1[i=0]}$ and ignores the two-token header while generating the first $\Tout-1$ tokens. The header only shifts absolute scratchpad positions by
\begin{equation}
H:=2.
\end{equation}
Thus all parameter settings share the same prefix assignment $R_S:S\to\Sigma^{\Tout-1}$: the DECODE segment, the universal TABLE segment, and PAD tokens, exactly as in \Cref{thm:cot_shattering}. This prefix depends on $i$ but not on the labeling.

\emph{Encoding labels.}
Fix any labeling $b\in\{0,1\}^{S}$, written as $b_{j,k,i}$. For each $(j,k)\in[n]\times[R]$, encode the $m$ labels into
\begin{equation}
s_{j,k}:=\sum_{i=0}^{m-1}b_{j,k,i}2^i\in\{0,\ldots,M-1\},
\qquad
\mathrm{bit}_i(s_{j,k})=b_{j,k,i}.
\end{equation}

\emph{Multiplexing row indices.}
The embedding of $\mathtt{ID}_j$ stores the packed scalar
\begin{equation}\label{eq:cot-packed-row-indices}
r_j^{(0)}:=\sum_{u=1}^{R}\frac{s_{j,u}}{(2\Tout)^{u-1}},
\end{equation}
and the embedding of $\mathtt{LYR}_k$ stores $g:=k$. At the final generation step, a constant number of heads copies $(r_j^{(0)},g)$ from the two header tokens into protected work coordinates. These coordinates are ignored before the final step, so they cannot affect $R_S$.

The recursive retrieval then proceeds for $R$ rounds. At the beginning of round $u$, the register has the form
\begin{equation}
r^{(u-1)}
=
\sum_{v=u}^{R}\frac{s_{j,v}}{(2\Tout)^{v-u}}
=s_{j,u}+\delta_u,
\qquad 0\le\delta_u<\frac12 .
\end{equation}
Using quadratic positional features, a hard-attention head with query $q=(2r^{(u-1)},1,0)$ selects the unique visible position $t^\star=s_{j,u}$. The $\mathsf{FFN}$ performs the base-$2\Tout$ left shift
\begin{equation}
r^{(u)}:=2\Tout(r^{(u-1)}-t^\star).
\end{equation}
To keep only the selected digit, use the usual ReLU hat gate $\mathrm{gate}_u(g)=\mathbf 1[g=u]$ on integer $g\in[R]$ and update a protected accumulator by
\begin{equation}
\hat s\leftarrow \hat s+
\Bigl(
\mathrm{ReLU}(t^\star+M_0(\mathrm{gate}_u(g)-1))
-
\mathrm{ReLU}(M_0(\mathrm{gate}_u(g)-1))
\Bigr),
\end{equation}
where $M_0>\Tout$ is fixed. This adds $t^\star$ iff $u=k$ and adds zero otherwise. Hence after $R$ rounds, $\hat s=s_{j,k}$. Set $\beta:=L_{\mathrm{out}}\hat s$.

\emph{Final readout.}
At the final step, the prefix contains the complete scratchpad. Let $v$ be the absolute position of the unique $\mathrm{END0}$ token. Because of the header,
\begin{equation}
v=H+(i+1)L_{\mathrm{in}}.
\end{equation}
A first readout attention sublayer retrieves this token and writes $v$ into a work coordinate. The next $\mathsf{FFN}$ computes
\begin{equation}\label{eq:cot-amplified-target-address}
r:=v+\beta+\frac{v-H}{L_{\mathrm{in}}}
=H+(i+1)L_{\mathrm{in}}+s_{j,k}L_{\mathrm{out}}+(i+1).
\end{equation}
This is exactly the absolute position of the $i$-th bit in row $s_{j,k}$ of the TABLE segment. A second readout attention sublayer retrieves that token, whose visible bit is $\mathrm{bit}_i(s_{j,k})=b_{j,k,i}$. The output layer maps visible bit $0/1$ to two answer tokens.

\emph{Shattering and parameter budget.}
The scratchpad subspace never reads the multiplexing registers, and the output rule for the first $\Tout-1$ generated tokens depends only on the scratchpad subspace. Therefore every labeling produces the same prefix assignment $R_S$, and only the final token depends on $s_{j,k}$. Thus $S$ is answer shattered and
\begin{equation}
\ASdim(\mathcal F_\star;\mathcal X,\Tout)
\ge |S|
=nRm
=\Omega(WL\log\Ttotal),
\end{equation}
where $m=\Theta(\log\Tout)=\Theta(\log\Ttotal)$ because $\Tin=O(\log\log\Tout)$. The labeling-dependent parameters are the $n$ packed ID embeddings, costing $O(n)$ parameters; the round-token embeddings and backbone cost $O(R)\subseteq O(L)$. Choosing $n=\Theta(W)$ and $W\ge C_0L$ gives total parameter count at most $W$.

\paragraph{Sample-complexity lower bound.}
\emph{Scratchpad witness.}
The $\ASdim$ bound above and \Cref{thm:cot_fundamental} give
\begin{equation}
\Omega\left(\frac{WL\log\Ttotal}{\varepsilon}\right)
\end{equation}
samples.

\emph{Supervised ReLU witness via token prompts.}
We use the same fixed-prefix reduction, but realize the supervised inputs as token prompts rather than embedding-valued prompts. By the ReLU subclass from \Cref{lem:relu-subclass-app}, after adjusting constants the Transformer family contains a ReLU-network subclass $\mathcal H$ with
\begin{equation}
\mathrm{VCdim}(\mathcal H)=\Omega(WL\log(W/L)).
\end{equation}
Let $D:=\lfloor c_{\mathrm{R}}WL\log(W/L)\rfloor$ and choose shattered points $a_1,\ldots,a_D$ for $\mathcal H$. The ``sufficiently large $\Tout$'' condition includes $\Tout\ge C_{\mathrm{R}}D$, so the prompt length below is at most a constant fraction of the total context length.

Let $\Sigma_{\mathrm{R}}=\{\mathtt{blank},\mathtt{mark},\mathtt{query},\mathtt{\#},0,1\}$. For each $i\in[D]$, define a token prompt $X_i\in\Sigma_{\mathrm{R}}^{D+1}$ with $\mathtt{mark}$ at position $i$, $\mathtt{query}$ at the last prompt position, and $\mathtt{blank}$ elsewhere. The positional encoding is fixed and non-learned; we set its first coordinates at position $i$ to be the ReLU input $a_i$. Thus the inputs $a_i$ are supplied by the fixed positional encoding and do not consume the $W$ trainable parameters.

At the final generation step, a constant-size attention wrapper retrieves the unique $\mathtt{mark}$ token and copies these positional-encoding coordinates to the current output position. The remaining layers simulate the ReLU network $h\in\mathcal H$ on the copied vector. Before the final step, a positional gate forces the output to be the fixed dummy token $\mathtt{\#}$; at the final step, the dummy branch is disabled and the classifier branch outputs $0$ or $1$. Hence each $h\in\mathcal H$ gives a token-prompt generator $f_h$ satisfying
\begin{equation}
f_h^{(\Tout)}(X_i)[-\Tout{:}-1]=\mathtt{\#}^{\Tout-1},
\qquad
f_h^{(\Tout)}(X_i)[-1]=h(a_i).
\end{equation}
The prefix is independent of $h$, so any CoT learner for this token-prompt subclass is also a supervised learner for $\mathcal H$: given labeled examples $(a_i,h(a_i))$, feed the learner the CoT examples $(X_i,\mathtt{\#}^{\Tout-1}\concat h(a_i))$ and read off its final-token predictor. The VC lower bound in \Cref{thm:fundamental-vc} gives
\begin{equation}
\Omega\left(\frac{WL\log(W/L)}{\varepsilon}\right)
\end{equation}
samples.

Taking the harder of these two witnesses,
\begin{equation}
\max\left\{
\Omega\left(\frac{WL\log\Ttotal}{\varepsilon}\right),
\Omega\left(\frac{WL\log(W/L)}{\varepsilon}\right)
\right\}
=
\Omega\left(\frac{WL\log(\Ttotal W/L)}{\varepsilon}\right),
\end{equation}
after adjusting constants.
\end{proof}